\def\eqref#1{equation~\ref{#1}}
\def\1{\bm{1}}
\DeclareMathAlphabet{\mathsfit}{\encodingdefault}{\sfdefault}{m}{sl}
\SetMathAlphabet{\mathsfit}{bold}{\encodingdefault}{\sfdefault}{bx}{n}
\newcommand{\E}{\mathbb{E}}
\DeclareMathOperator*{\argmin}{arg\,min}
\crefname{section}{\S}{\S}
\Crefname{section}{\S}{\S}
\crefname{appendix}{App.}{Apps.}
\Crefname{appendix}{App.}{Apps.}
\crefname{theorem}{Thm.}{Thms.}
\Crefname{theorem}{Thm.}{Thms.}
\crefname{proposition}{Prop.}{Props.}
\Crefname{proposition}{Prop.}{Props.}
\crefname{assumption}{Assumption}{Assumptions}
\Crefname{assumption}{Assumption}{Assumptions}
\crefname{algorithm}{Algo.}{Algos.}
\Crefname{algorithm}{Algo.}{Algos.}
\newcommand{\Mbsgd}{{SGD}\xspace}
\newcommand{\asg}{{ASG}\xspace}
\newcommand{\saga}{{SAGA}\xspace}
\newcommand{\ssnm}{{SSNM}\xspace}
\newcommand{\mbsgd}{{SGD}\xspace}
\newcommand{\Lsgd}{{FedAvg}\xspace}
\newcommand{\lsgd}{{FedAvg}\xspace}
\newcommand{\ahead}{\mathcal{A}_{\text{local}}}
\newcommand{\atail}{\mathcal{A}_{\text{global}}}
\newcommand{\chain}[2]{#1 $\to$ #2}
\newcommand{\localm}{local-update method}
\newcommand{\globalm}{global-update method}
\newcommand{\localms}{local update methods}
\newcommand{\globalms}{global update methods}
\newcommand{\Localms}{Local update methods}
\newcommand{\order}{\tilde{\mathcal{O}}}
\newcommand{\samp}{\mathcal{S}}
\newcommand{\fedchain}{{FedChain}\xspace}
\newcommand{\Chaineds}{\fedchain}
\newcommand{\chaineds}{\fedchain}
\newcommand\pcref[1]{(\cref{#1})}
\newcommand{\x}[1]{x^{(#1)}}
\newcommand{\myparatightest}[1]{\noindent\textbf{{#1.}}~}
\theoremstyle{plain}
\newtheorem{theorem}{Theorem}[section]
\newtheorem{lemma}[theorem]{Lemma}
\newtheorem{corollary}[theorem]{Corollary}
\theoremstyle{definition}
\newtheorem{definition}[theorem]{Definition}
\newtheorem{assumption}[theorem]{Assumption}
\theoremstyle{remark}
\title{\fedchain: Chained Algorithms for \\ Near-Optimal Communication Cost \\ in Federated Learning}
\author{Charlie Hou \\
Department of Electrical and Computer Engineering \\
Carnegie Mellon University \\
Pittsburgh, Pennsylvania, USA \\
\texttt{charlieh@andrew.cmu.edu} \\
\And
Kiran K. Thekumparampil \\
Department of Electrical and Computer Engineering \\
University of Illinois at Urbana-Champaign \\
Champaign, Illinois, USA \\
\texttt{thekump2@illinois.edu} \\
\AND
Giulia Fanti \\
Department of Electrical and Computer Engineering \\
Carnegie Mellon University \\
Pittsburgh, Pennsylvania, USA \\
\texttt{gfanti@andrew.cmu.edu} \\
\And
Sewoong Oh \\
Allen School of Computer Science and Engineering \\
University of Washington \\
Seattle, Washington, USA \\
\texttt{sewoong@cs.washington.edu} \\}
\begin{document}
\maketitle

\begin{abstract} 
   Federated learning (FL) aims to minimize the communication complexity of training a model over heterogeneous data distributed across many clients. A common approach is \localms, where clients take multiple optimization steps over local data before communicating with the server (e.g., \lsgd).  \Localms~can exploit similarity between clients' data. However, in existing analyses, this comes at the cost of slow convergence in terms of the dependence on the number of communication rounds $R$.  On the other hand, \globalms, where clients simply return a gradient vector in each round (e.g., \mbsgd), converge faster in terms of  $R$ but fail to exploit the similarity between clients even when clients are homogeneous.    
   We propose \fedchain, an algorithmic framework that combines the strengths of \localms~and \globalms~to achieve fast convergence in terms of $R$ while  leveraging the similarity between clients.  Using \fedchain, we instantiate algorithms that improve upon previously known rates in the general convex and PL settings, and are near-optimal (via an algorithm-independent lower bound that we show) for problems that satisfy strong convexity.  Empirical results support this theoretical gain over existing methods. 
\end{abstract} 


\section{Introduction}
In federated learning (FL) \citep{mcmahan2017communication,kairouz2019advances,li2020federated}, 
distributed clients interact with a central server to jointly train a single model without directly sharing their data with the server.  
The training objective is to solve the following minimization problem: 
\begin{align}
    \label{eq:objective}
     \min_x \Big[ F(x) =\frac{1}{N} \sum_{i=1}^N F_i(x) \Big]
\end{align}
where variable $x$ is the model parameter, $i$ indexes the clients (or devices), $N$ is the number of clients, and $F_i(x)$ is a loss that only depends on that client's data.  
Typical FL deployments have two  properties that make optimizing \cref{eq:objective} challenging:
($i$) \emph{Data heterogeneity:} 
We want to minimize the average of the expected losses  $F_i(x) = \mathbb{E}_{z_i\sim \mathcal{D}_i} [f(x;z_i)]$
for some loss function $f$ evaluated on client data $z_i$ drawn from a client-specific distribution $\mathcal{D}_i$. 
The convergence rate of the optimization depends on the heterogeneity of the local data distributions, $\{\mathcal{D}_i\}_{i=1}^N$,  
and this dependence is captured by a popular notion of client heterogeneity, 
 $\zeta^2$, defined as follows:
\begin{align}
    \label{def:heterogeneity}
    \zeta^2 := \max_{i \in [N]} \sup_{x} \|\nabla F(x) - \nabla F_i(x) \|^2\;. 
\end{align}
This  captures the maximum difference between a local gradient and the global gradient, and is a standard measure of heterogeneity used in the literature  \citep{woodworth2020minibatch,gorbunov2020local,deng2020adaptive, woodworth2020minibatch,gorbunov2020local,yuan2020federated,deng2021distributionally,deng2021local}.
($ii$) \emph{Communication cost:} In many FL deployments, communication is costly because clients have limited bandwidths. 
Due to these two challenges, most federated optimization algorithms alternate between local rounds of computation, where each client locally processes only their own data to save communication, and global rounds, where clients synchronize with the central server to resolve disagreements due to heterogeneity in the locally updated models. 



Several federated algorithms navigate this trade-off between reducing communication and resolving data heterogeneity by modifying the amount and nature of \emph{local and global computation} \citep{mcmahan2017communication, li2018federated, wang2019slowmo, wang2019matcha, li2019feddane, karimireddy2020scaffold, karimireddy2020mime,al2020federated,reddi2020adaptive,charles2020outsized, fedlin, woodworth2020minibatch}.  These first-order federated optimization algorithms largely fall into one of two camps: ($i$) clients in {\em \localms} send updated models after performing multiple steps of model updates, and ($ii$) clients in {\em \globalms} send gradients and do not  perform any model updates locally.  Examples of ($i$) include \lsgd \citep{mcmahan2017communication}, SCAFFOLD \citep{karimireddy2020scaffold}, and FedProx \citep{li2018federated}.  Examples of ($ii$) include \mbsgd and Accelerated \mbsgd (\asg), where in each round $r$ the server collects from the clients gradients evaluated on local data at the current  iterate $\x{r}$  and then performs a (possibly Nesterov-accelerated) model update.  

As an illustrating example, consider the scenario when $F_i$'s are $\mu$-strongly convex and $\beta$-smooth such that the condition number is  $\kappa = {\beta}/{\mu}$ as summarized in \cref{tab:strongconvexrates}, and also assume for simplicity that all clients participate in each communication round (i.e.,~full participation). Existing convergence analyses show that \localms~have a favorable dependence on the heterogeneity $\zeta$. For example, \citet{woodworth2020minibatch} show that \lsgd achieves an error bound of $\tilde {\mathcal O}((\kappa \zeta^2/\mu)R^{-2})$ after $R$ rounds of communication. Hence FedAvg achieves theoretically faster convergence for smaller heterogeneity levels $\zeta$ by using local updates. 
However, this favorable dependency on $\zeta$ comes at the cost of slow convergence in $R$. 
On the other hand,  \globalms~converge faster in $R$, with error rate decaying as $\tilde {\mathcal O}( \Delta \exp(-R/\sqrt{\kappa}) )$ (for the case of \asg), where 
$\Delta$ is the initial function value gap to the (unique) optimal solution.  
This is an exponentially faster rate in $R$, but it does not take advantage of small heterogeneity even when client data is fully homogeneous. 

Our main contribution is to design a novel family of algorithms that combines the strengths of local and global methods to achieve a faster convergence while maintaining the favorable dependence on the heterogeneity, thus achieving an error rate of $\tilde {\mathcal O}( (\zeta^2/\mu)\exp(-R/\sqrt{\kappa}))$ in the strongly convex scenario.  By adaptively switching between this novel algorithm and the existing best global method, we can achieve an error rate of $\tilde {\mathcal O}(  \min\{\Delta, \zeta^2/\mu\} \exp(-R/\sqrt{\kappa}))$. We further show that this is near-optimal by providing a matching lower bound in \cref{thm:lowerbound}. This lower bound tightens an existing one from \citep{woodworth2020minibatch} by considering a smaller class of algorithms that includes those presented in this paper.

We propose \fedchain, a unifying \emph{chaining framework} for federated optimization that enjoys the benefits of both \localms~and \globalms. 
For a given total number of rounds $R$, \fedchain first uses a \localm~for a constant fraction of rounds and then switches to a \globalm~for the remaining rounds. The first phase exploits client homogeneity when possible, providing fast convergence when heterogeneity is small.  
The second phase uses the unbiased stochastic gradients of \globalms~to achieve a faster convergence rate in the heterogeneous setting. 
The second phase inherits the benefits of the first phase through the iterate output by the \localm.
An \emph{instantiation} of \fedchain consists of a combination of a specific \localm~and \globalm~(e.g., FedAvg $\to$ SGD).

\medskip
\noindent
{\bf Contributions.}
 We propose the \fedchain  framework and analyze various instantiations under strongly convex, general convex, and nonconvex objectives that satisfy the PL condition.\footnote{$F$ satisfies the $\mu$-PL condition if $2\mu (F(x) - F(x^*)) \leq \|\nabla F(x)\|^2$.} Achievable rates are summarized in \cref{tab:strongconvexrates,,tab:generalconvexrates,,tab:plrates}. 
In the strongly convex setting, these rates nearly match the algorithm-independent lower bounds we introduce in Theorem~\ref{thm:lowerbound}, which shows the near-optimality of \fedchain.  
For strongly convex functions, chaining is optimal up to a factor that decays exponentially in the condition number $\kappa$. For convex functions, it is optimal in the high-heterogeneity regime, when $\zeta > \beta D R^{1/2}$. For nonconvex PL functions, it is optimal for constant condition number $\kappa$.  In all three settings, \fedchain instantiations \textit{improve} upon previously known worst-case rates in certain regimes of $\zeta$.
We further demonstrate the empirical gains of our chaining framework in the convex case (logistic regression on MNIST) and in the nonconvex case (ConvNet classification on EMNIST \citep{cohen2017emnist} and ResNet-18 classification on CIFAR-100 \citep{Krizhevsky09learningmultiple}).



\subsection{Related Work}
The convergence of \lsgd in convex optimization has been the subject of much interest in the machine learning community, particularly as federated learning \citep{kairouz2019advances} has increased in popularity.  The convergence of FedAvg for convex minimization was first studied in the homogeneous client setting \citep{stich2018local,wang2018cooperative,woodworth2020local}.  These rates were later extended to the heterogeneous client setting \citep{khaled2020tighter,karimireddy2020scaffold,woodworth2020minibatch,koloskova2020unified}, including a lower bound \citep{woodworth2020minibatch}.  The only current known analysis for \lsgd that shows that \lsgd can improve on \mbsgd/\asg in the heterogeneous data case is that of \citet{woodworth2020minibatch}, which has been used to prove slightly tighter bounds in subsequent work \cite{gorbunov2020local}.

Many new federated optimization algorithms have been proposed to improve on \lsgd and \mbsgd/\asg, such as SCAFFOLD \citep{karimireddy2020scaffold}, S-Local-SVRG \citep{gorbunov2020local}, FedAdam \citep{reddi2020adaptive}, FedLin \citep{fedlin}, FedProx \citep{li2018federated}.  However, under full participation (where all clients participate in a communication round; otherwise the setting is partial participation) existing analyses for these algorithms have not demonstrated any improvement over \mbsgd (under partial participation, \saga \citep{defazio2014saga}).  In the smooth nonconvex full participation setting, MimeMVR \citep{karimireddy2020mime} was recently proposed, which improves over \mbsgd in the low-heterogeneity setting.  Currently MimeMVR and \mbsgd are the two best algorithms for worst-case smooth nonconvex optimization with respect to communication efficiency, depending on client heterogeneity.  In partial participation, MimeMVR and \saga are the two best algorithms for worst-case smooth nonconvex optimization, depending on client heterogeneity.

\citet{lin2018don} proposed a scheme, post-local \mbsgd, opposite ours by switching from a \globalm~to a \localm~(as opposed to our proposal of switching from \localm~to \globalm).  They evaluate the setting where $\zeta=0$.  The authors found that while post-local \mbsgd has a training accuracy worse than \mbsgd, the test accuracy can be better (an improvement of 1\% test accuracy on ResNet-20 CIFAR-10 classification), though theoretical analysis was not provided.
\cite{wang2018adaptive} decrease the number ($K$) of local updates to transition from FedAvg to \Mbsgd in the homogeneous setting. This is conceptually similar to \fedchain, but they do not show order-wise convergence rate improvements. Indeed, in the homogeneous setting, a simple variant of \asg achieves the optimal worst-case convergence rate \citep{woodworth2021min}.

\begin{algorithm}[tb]
    \caption{Federated Chaining (\fedchain)}
    \label{algo:chaining}
    \begin{algorithmic}
        \STATE {\bfseries Input:} $\ahead$ local update algorithm, $\atail$ centralized algorithm, initial point $\hat{x}_0$, $K$, rounds $R$
        \STATE  $\triangleright$ \underline{\textbf{Run} $\ahead$ for $R/2$ rounds}
        \STATE $\hat{x}_{1/2} \gets \ahead(\hat{x}_0)$
        \STATE  $\triangleright$ \underline{\textbf{Choose the better point between} $\hat{x}_0$ and $\hat{x}_{1/2}$}
        \STATE Sample $S$ clients $\mathcal{S} \subseteq [N]$
        \STATE Draw $\hat{z}_{i,k} \sim \mathcal{D}_i$, $i \in \mathcal{S}$, $k \in \{0, \dots, K-1\}$
        \STATE $\hat{x}_1 \gets \argmin_{x \in \{\hat{x}_{0}, \hat{x}_{1/2}\}} \frac{1}{S {K}} \sum_{i \in \mathcal{S}} \sum_{k=0}^{{K}-1} f(x; \hat{z}_{i,k})$
        \STATE  $\triangleright$ \underline{\textbf{Finish convergence with} $\atail$  for $R/2$ rounds}
        \STATE $\hat{x}_2 \gets \atail(\hat{x}_1)$
        \STATE Return $\hat{x}_2$
    \end{algorithmic}
\end{algorithm}
\section{Setting}
\label{sec:setting}
Federated optimization proceeds in rounds. 
We consider the partial participation setting, where at the beginning of each round, $S\in{\mathbb Z}_+$ out of total $N$ clients are sampled uniformly at random without replacement. 
Between each global communication round, the sampled clients each access either (i) their own stochastic gradient oracle $K$ times, 
update their local models, and return the updated model, or (ii) their own stochastic function value oracle $K$ times and return the average value to the server. 
The server aggregates the received information and performs a model update.  
For each baseline optimization algorithm, we analyze the 
sub-optimality error after $R$ rounds of communication between the clients and the server. 
Suboptimality is measured in terms of the function value
$\E F(\hat{x}) - F(x^*)$, where $\hat{x}$ is the solution estimate after $R$ rounds and $x^* = \argmin_x F(x)$ is a (possibly non-unique) optimum of $F$.
We let the estimate for the initial suboptimality gap be $\Delta$ \pcref{asm:subopt}, and the initial distance to a (not necessarily unique) optimum be $D$ \pcref{asm:distance}.  If applicable, $\epsilon$ is the target expected function value suboptimality.

 We study three settings: strongly convex $F_i$'s, convex $F_i$'s and $\mu$-PL $F$; 
 for formal definitions, refer to App.~\ref{app:definitions}. 
Throughout this paper, we assume that the $F_i$'s are $\beta$-smooth \pcref{asm:smooth}.  If $F_i$'s are $\mu$-strongly convex \pcref{asm:strongconvex} or $F$ is $\mu$-PL \pcref{asm:pl}, we denote $\kappa = {\beta}/{\mu}$ as the condition number.  
$\mathcal{D}_i$ is the data distribution of client $i$.  
We define the heterogeneity of the problem as $\zeta^2 := \max_{i\in[N]} \sup_{x} \|\nabla F(x) - \nabla F_i(x)\|^2$ in \cref{asm:grad_het}.  
We assume unless otherwise specified that $\zeta^2 > 0$, i.e., that the problem is heterogeneous.  We assume that the client gradient variance is upper bounded by $\sigma^2$ \pcref{asm:uniform_variance}.  We also define the analogous quantities for function value oracle queries: $\zeta^2_F$ \cref{asm:func-het}, $\sigma^2_F$ \cref{asm:cost_variance}.  We use the notation $\tilde{\mathcal{O}}, \tilde{\Omega}$ to hide polylogarithmic factors, and $\mathcal{O}, \Omega$ if we are only hiding constant factors.

\section{Federated Chaining (\fedchain) Framework }
\label{sec:framework}
\begin{wrapfigure}{r}{0.5\linewidth}
	\includegraphics[width=\linewidth]{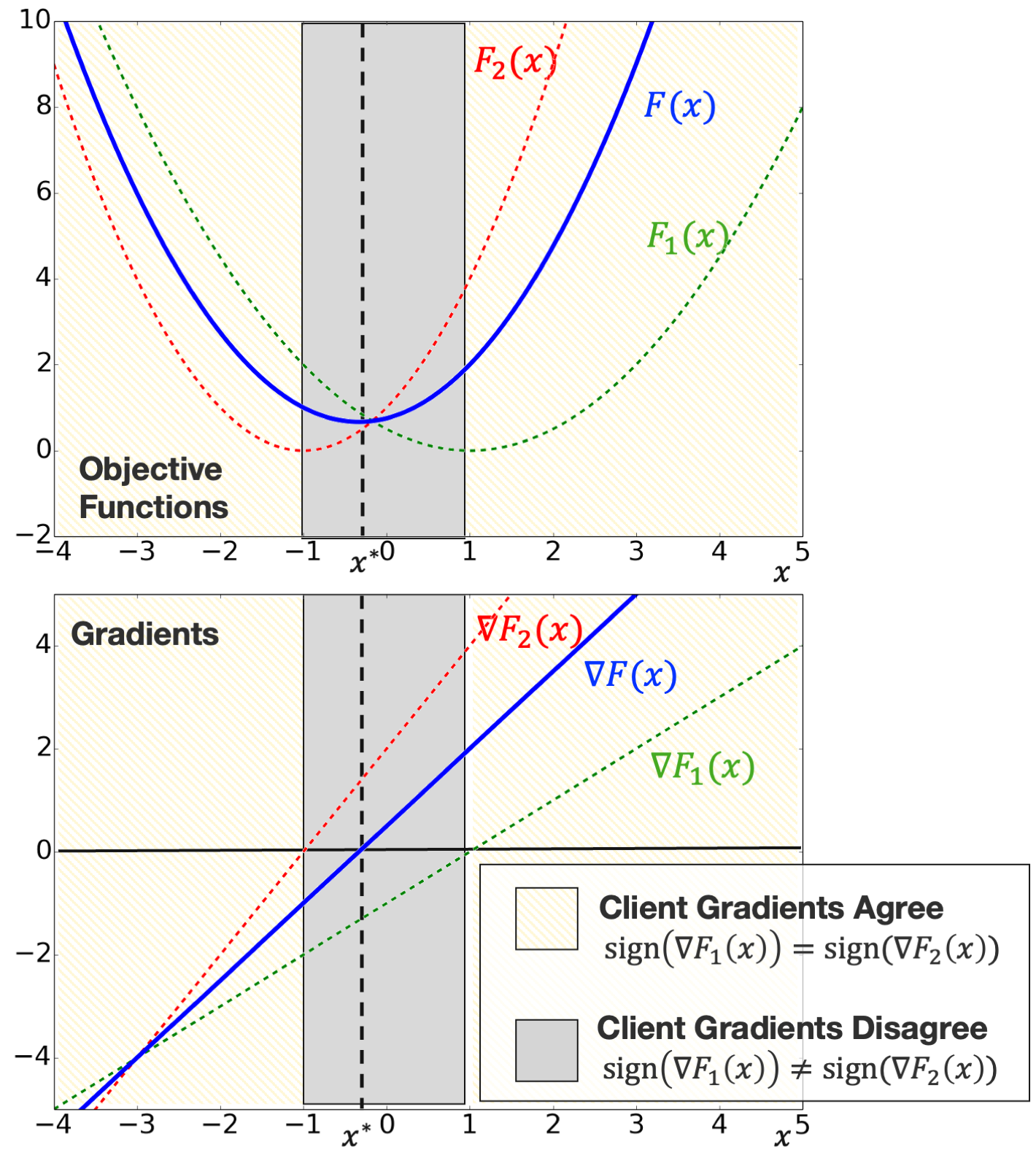}
	\centering
	\caption{A toy example illustrating \fedchain. We have two client objectives: $F_1(x)$ and $F_2(x)$.  $F(x)$ is their average. The top plot displays the objectives and the bottom plot displays the gradients. In regions where client gradients agree in direction, i.e.~$(-\infty, -1] \cup [1, \infty)$ it may be better to use an algorithm with local steps (like FedAvg), and in the region where the gradient disagree in direction, i.e.~$(-1,1)$ it may be better to use an algorithm without local steps (like \Mbsgd). 
	}
	\label{fig:intuition}
	\vspace{-0.8cm}
\end{wrapfigure}
We start with a toy example (\cref{fig:intuition}) to illustrate   \fedchain.
Consider two strongly convex client objectives: 
$F_1(x) = ({1}/{2}) (x - 1)^2$ and $F_2(x) = (x + 1)^2$. The global objective  $F(x) = ({F_1(x)+F_2(x)})/{2}$ is their average. 
\cref{fig:intuition} (top) plots the objectives, and \cref{fig:intuition} (bottom) displays their gradients. 
Far from the optimum, due to strong convexity, all client gradients point towards the optimal solution; 
specifically, this occurs when ~$x\in (-\infty, -1] \cup [1, \infty)$. 
In this regime, clients can use local steps (e.g., FedAvg) to reduce communication without sacrificing the consistency of per-client local updates.  
On the other hand, close to the optimum, i.e., when~$x\in (-1,1)$,  some client gradients may point away from the optimum. 
This suggests that clients should \emph{not} take local steps to avoid driving the global estimate away from the optimum.  Therefore, 
when we are close to the optimum,
we use an algorithm without local steps (e.g.~\Mbsgd), which is less affected by client gradient disagreement.

This intuition seems to carry over to the nonconvex setting: \citet{charles2021large} show that over the course of FedAvg execution on neural network StackOverflow next word prediction, client gradients become more orthogonal.


\noindent
\textbf{\fedchain:}
To exploit the strengths of both local and global update methods, we propose the federated chaining (\fedchain) framework in Alg.~\ref{algo:chaining}. 
There are three steps: 
(1) Run a \localm~$\ahead$, like FedAvg.
(2) Choose the better point between  the output of $\ahead$ (which we denote $\hat x_{1/2}$), and the initial point $\hat{x}_0$ to initialize
(3) a \globalm~$\atail$ like \mbsgd.
Note that when heterogeneity is large, $\ahead$ can actually output an iterate with \emph{higher} suboptimality gap than $\hat x_0$.
Hence, selecting the point (between $\hat x_0$ and $\hat x_{1/2}$) with a smaller $F$ allows us to adapt to the problem's heterogeneity, and initialize $\atail$ appropriately to achieve good convergence rates.
To compare $\hat x_{1/2}$ and the initial point $\hat x_0$, we approximate $F(\hat x_0)$ and $F(\hat x_{1/2})$ by averaging; we compute $\frac{1}{S {K}} \sum_{i \in \mathcal{S}, k\in [K]} f(x; \hat{z}_{i,k})$ for $\hat x_{1/2}, \hat x_0$, where $K$ is also the number of local steps per client per round in $\ahead$, and $\mathcal{S}$ is a sample of $S$ clients. 
In practice, one might consider adaptively selecting how many rounds of $\ahead$ to run, which can potentially improve the convergence by a constant factor. Our experiments in \cref{sec:largek} show significant improvement when using only 1 round of $\ahead$ with a large enough $K$ for convex losses.

\begin{table*}[t]
	\centering
	\renewcommand{\arraystretch}{1.4}
	\caption{ Rates for the strongly convex case.  $^\clubsuit$ Rate requires $R \geq {N}/{S}$. $^\spadesuit$ Rate requires $S = N$.
	}
	\begin{tabular}{ l l}
	\hline 
	\textbf{Method/Analysis}  & \textbf{$\E F(\hat{x}) - F(x^*)\leq \tilde{\mathcal{O}}(\cdot)$ } 
	\\ 
	\hline
	\textit{ Centralized Algorithms} \\
	\hline
	\begin{tabular}{l} 
		\Mbsgd \\
		\asg  \\
	\end{tabular}
	& \begin{tabular}{l}
		$\Delta \exp(-\kappa^{-1} R) + (1 - \frac{S}{N})\frac{\zeta^2}{\mu S R}$ 
		\\
		$\Delta \exp(-\kappa^{-\frac{1}{2}} R) + (1 - \frac{S}{N})\frac{\zeta^2}{\mu S R}$ 
		\\
	\end{tabular}
	\\\hline
	\textit{ Federated Algorithms} \\
	\hline
	\begin{tabular}{l}
		\Lsgd { \small\citep{karimireddy2020scaffold}}\\
		\lsgd { \small\citep{woodworth2020minibatch}}\\
		SCAFFOLD { \small\citep{karimireddy2020scaffold}}\\ 
	\end{tabular}
	&
	\begin{tabular}{l}
		$ \Delta \exp(-\kappa^{-1} R) + \kappa (\frac{\zeta^2}{\mu})  R^{-2}$ 
		\\
		$\kappa (\frac{\zeta^2}{\mu})  R^{-2}$$^\spadesuit$\\
		$\Delta \exp(- \min \{\kappa^{-1}, \frac{S}{N}\} R)$$^\clubsuit$ 
		\\
	\end{tabular}
	\\\hline 
	\textit{ This paper} \\
	\hline
    \begin{tabular}{l}
        \lsgd $\to$ \mbsgd \pcref{thm:fedavg-sgd-informal} \\
        \Lsgd $\to$ \saga \pcref{thm:fedavg-saga-informal}\\
        \lsgd $\to$ \asg \pcref{thm:fedavg-asg-informal} \\
        \Lsgd $\to$ \ssnm \pcref{thm:fedavg-ssnm-informal}\\
        Algo.-independent LB { \small\pcref{thm:lowerbound}}\\
    \end{tabular}
    &
    \begin{tabular}{l}
        $\min \{ \Delta, \frac{\zeta^2}{\mu} \} \exp(-\kappa^{-1} R) + (1 - \frac{S}{N})\frac{\zeta^2}{\mu S R}$ 
		\\
        $\min \{ \Delta, \frac{\zeta^2}{\mu} \} \exp(-\min\{\kappa^{-1}, \frac{S}{N}\} R)$$^\clubsuit$ 
		\\
        $\min \{ \Delta, \frac{\zeta^2}{\mu} \} \exp(-\kappa^{-\frac{1}{2}} R) + (1 - \frac{S}{N})\frac{\zeta^2}{\mu S R}$ \\
		$\kappa\min \{ \Delta, \frac{\zeta^2}{\mu}\}\exp(- \min \{ \sqrt{\frac{S}{N \kappa}}, \frac{S}{N}\} R)$$^\clubsuit$ 
		\\
		$\min \{\Delta, \kappa^{-\frac{3}{2}}(\frac{\zeta^2}{\beta})\} \exp(-\kappa^{-\frac{1}{2}} R)$ \\
    \end{tabular}
    \\\hline 
\end{tabular}
\label{tab:strongconvexrates}
\end{table*}

\section{Convergence of \Chaineds}
We first analyze \chaineds (\cref{algo:chaining}) when $\ahead$ is \lsgd and $\atail$ is (Nesterov accelerated) \mbsgd.  
The first theorem is without Nesterov acceleration and the second with Nesterov acceleration.
\begin{theorem}[\textbf{\chain{\lsgd}{\mbsgd}}]
    \label{thm:fedavg-sgd-informal}
    Suppose that client objectives $F_i$'s and their gradient queries satisfy \cref{asm:smooth,,asm:uniform_variance,,asm:cost_variance,,asm:grad_het,,asm:func-het}.  Then running \fedchain (\cref{algo:chaining}) with $\ahead$ as \lsgd (\cref{algo:fedavg}) with the parameter choices of \cref{thm:fedavg-formal}, and $\atail$ as \mbsgd (\cref{algo:mbsgd}) with the parameter choices of \cref{thm:sgd-formal}, we get the following rates \footnote{We ignore variance terms and $\zeta_F^2 = \max_{i \in [N]} \sup_x (F_i(x) - F(x))^2$ as the former can be made negligible by increasing $K$ and the latter can be made zero by running the better of \chain{\lsgd}{\mbsgd} and \mbsgd instead of choosing the better of $\hat{x}_{1/2}$ and $\hat{x}_0$ as in \cref{algo:chaining}.  Furthermore, $\zeta_F^2$ terms are similar to the $\zeta^2$ terms.  The rest of the theorems in the main paper will also be stated this way.  To see the formal statements, see \cref{sec:chained-formal}.}: 
    \begin{itemize}
        \item \textbf{Strongly convex:} If $F_i$'s satisfy \cref{asm:strongconvex} for some $\mu > 0$ then there exists a finite $K$ above which we get,  
$\E F(\hat{x}_2) - F(x^*) \leq \tilde{\mathcal{O}} (\min \{\Delta, {\zeta^2}/{ \mu}  \} \exp (-{R}/{\kappa} ) + (1 - {S}/{N}) {\zeta^2}/({\mu SR}))$\;. 

        \item \textbf{General convex:} If $F_i$'s satisfy \cref{asm:convex}, then there exists a finite $K$ above which we get the rate $\E F(\hat{x}_2) - F(x^*) \leq \order(\min\{\beta D^2/R, \sqrt{\beta \zeta D^{3}}/\sqrt{R} \} + (\sqrt[4]{1 - S/N})(\sqrt{\beta \zeta D^{3}}/\sqrt[4]{SR}))$\;.
        \item \textbf{PL condition:} If $F_i$'s satisfy \cref{asm:pl} for $\mu > 0$, then there exists a finite $K$ above which we get the rate the rate $\E F(\hat{x}_2) - F(x^*) \leq \tilde{\mathcal{O}}(\min \{\Delta, \zeta^2/ \mu \} \exp(-R/\kappa) + (1 - S/N) (\kappa \zeta^2/\mu SR))$\;.
    \end{itemize}
\end{theorem}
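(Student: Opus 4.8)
The plan is to follow the three stages of \cref{algo:chaining} and track how the suboptimality gap propagates through each. Writing $\Delta_{1/2} := \E F(\hat{x}_{1/2}) - F(x^*)$, $\Delta_1 := \E F(\hat{x}_1) - F(x^*)$, and $\Delta_2 := \E F(\hat{x}_2) - F(x^*)$, the key quantity is $\Delta_1$: once I show $\Delta_1 \leq \min\{\Delta, \tilde{\mathcal{O}}(\zeta^2/\mu)\}$ up to variance terms controllable by $K$, the final bound follows by feeding $\hat{x}_1$ as the initialization of $\atail$. I would organize the argument so that the same template --- (i) bound $\Delta_{1/2}$ via the $\ahead$ guarantee, (ii) bound $\Delta_1$ via the selection step, (iii) bound $\Delta_2$ via the $\atail$ guarantee --- applies verbatim to all three settings, only swapping in the appropriate convergence theorem for each.

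For stage (i), I would invoke the \lsgd guarantee (\cref{thm:fedavg-formal}) run for $R/2$ rounds. In the strongly convex case this yields a bound of the form $\Delta \exp(-R/(2\kappa))$ plus a heterogeneity floor of order $\zeta^2/\mu$ plus stochastic-gradient-variance terms that decay in $K$. The crucial observation is that for $K$ above a finite threshold the variance terms drop below the floor, so $\Delta_{1/2} \leq \tilde{\mathcal{O}}(\zeta^2/\mu)$; combined with the trivial comparison against $\Delta$ this is what seeds the $\min\{\Delta,\zeta^2/\mu\}$ structure. The general-convex and PL cases are identical in spirit, using the corresponding rows of the $\ahead$ guarantee, which give a $D$-based floor and a $\kappa$-inflated $\zeta^2/\mu$ floor respectively.

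Stage (ii), the selection of $\hat{x}_1$, is where I expect the main difficulty. The algorithm compares noisy empirical averages $g(x) = \frac{1}{SK}\sum_{i\in\mathcal{S}}\sum_k f(x;\hat{z}_{i,k})$ rather than the true $F$, and the estimate at $\hat{x}_{1/2}$ is correlated with stage (i). Using that the selection samples $\hat{z}_{i,k}$ are drawn \emph{fresh}, so that $g$ is conditionally unbiased given $\hat{x}_{1/2}$, I would decompose $F(\hat{x}_1)-F(x^*) = [F(\hat{x}_1)-g(\hat{x}_1)] + [g(\hat{x}_1)-g(\hat{x}_j)] + [g(\hat{x}_j)-F(\hat{x}_j)] + [F(\hat{x}_j)-F(x^*)]$ for either choice $j\in\{0,1/2\}$; the middle term is nonpositive by the $\argmin$, so taking the better $j$ gives $\Delta_1 \leq \min\{\Delta,\Delta_{1/2}\}$ plus two estimation errors. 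Bounding these errors in expectation by the standard deviation of $g$, of order $\sqrt{(\sigma_F^2+\zeta_F^2)/(SK)}$ where the $\zeta_F^2$ piece is the extra client-sampling variance under partial participation, shows they too fall below the dominant terms once $K$ is large. This is exactly the step the footnote defers, and making the ``finite $K$'' threshold quantitative is the technical heart of the proof.

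Finally, for stage (iii) I would apply the \mbsgd guarantee (\cref{thm:sgd-formal}) from initialization $\hat{x}_1$. In the strongly convex case this contracts the initial gap $\Delta_1$ by $\exp(-R/(2\kappa))$ and adds the partial-participation term $(1-S/N)\zeta^2/(\mu S R)$; substituting $\Delta_1 \leq \min\{\Delta,\tilde{\mathcal{O}}(\zeta^2/\mu)\}$ yields the stated bound. The general-convex and PL rows follow by the same substitution into the convex and PL \mbsgd rates, where in the convex case the $\min\{\beta D^2/R, \sqrt{\beta\zeta D^3}/\sqrt{R}\}$ reflects whether stage (i) usefully shrank the distance to the optimum before \mbsgd took over, and in the PL case the extra $\kappa$ in the floor comes from the looser PL descent lemma. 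The only genuinely new ingredient beyond citing \cref{thm:fedavg-formal} and \cref{thm:sgd-formal} is the selection analysis of stage (ii).
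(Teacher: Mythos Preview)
Your three-stage template matches the paper's proof exactly, and the strongly convex and PL cases go through essentially as you describe; the paper packages your stage~(ii) argument into a separate lemma (\cref{lemma:choosefunc}) using a Chebyshev argument rather than your direct standard-deviation bound, but either route works.

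Two places need correction. First, in stage~(ii) you claim the estimation error is of order $\sqrt{(\sigma_F^2 + \zeta_F^2)/(SK)}$ and hence vanishes with $K$. The $\sigma_F$ piece does, but the client-sampling piece does not: all $K$ function evaluations at a given point come from the \emph{same} $S$ sampled clients, so the $\zeta_F$ contribution is $\sqrt{1-S/N}\,\zeta_F/\sqrt{S}$ with no $K$ in the denominator. The paper retains this term in the formal statement (\cref{thm:fedavg-sgd-formal}) and, per the footnote, removes it in the informal version not by increasing $K$ but by replacing the $\{\hat{x}_0,\hat{x}_{1/2}\}$ selection with taking the better of the entire chained algorithm and plain \mbsgd run end-to-end.

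Second, the general convex case is not ``the same substitution.'' The convex \mbsgd guarantee in \cref{thm:sgd-formal} is on $\E\|\nabla F(\hat{x})\|^2$ with leading term $\beta\Delta/R$, not a distance-based function-value bound, so ``shrank the distance to the optimum'' is not how the reduced $\Delta_1$ enters. The paper plugs $\Delta_1 \le \tilde{\mathcal O}(\min\{\zeta D,\Delta\})$ into the gradient-norm bound and then converts via convexity, $\E F(\hat{x}_2)-F(x^*)\le \sqrt{\E\|\nabla F(\hat{x}_2)\|^2}\,\sqrt{\E\|\hat{x}_2-x^*\|^2}$, which requires separately verifying $\E\|\hat x_2 - x^*\|^2 \le \tilde{\mathcal O}(D^2)$ through all three stages (this is why \cref{thm:fedavg-formal} and \cref{thm:sgd-formal} record distance bounds alongside the rates). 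Finally, to obtain the $\beta D^2/R$ branch of the minimum, the paper pre-runs \mbsgd for a constant fraction of rounds so that the $\Delta$ entering the chain already satisfies $\Delta \le \mathcal O(\beta D^2/R)$; without this extra step you would only get $\sqrt{\beta\Delta D^2/R}$ in that slot.
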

For the formal statement, see \cref{thm:fedavg-sgd-formal}.
\begin{theorem}[\textbf{\chain{\lsgd}{\asg}}]
    \label{thm:fedavg-asg-informal}
    Under the hypotheses of \cref{thm:fedavg-sgd-informal} 
 with a choice of $\atail$ as \asg (\cref{algo:acsa}) and the parameter choices of \cref{thm:asg-formal}, we get the following guarantees: 
    \begin{itemize}
        \item \textbf{Strongly convex:} If $F_i$'s satisfy \cref{asm:strongconvex} for $\mu > 0$ then there exists a finite $K$ above which we get the rate, 
$
            \E F(\hat{x}_2) - F(x^*) \leq \tilde{\mathcal{O}} (\min \{\Delta, {\zeta^2}/{ \mu}  \} \exp (-{R}/{\sqrt{\kappa}} ) + (1 - {S}/{N}) {\zeta^2}/{\mu SR} ).
$
        \item \textbf{General convex:} If $F_i$'s satisfy \cref{asm:convex} then there exists a finite $K$ above which we get the rate, 
$
            \E F(\hat{x}_2) - F(x^*) \leq \tilde{\mathcal{O}} (\min \{ {\beta D^2}/{R^2}, {\sqrt{\beta \zeta D^{3}}}/{R} \} + \sqrt{1 - {S}/{N}} ({\zeta D}/{\sqrt{SR}}) + \sqrt[4]{1 - {S}/{N}}  ({\sqrt{\beta \zeta D^{3}}}/ {\sqrt[4]{SR}})\,  ).
$
    \end{itemize}
\end{theorem}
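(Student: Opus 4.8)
The plan is to reuse the two-phase template already established for \cref{thm:fedavg-sgd-informal} and change only the tail analysis, since \fedchain's head and selection steps (\cref{algo:chaining}) are identical across the two theorems. Concretely, I would (i) invoke the head guarantee \cref{thm:fedavg-formal} to control $\E F(\hat x_{1/2}) - F(x^*)$; (ii) show that the point-selection step produces $\hat x_1$ whose suboptimality is, up to a function-value estimation error, no larger than $\min\{\E F(\hat x_0)-F(x^*),\, \E F(\hat x_{1/2})-F(x^*)\}$; and (iii) feed $\hat x_1$ into the \asg guarantee \cref{thm:asg-formal}. The entire difference from \cref{thm:fedavg-sgd-informal} lives in step (iii): \asg contracts the initial suboptimality at the accelerated rate $\exp(-R/(2\sqrt\kappa))$ (strongly convex) or $\beta\|\cdot\|^2/R^2$ (general convex), in place of \mbsgd's $\exp(-R/(2\kappa))$ or $\beta\|\cdot\|^2/R$.

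For the selection step I would combine Jensen's inequality, $\E[\min\{F(\hat x_0),F(\hat x_{1/2})\}] \le \min\{\E F(\hat x_0), \E F(\hat x_{1/2})\}$, with a concentration-and-bias bound on the empirical estimate $\tfrac{1}{SK}\sum_{i\in\mathcal S}\sum_k f(x;\hat z_{i,k})$ used to compare the two candidates. Its deviation from $F(x)$ is governed by the function-value variance $\sigma_F^2$ \pcref{asm:cost_variance} and heterogeneity $\zeta_F^2$ \pcref{asm:func-het}: the former is $\mathcal{O}(\sigma_F^2/(SK))$ and is driven below the target accuracy by taking $K$ large, which is the source of the ``finite $K$ above which'' qualifier, while the latter is removed by the footnote's device of instead comparing the full chained run against pure \asg. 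Using $\E F(\hat x_0)-F(x^*)\le\Delta$ together with the head bound then gives $\E F(\hat x_1)-F(x^*)\le \tilde{\mathcal{O}}(\min\{\Delta,\zeta^2/\mu\})$ in the strongly convex case.

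Substituting this into the strongly convex \asg guarantee, and noting that partial participation injects a between-client gradient variance of order $(1-S/N)\zeta^2/S$ at the true iterate \pcref{asm:grad_het}, yields $\E F(\hat x_2)-F(x^*)\le \tilde{\mathcal{O}}(\min\{\Delta,\zeta^2/\mu\}\exp(-R/\sqrt\kappa) + (1-S/N)\zeta^2/(\mu SR))$, whose noise term matches the centralized \asg row of \cref{tab:strongconvexrates}. The only delicate point here is guaranteeing that the head's transient is already dominated by $\zeta^2/\mu$ before the tail acts; taking the better of the chained output and pure \asg (which secures the $\Delta$ branch of the minimum) is what closes this.

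The general convex case is where the real work lies and is, I expect, the main obstacle. Here \asg's accelerated rate scales with the squared distance $\|\hat x_1 - x^*\|^2$ rather than the function value, and convexity alone does not convert the head's function-value progress into a distance contraction, so I would track the head's distance reduction directly and optimize over the intermediate target suboptimality (equivalently, over how aggressively \lsgd is tuned). Balancing the deterministic accelerated term $\beta\|\hat x_1 - x^*\|^2/R^2$ against the head's heterogeneity-limited progress is what produces the non-obvious exponent in the $\sqrt{\beta\zeta D^{3}}/R$ term. At the same time, the accelerated iteration amplifies the partial-participation noise, and accumulating it over $R/2$ rounds yields both the $\sqrt{1-S/N}\,(\zeta D/\sqrt{SR})$ term and the cross term $\sqrt[4]{1-S/N}\,(\sqrt{\beta\zeta D^{3}}/\sqrt[4]{SR})$ arising from the interaction between the reduced distance and the accelerated variance. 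I anticipate this variance-accumulation-meets-distance-balancing computation, rather than any conceptual step, to be the crux; the strongly convex case is comparatively mechanical once the selection bound is in hand.
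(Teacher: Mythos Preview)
Your strongly convex argument is correct and is exactly what the paper does: the proof of \cref{thm:fedavg-asg-formal} is stated as following \cref{thm:fedavg-sgd-formal} verbatim with \cref{thm:asg-formal} swapped in for \cref{thm:sgd-formal}.

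The general convex argument has a real gap. You propose to ``track the head's distance reduction directly,'' but in the general convex regime the head does not reduce distance: the \lsgd guarantee in \cref{thm:fedavg-formal} (which already uses Nesterov smoothing) only certifies $\E\|\hat x_{1/2}-x^*\|^2\le\tilde{\mathcal O}(D^2)$. Feeding that into the standard $\beta\|\hat x_1-x^*\|^2/R^2$ rate recovers the $\beta D^2/R^2$ branch of the minimum but can never produce $\sqrt{\beta\zeta D^3}/R$, and no amount of balancing the head's tuning changes this, because without curvature a function-value gap of $\zeta D$ simply does not translate into a distance below $D$. What the head \emph{does} shrink is the function-value gap, to $\tilde{\mathcal O}(\zeta D)$. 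The paper exploits this by taking a gradient-norm detour rather than a distance one: the general-convex \asg analysis in \cref{thm:asg-formal} (itself via regularization) delivers
\[
\E\|\nabla F(\hat x_2)\|^2 \;\le\; \tilde{\mathcal O}\!\left(\frac{\beta\,\Delta'}{R^2}+\text{noise}\right),
\]
where $\Delta'$ is the function-value gap at $\hat x_1$, and then convexity plus Cauchy--Schwarz give
\[
\E F(\hat x_2)-F(x^*)\;\le\;\sqrt{\E\|\nabla F(\hat x_2)\|^2}\,\sqrt{\E\|\hat x_2-x^*\|^2},
\]
with the second factor still $\tilde{\mathcal O}(D)$ by the distance bounds carried through both phases. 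Plugging in $\Delta'=\min\{\zeta D,\Delta\}$ yields $\sqrt{\beta\zeta D^3}/R$ from the first branch; pre-running \asg for a constant fraction of rounds makes $\Delta\le\tilde{\mathcal O}(\beta D^2/R^2)$ and yields $\beta D^2/R^2$ from the second. The partial-participation terms you anticipated then drop out of the same square root. So the correct chain is function value $\to$ gradient norm $\to$ function value, not function value $\to$ distance $\to$ function value.
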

For the formal statement, see \cref{thm:fedavg-asg-formal}.  We show  and discuss the near-optimality of \chain{\lsgd}{\asg} under strongly convex and PL conditions (and under full participation) in Section~\ref{sec:lb}, where we introduce matching lower bounds. 
Under  strong-convexity shown in \cref{tab:strongconvexrates},  \chain{\lsgd}{\asg}, when compared to \asg, converts the $\Delta \exp(- {R}/{\sqrt{\kappa}})$ term into a $\min\{\Delta, {\zeta^2}/{\mu}\} \exp(-{R}/{\sqrt{\kappa}})$ term, improving over \asg when heterogeneity moderately small: ${\zeta^2}/{\mu} < \Delta$.  It also significantly improves over \lsgd, as $\min\{\Delta, {\zeta^2}/{\mu}\} \exp(-{R}/{\sqrt{\kappa}})$ is exponentially faster  than $\kappa ({\zeta^2}/{\mu}) R^{-2}$.  Under the PL condition (which is not necessarily convex) the story is similar \pcref{tab:plrates}, except we use \chain{\lsgd}{\mbsgd} as our representative algorithm, as Nesterov acceleration is not known to improve under non-convex settings.

\begin{table*}[t]
	\centering
	\renewcommand{\arraystretch}{1.4}
	\caption{ Rates for the general convex case.  $^\clubsuit$ Rate requires $R \geq \frac{N}{S}$. $^\spadesuit$ Rate requires $S = N$. $^\blacklozenge$ Analysis from \citet{karimireddy2020scaffold}.
	}
	\begin{tabular}{ l l}
	\hline 
	\textbf{Method/Analysis}  & \textbf{$\E F(\hat{x}) - F(x^*)\leq \tilde{\mathcal{O}}(\cdot)$ } 
	\\ 
	\hline
	\textit{ Centralized Algorithms} \\
	\hline
	\begin{tabular}{l} 
		\Mbsgd \\
		\asg  \\
	\end{tabular}
	& \begin{tabular}{l}
		$\frac{\beta D^2}{R} + \sqrt{1 - \frac{S}{N}} \frac{\zeta D}{\sqrt{SR}}$ 
		\\
		$\frac{\beta D^2}{R^2} + \sqrt{1 - \frac{S}{N}} \frac{\zeta D}{\sqrt{SR}}$ 
		\\
	\end{tabular}
	\\\hline
	\textit{ Federated Algorithms} \\
	\hline
	\begin{tabular}{l}
		FedAvg$^\blacklozenge$\\
		\lsgd { \small\citep{woodworth2020minibatch}}\\
        SCAFFOLD$^\blacklozenge$\\ 
	\end{tabular}
	&
	\begin{tabular}{l}
        $ \frac{\beta D^2}{R} + \sqrt[3]{\frac{\beta \zeta^2 D^4}{R^2}} + \sqrt{1 - \frac{S}{N}} \frac{\zeta D}{\sqrt{SR}}$ 
		\\
		$\sqrt[3]{\frac{\beta \zeta^2 D^4}{R^2}}$ \\
		$\sqrt{\frac{N}{S}} \frac{\beta D^2}{R}$$^\clubsuit$ 
		\\
	\end{tabular}
	\\\hline 
	\textit{ This paper} \\
	\hline
    \begin{tabular}{l}
        \lsgd $\to$ \mbsgd \pcref{thm:fedavg-sgd-informal} \\
        \lsgd $\to$ \asg \pcref{thm:fedavg-asg-informal} \\
        Algo.-independent LB { \small\pcref{thm:lowerbound}}\\
    \end{tabular}
    &
    \begin{tabular}{l}
        $\min\{\frac{\beta D^2}{R}, \frac{\sqrt{\beta \zeta D^{3}}}{\sqrt{R}}\} + \sqrt[4]{1 - \frac{S}{N}}\frac{\sqrt{\beta \zeta D^{3}}}{\sqrt[4]{SR}}$ \\
        {\small $\min \{\frac{\beta D^2}{R^2}, \frac{\sqrt{\beta \zeta D^{3}}}{R}\}  + \sqrt[4]{1 - \frac{S}{N}}\frac{\sqrt{\beta \zeta D^{3}}}{\sqrt[4]{SR}}+ \sqrt{1 - \frac{S}{N}}\frac{\zeta D}{\sqrt{SR}}$ } \\
		$\min \{\frac{\beta D^2}{R^2}, \frac{\zeta D}{\sqrt{R^5}}\}$ \\
    \end{tabular}
    \\\hline 
\end{tabular}
\label{tab:generalconvexrates}
\end{table*}

In the general convex case \pcref{tab:generalconvexrates}, let  $\beta = D = 1$ for the purpose of comparisons.  Then \chain{\lsgd}{\asg}'s convergence rate is
$\min \{1/R^2, \zeta^{1/2}/R \} + \sqrt{1 - S/N} (\zeta /\sqrt{SR}) + \sqrt[4]{1 - S/N} \sqrt{ \zeta }/\sqrt[4]{SR}$.  If $\zeta < \frac{1}{R^2}$, then $\zeta^{1/2}/R < 1/ R^2$ and if $\zeta < \sqrt{S/ R^7}$, then $\sqrt[4]{1 - S/N} \sqrt{ \zeta }/\sqrt[4]{SR} < 1/R^2$, so the \chain{\lsgd}{\asg} convergence rate is better than the convergence rate of \asg under the regime $\zeta < \min\{1 / R^2, \sqrt{S/ R^7}\}$.  The rate of \citet{karimireddy2020scaffold} for \lsgd (which does not require $S = N$) is strictly worse than \asg, and so has a worse convergence rate than \chain{\lsgd}{\asg} if $\zeta < \min\{1 / R^2, \sqrt{S/ R^7}\}$.  Altogether, if $\zeta < \min\{1 / R^2, \sqrt{S/ R^7}\}$, \chain{\lsgd}{\asg} achieves the best known worst-case rate.  Finally, in the $S = N$ case, \chain{\lsgd}{\asg} does not have a regime in $\zeta$ where it improves over both \asg and \lsgd (the analysis of \citet{woodworth2020minibatch}, which requires $S = N$) at the same time.  It is unclear if this is due to looseness in the analysis.

Next, we analyze variance reduced methods that improve convergence when a random subset of the clients participate in each round (i.e., partial participation). 
\begin{theorem}[\textbf{\chain{\lsgd}{\saga}}]
    \label{thm:fedavg-saga-informal}
    Suppose that client objectives $F_i$'s and their gradient queries satisfy \cref{asm:smooth,,asm:uniform_variance,,asm:cost_variance,,asm:grad_het,,asm:func-het}.  Then running \fedchain (\cref{algo:chaining}) with $\ahead$ as \lsgd (\cref{algo:fedavg} ) with the parameter choices of \cref{thm:fedavg-formal}, and $\atail$ as \saga (\cref{algo:saga}) with the parameter choices of \cref{thm:saga-formal}, we get the following guarantees as long as $R \geq \Omega(\frac{N}{S})$: 
    \begin{itemize}
        \item \textbf{Strongly convex:} If $F_i$'s satisfy \cref{asm:strongconvex} for $\mu > 0$, then there exists a finite $K$ above which we get the rate
        $
            \E F(\hat{x}_2) - F(x^*) \leq \tilde{\mathcal{O}}(\min \{\Delta, \zeta^2/\mu \} \exp(-\min\{S/N, 1/\kappa\} R))
        $\;.
        \item \textbf{PL condition:} If $F_i$'s satisfy \cref{asm:pl} for $\mu > 0$, then there exists a finite $K$ above which we get the rate
        $
            \E F(\hat{x}_2) - F(x^*) \leq \tilde{\mathcal{O}}(\min \{\Delta, \zeta^2/\mu \} \exp(-(S/N)^{\frac23} R/\kappa))
        $\;.
    \end{itemize}
\end{theorem}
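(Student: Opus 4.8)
The plan is to prove this by the same modular two‑phase decomposition that underlies \chain{\lsgd}{\mbsgd} \pcref{thm:fedavg-sgd-informal}: the entire novelty relative to that theorem lies in the \emph{tail} algorithm, so I would reuse the analysis of the local phase essentially verbatim and only substitute the convergence guarantee of \saga \pcref{thm:saga-formal} for that of \mbsgd. Concretely, writing the output as $\hat{x}_2 = \atail(\hat{x}_1)$ with $\atail = \saga$, I let $\hat{\delta} := \E F(\hat{x}_1) - F(x^*)$ be the suboptimality of the point that initializes the tail, and split the argument into (A) bounding $\hat{\delta}$ via the local phase plus the selection step, and (B) propagating $\hat{\delta}$ through \saga's rate over the remaining $R/2$ rounds.

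For step (A) — which is shared with the proof of \cref{thm:fedavg-sgd-informal} — I would establish $\hat{\delta} \le \tilde{\mathcal{O}}(\min\{\Delta, \zeta^2/\mu\})$. The selection step in \cref{algo:chaining} sets $\hat{x}_1$ to the empirically better of $\hat{x}_0$ and $\hat{x}_{1/2}$, so up to the error of estimating $F$ by the average $\frac{1}{SK}\sum_{i\in\mathcal{S}}\sum_{k} f(\cdot;\hat{z}_{i,k})$ we get $\E F(\hat{x}_1)-F(x^*) \le \min\{\,\E F(\hat{x}_0)-F(x^*),\ \E F(\hat{x}_{1/2})-F(x^*)\,\}$. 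The first argument is exactly $\Delta$. For the second, I invoke the \lsgd guarantee \pcref{thm:fedavg-formal}: under $\mu$-strong convexity each client minimizer lies within $\zeta/\mu$ of $x^*$, since $\|\nabla F_i(x^*)\| = \|\nabla F_i(x^*)-\nabla F(x^*)\| \le \zeta$, so \lsgd with $R/2$ rounds and a large enough number $K$ of local steps drives its output into an $\tilde{\mathcal{O}}(\zeta^2/\mu)$ neighborhood in function value, the residual gradient‑variance contribution scaling as $\sigma^2/K$ and hence being negligible for large $K$. Taking the minimum yields $\hat{\delta}\le\tilde{\mathcal{O}}(\min\{\Delta,\zeta^2/\mu\})$, with the function‑value estimation error contributing the $\sigma_F^2/(SK)$ and $\zeta_F^2$ terms that are suppressed exactly as in the footnote to \cref{thm:fedavg-sgd-informal} (large $K$ for the former, and running the better of \chain{\lsgd}{\saga} and plain \mbsgd for the latter).

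For step (B) I would feed $\hat{x}_1$ as the initialization of \saga and apply \cref{thm:saga-formal}. In the strongly convex case \saga is variance‑reduced and converges \emph{linearly to the exact optimum}, with no heterogeneity or stochastic floor; its per‑round contraction under partial participation is $\exp(-\min\{S/N,1/\kappa\})$, where the $S/N$ factor reflects that $\Theta(N/S)$ rounds are needed to refresh the full table of client gradients (hence the standing hypothesis $R\ge\Omega(N/S)$) and $1/\kappa$ is the usual conditioning factor. Over $R/2$ rounds this gives $\E F(\hat{x}_2)-F(x^*)\le\tilde{\mathcal{O}}(\hat{\delta}\,\exp(-\min\{S/N,1/\kappa\}R))$, and substituting the bound on $\hat{\delta}$ from (A) proves the strongly convex claim. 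For the PL case I would use the nonconvex variance‑reduction guarantee of \saga under the $\mu$-PL condition, whose contraction degrades to $\exp(-(S/N)^{2/3}/\kappa)$ per round — the $2/3$ power being the standard nonconvex finite‑sum variance‑reduction exponent, arising from balancing epoch length against step size absent convex structure — and propagate $\hat{\delta}$ through $R/2$ such rounds to obtain the stated $\exp(-(S/N)^{2/3}R/\kappa)$ rate.

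I expect the main obstacle to be step (B) in the PL case rather than the convex case: establishing the $(S/N)^{2/3}$-type contraction for \saga requires controlling the staleness of the stored client gradients when only $S$ of $N$ clients participate per round, and without convexity the Lyapunov argument must instead track $\E\|\nabla F\|^2$ through the PL inequality together with the variance of the SAGA estimator. A secondary subtlety is making step (A) rigorous under PL: the ``client minimizers lie within $\zeta/\mu$ of $x^*$'' reasoning relies on strong convexity, so for PL one must re‑derive the $\tilde{\mathcal{O}}(\zeta^2/\mu)$ local‑phase floor directly from \cref{thm:fedavg-formal}, and the selection step must still certify $\hat{\delta}\le\tilde{\mathcal{O}}(\min\{\Delta,\zeta^2/\mu\})$ even though the optimum need not be unique.
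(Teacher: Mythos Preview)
Your proposal is correct and matches the paper's approach: the paper's proof of \cref{thm:fedavg-saga-formal} simply says it follows that of \chain{\lsgd}{\mbsgd} with \saga's guarantee \pcref{thm:saga-formal} substituted for the tail, exactly the modular (A)+(B) decomposition you outline. One minor difference worth noting: to suppress the $\zeta_F$ contribution from the selection step, the paper does not invoke the ``run the better of the chained algorithm and plain \mbsgd'' workaround from the footnote, but instead observes that since \saga already requires $R\ge\Omega(N/S)$ rounds for its warm-start, one can afford to use all $N$ clients in the selection step (i.e., apply \cref{lemma:choosefunc} with $S=N$), which kills the $\sqrt{1-S/N}\,\zeta_F/\sqrt{S}$ term outright.
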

For the formal statement, see \cref{thm:fedavg-saga-formal}.
\begin{theorem}[\textbf{\chain{\lsgd}{\ssnm}}]
    \label{thm:fedavg-ssnm-informal}
    Suppose that client objectives $F_i$'s and their gradient queries satisfy \cref{asm:smooth,,asm:uniform_variance,,asm:grad_het,,asm:func-het}.  Then running \fedchain (\cref{algo:chaining}) with $\ahead$ as \lsgd (\cref{algo:fedavg}) with the parameter choices of \cref{thm:fedavg-formal}, and $\atail$ as \ssnm (\cref{algo:ssnm}) with the parameter choices of \cref{thm:ssnm-formal}, we get the following guarantees as long as $R \geq \Omega(\frac{N}{S})$: 
    \begin{itemize}
        \item \textbf{Strongly convex:} If $F_i$'s satisfy \cref{asm:strongconvex} for $\mu > 0$, then there exists a finite $K$ (same as in \chain{\lsgd}{\mbsgd}) above which we get the rate
        $
            \E F(\hat{x}_2) - F(x^*) \leq \tilde{\mathcal{O}}(\min \{\Delta, \zeta^2/\mu \} \exp(-\min\{S/N, \sqrt{S/(N\kappa)} \} R) )
        $\;.
    \end{itemize}
\end{theorem}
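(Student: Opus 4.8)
The plan is to treat the three stages of \cref{algo:chaining} modularly, invoking the single-algorithm guarantees \pcref{thm:fedavg-formal,thm:ssnm-formal} as black boxes and spending the real effort on (a) the selection step that produces $\hat x_1$ and (b) verifying that the accelerated variance-reduced contraction of SSNM applies to the federated partial-participation oracle. The target rate $\order(\min\{\Delta,\zeta^2/\mu\}\exp(-\min\{S/N,\sqrt{S/(N\kappa)}\}R))$ should decompose cleanly: phase~1 drives the iterate into an $\order(\zeta^2/\mu)$ neighborhood irrespective of $\Delta$; the selection step caps the resulting suboptimality at $\order(\min\{\Delta,\zeta^2/\mu\})$; and phase~2 contracts this geometrically at the SSNM rate with no additive floor.

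First I would bound the local phase. By the FedAvg guarantee \pcref{thm:fedavg-formal} under \cref{asm:smooth,asm:strongconvex,asm:grad_het,asm:uniform_variance}, running $\ahead$ for $R/2$ rounds with $K$ large enough to suppress the $\sigma^2/K$ gradient-variance term contracts the initial gap down to the heterogeneity floor, giving $\E[F(\hat x_{1/2})]-F(x^*)\le \order(\zeta^2/\mu)$. The selection step must then certify that choosing the empirically better of $\hat x_0,\hat x_{1/2}$ does not inflate this. Using \cref{asm:func-het} together with the same concentration argument that fixes $K$ in \cref{thm:fedavg-sgd-informal}, the empirical averages $\tfrac{1}{SK}\sum_{i\in\mathcal S}\sum_k f(\cdot;\hat z_{i,k})$ approximate $F(\hat x_0)$ and $F(\hat x_{1/2})$ up to an error negligible for $K$ above that threshold. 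Since $F(\hat x_0)-F(x^*)\le\Delta$ always holds, pushing expectations through the $\argmin$ yields $\E[F(\hat x_1)]-F(x^*)\le \order(\min\{\Delta,\zeta^2/\mu\})$, which is exactly the prefactor that must multiply the exponential.

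Next I would feed $\hat x_1$ into the SSNM guarantee \pcref{thm:ssnm-formal}. The key observation is that $F=\tfrac1N\sum_i F_i$ is a finite sum of $N$ components that are $\beta$-smooth and $\mu$-strongly convex, and that sampling $S$ clients per round without replacement is precisely the mini-batch oracle SSNM is built for; accelerated variance reduction then contracts the suboptimality by $\exp(-\min\{S/N,\sqrt{S/(N\kappa)}\}R)$ over the remaining $R/2$ rounds, provided $R\ge\Omega(N/S)$ so that at least one full epoch (a pass over all $N$ components) is completed. Because SSNM is variance-reduced, it converges to $x^*$ with no additive heterogeneity or sampling floor once supplied with per-component gradients, so the only residual error is the within-round stochastic-gradient variance, which the same large-$K$ choice drives below the exponential. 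Composing this with the phase-1 bound on $\E[F(\hat x_1)]-F(x^*)$ produces the claimed rate.

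The main obstacle I anticipate is the phase-2 analysis underlying \cref{thm:ssnm-formal} rather than the chaining itself: the Katyusha-type estimate-sequence/Lyapunov argument of SSNM must be carried out with gradients that are \emph{inexact}, being $K$-sample averages of a stochastic oracle rather than exact component gradients, and with \emph{partial} participation replacing the standard uniform single-index sampling. I expect the bulk of the work to be showing that the variance-reduction control variates remain valid under $S$-out-of-$N$ sampling and that the residual gradient noise enters the Lyapunov recursion multiplicatively, in a way that a finite $K$ can absorb into the $\order$; the chaining argument is then a short composition of the two phase bounds. A secondary subtlety is that the selection step queries function-value oracles, so its error must be controlled through $\zeta_F^2$ and the function-value variance, which is why the admissible $K$ is inherited verbatim from the \chain{\lsgd}{\mbsgd} analysis.
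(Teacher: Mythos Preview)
Your three-phase decomposition is exactly the paper's approach: invoke \cref{thm:fedavg-formal} on $\ahead$, pass through the selection step via \cref{lemma:choosefunc}, and then invoke \cref{thm:ssnm-formal} on $\atail$. Two points where your plan diverges from what actually happens are worth flagging.

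First, the selection step. You write that for $K$ above the \chain{\lsgd}{\mbsgd} threshold the empirical function values approximate $F(\hat x_0),F(\hat x_{1/2})$ up to negligible error, but \cref{lemma:choosefunc} leaves behind an additive $\sqrt{1-\tfrac{S-1}{N-1}}\,\zeta_F/\sqrt{S}$ term that does \emph{not} shrink with $K$; only the $\sigma_F/\sqrt{SK}$ piece does. The way the paper removes $\zeta_F$ here is not by taking $K$ large but by observing that the hypothesis $R\ge\Omega(N/S)$ already budgets enough rounds to run the selection step with \emph{full participation} ($S=N$), which zeros the client-sampling error in \cref{lemma:choosefunc}. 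That is the role of the $R\ge\Omega(N/S)$ assumption in the selection step, not just in SSNM's epoch requirement.

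Second, you identify ``the phase-2 analysis underlying \cref{thm:ssnm-formal}'' as the main obstacle and plan to redo the Katyusha-style Lyapunov argument under inexact gradients and $S$-out-of-$N$ sampling. In the paper this is not part of the chaining proof at all: \cref{thm:ssnm-formal} is established separately (its proof already handles mini-batch client sampling and $K$-averaged stochastic gradients, yielding the $\kappa\sigma^2/(\mu KS)$ residual), and the chaining theorem simply composes it with the phase-1 bound. So the chaining proof itself is genuinely short---three black-box invocations---and the hard work you anticipate lives entirely inside the standalone \cref{thm:ssnm-formal}, which the statement lets you cite.
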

For the formal statement, see \cref{thm:fedavg-ssnm-formal}.  SSNM \citep{zhou2019direct} is the Nesterov accelerated version of SAGA \citep{defazio2014saga}.  

The main contribution of using variance reduced methods in $\atail$ is the removal of the sampling error (the terms depending on the sampling heterogeneity error $(1 - S/N) (\zeta^2/S)$) from the convergence rates, in exchange for requiring a round complexity of at least $N/S$.  To illustrate this, observe that in the strongly convex case, \chain{\lsgd}{\mbsgd} has convergence rate $\min \{ \Delta, \zeta^2/\mu \} \exp(-R/\kappa) + (1 - S/N)(\zeta^2/S R)$ and \chain{\lsgd}{\saga} (\chain{\lsgd}{\mbsgd}'s variance-reduced counterpart) has convergence rate $\min \{ \Delta, \zeta^2/\mu \} \exp(-\min\{1/\kappa, S/N\} R)$, dropping the $(1 - S/N)(\zeta^2/\mu S R)$ sampling heterogeneity error term in exchange for harming the rate of linear convergence from $1/\kappa$ to $\min\{1/\kappa, S/N\}$.  

This same tradeoff occurs in finite sum optimization (the problem $\min_x (1/n)\sum_{i=1}^n \psi_i(x)$, where $\psi_i$'s (typically) represent losses on data points and the main concern is computation cost), which is what variance reduction is designed for.  In finite sum optimization, variance reduction methods such as SVRG \citep{johnson2013accelerating} and SAGA \citep{defazio2014saga, reddi2016fast} achieve linear rates of convergence (given strong convexity of $\psi_i$'s) in exchange for requiring at least $N/S$ updates.  Because we can treat FL as an instance of finite-sum optimization (by viewing \cref{eq:objective} as a finite sum of objectives and $\zeta^2$ as the variance between $F_i$'s), these results from variance-reduced finite sum optimization can be extended to federated learning.  This is the idea behind SCAFFOLD \citep{karimireddy2020scaffold}.  

It is not always the case that variance reduction in $\atail$ achieves better rates.  In the strongly convex case if $(1 - S/N)(\zeta^2/\mu S \epsilon) > N/S$, then variance reduction gets gain, otherwise not.

\begin{figure*}[t] 
	\begin{minipage}[b]{0.3\linewidth}
		\centering
		\includegraphics[width=\textwidth]{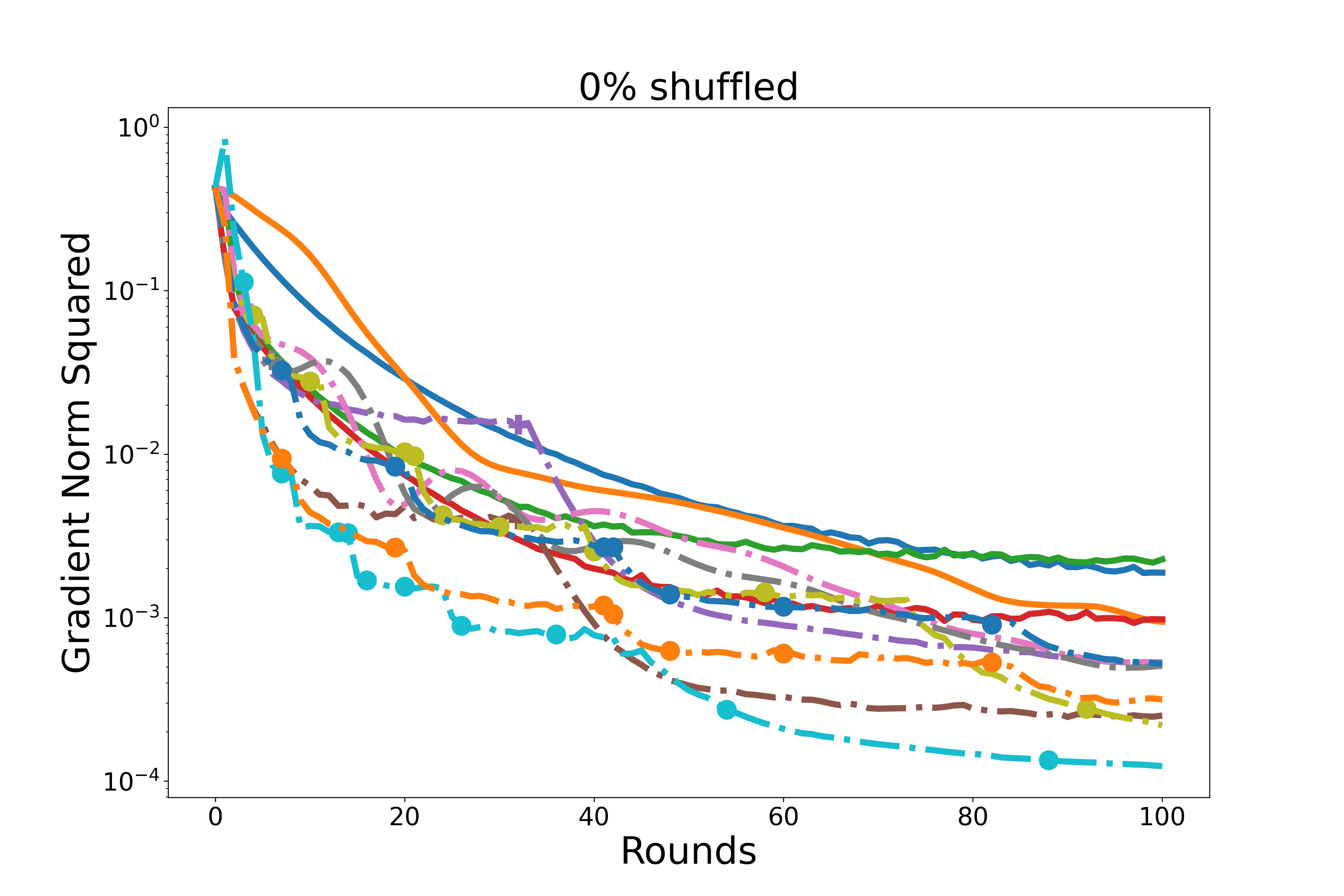}
	\end{minipage}
	~~
	\hspace{-1.75em}
	\begin{minipage}[b]{0.3\linewidth}
		\centering
		\includegraphics[width=\textwidth]{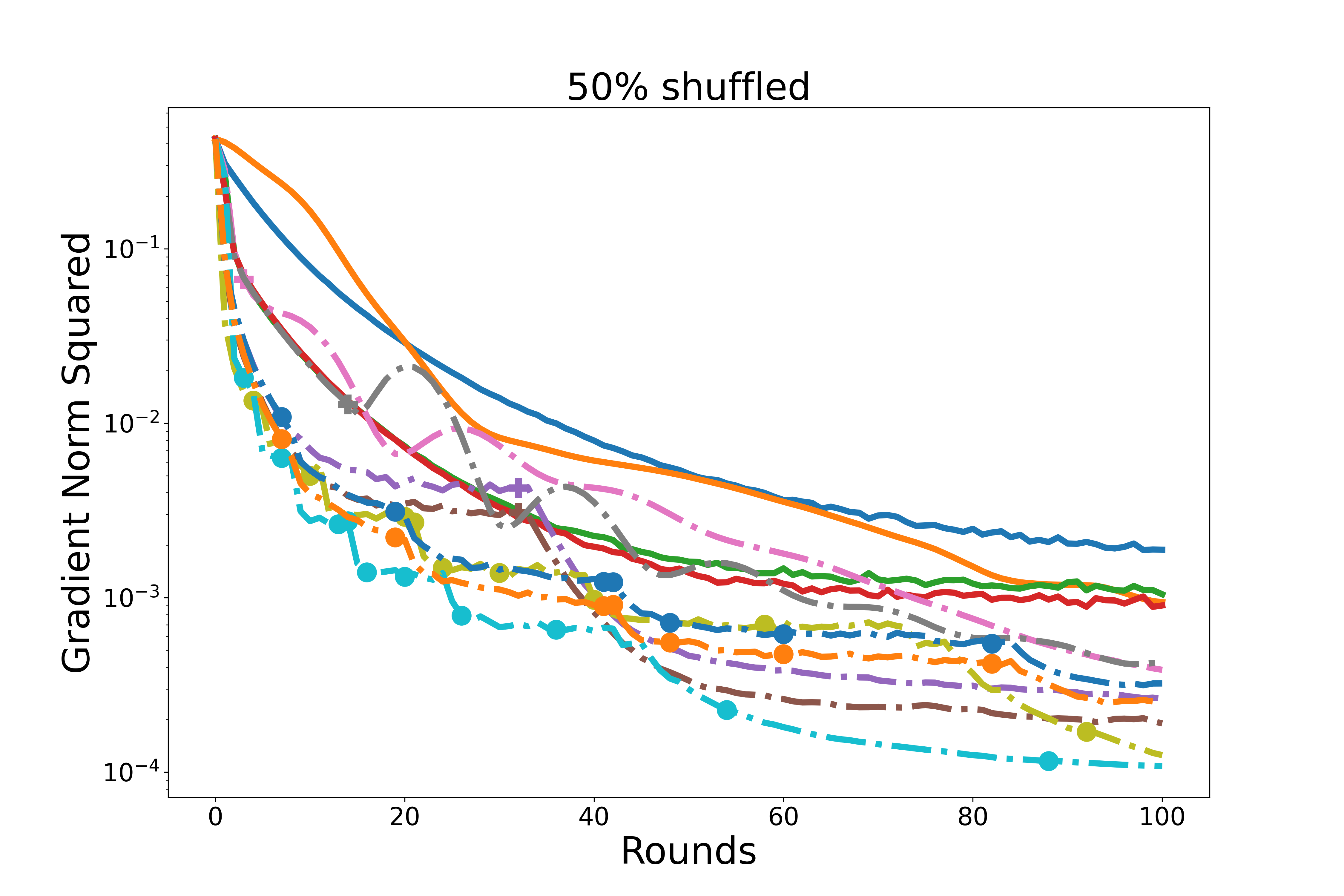}
	\end{minipage}
	~~
	\hspace{-1.75em}
	\begin{minipage}[b]{0.3\linewidth}
		\centering
		\includegraphics[width=\textwidth]{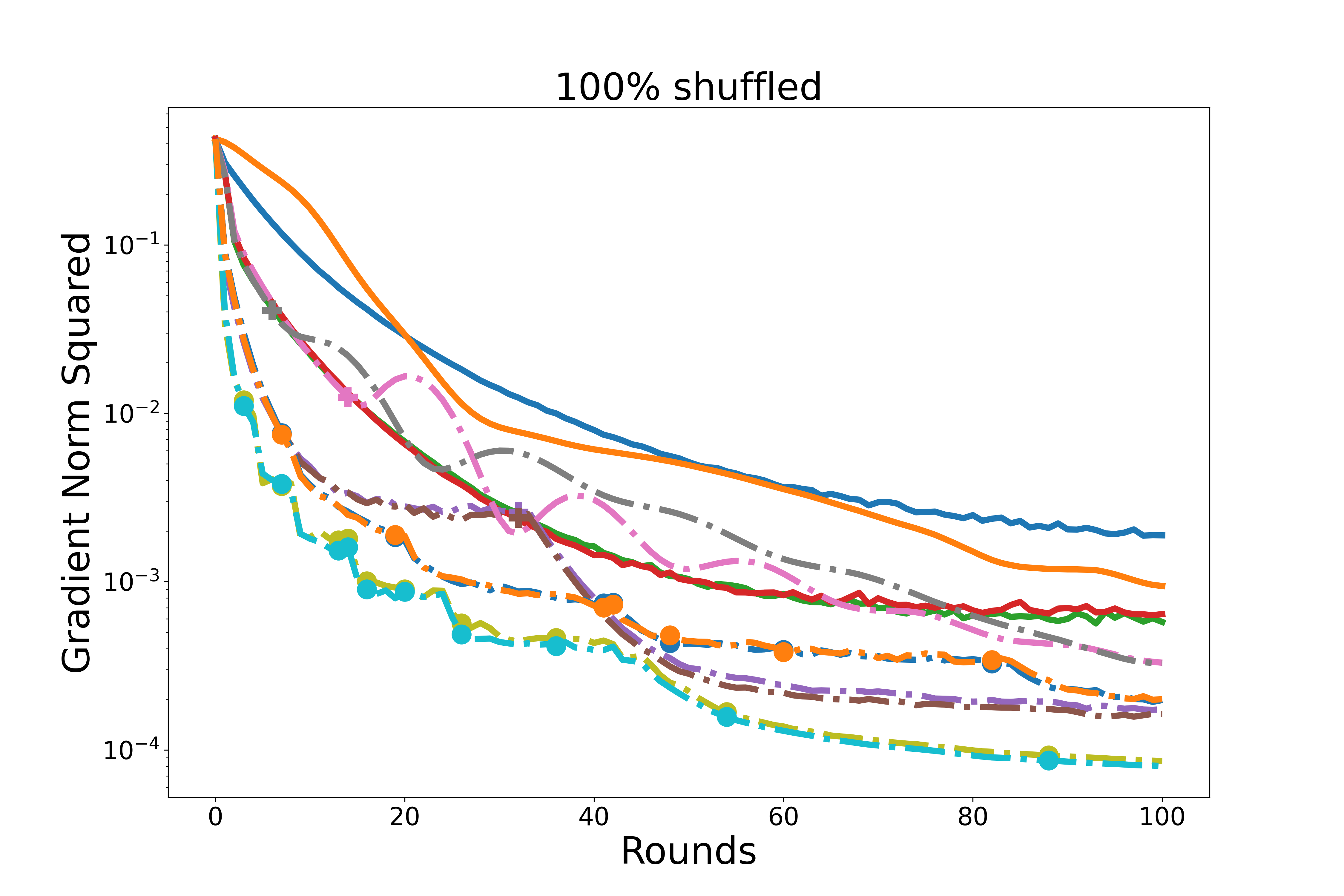}
	\end{minipage}
	\begin{minipage}[t]{0.1\linewidth}
		\centering
		\raisebox{0.5cm}[\dimexpr\height-2cm]{%
			\includegraphics[width=1.2\textwidth]{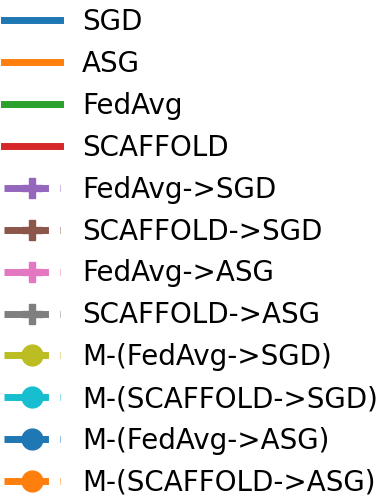}
		}
	\end{minipage}
	~~
	\caption{Plot titles denote data homogeneity (\cref{sec:experiments}).  ``X$\to$Y'' denotes a \fedchain instantiation with X as $\ahead$ and Y as $\atail$, circle markers denote stepsize decay events and plusses denote switching from $\ahead$ to $\atail$.  Across all heterogeneity levels, the multistage algorithms perform the best.  Stepsize decayed baselines are left out to simplify the plots; we display them in \cref{app:m-plots}.}
	\label{fig:stoc}
\end{figure*}
\section{Lower bounds}
\label{sec:lb}
Our lower bound allows full participation.  It assumes deterministic gradients and the following class from \citep{woodworth2020minibatch, woodworth2021minimax,carmon2020lower}: 
\begin{definition}
	\label{def:zero-respect}
    For a  $v\in{\mathbb R}^d$, let $\text{supp}(v) = \{ i \in [d]: v_i \neq 0\}$.  An  algorithm is {\em distributed zero-respecting} if for any $i,k,r$, the $k$-th iterate on the $i$-th client in the $r$-th round $\x{r}_{i,k}$ satisfy
    \begin{align}
        &\text{supp}(\x{r}_{i,k}) \subseteq \bigcup_{0 \leq k' < k} \text{supp}(\nabla F_i(x_{i,k'}^{(r)})) \bigcup_{\substack{i' \in [N],  0 \leq k' \leq K-1, 0 \leq r' < r }} \text{supp}(\nabla F_{i'}(x_{i', k'}^{(r')}))
    \end{align}
\end{definition}
Distributed zero-respecting algorithms' iterates have components in coordinates that they have any information on.  As discussed in \citep{woodworth2020minibatch}, this means that algorithms which are \textit{not} distributed zero-respecting are just ``wild guessing".  Algorithms that are distributed zero-respecting include \mbsgd, \asg, and \lsgd. We assume the following class of algorithms in order to bound the heterogeneity of our construction for the lower bound proof. 
\begin{definition} 
	\label{def:dist-conserve}
    We say that an algorithm is {\em distributed distance-conserving} if for any $i,k,r$, we have for the $k$-th iterate on the $i$-th client in the $r$-th round $\x{r}_{i,k}$ satisfies 
     $   \|\x{r}_{i,k}  - x^*\|^2 \leq (c/2)[ \|x_{\text{init}} - x^*\|^2 + \sum_{i=1}^N \|x_{\text{init}}  - x_i^*\|^2]$, 
    where $x^*_j := \argmin_x F_j(x)$ and $x^* := \argmin_x F(x)$ and $x_{\text{init}}$ is the initial iterate, and $c$ is some scalar parameter.
\end{definition}
Algorithms which do not satisfy \cref{def:dist-conserve} with $c$ at most logarithmic in problem parameters (see \cref{sec:setting}) are those that move substantially far away from $x^*$, even farther than the $x_i^*$'s are from $x^*$.  With this definition in mind, we slightly overload the usual definition of heterogeneity for the lower bound:
\begin{definition}
    \label{def:mod-het}
    A distributed optimization problem is {\em $(\zeta,c)$-heterogeneous} if  $ \max_{i \in [N]} \sup_{ x \in A} \|\nabla F_i(x) - \nabla F(x) \|^2 \leq \zeta^2$, where we define $A := \{x: \|x - x^*\|^2 \leq ({c}/{2})(\|x_{\text{init}} - x^*\|^2 + \sum_{i=1}^N \|x_{\text{init}} - x^*_i\|^2)\}$ for some scalar parameter $c$.
\end{definition}
Those achievable convergence rates in FL that assume  \cref{def:heterogeneity} can be readily extended to account for  \cref{def:mod-het} as long as the algorithm satisfies \cref{def:dist-conserve}.  We show that the chaining algorithms we propose  satisfy \cref{def:mod-het} in \cref{thm:sgd-formal}, \cref{thm:asg-formal}, and \cref{thm:fedavg-formal} for $c$ at most polylogarithmic in the problem parameters defined in \cref{sec:setting}.  \footnote{We do not formally show it for \saga and \ssnm, as the algorithms are functionally the same as \mbsgd and \asg under full participation.}. 
 Other FL algorithms also satisfy \cref{def:dist-conserve}, notably  \lsgd analyzed in \citet{woodworth2020minibatch} for $c$ an absolute constant, which is the current tightest known analysis of \lsgd in the full participation setting.
\begin{theorem}
    \label{thm:lowerbound}
    For any number of rounds $R$, number of local steps per-round $K$, and $(\zeta,c)$-heterogeneity  (\cref{def:mod-het}), there exists a global objective $F$ which is the average of two $\beta$-smooth (\cref{asm:smooth}) and $\mu$($\geq 0$)-strongly convex (\cref{asm:strongconvex})  quadratic client objectives $F_1$ and $F_2$ with an initial sub-optimality gap of $\Delta$, such that the output $\hat{x}$ of any distributed zero-respecting (\cref{def:zero-respect}) and distance-conserving algorithm (\cref{def:dist-conserve}) satisfies
    \begin{itemize}
	\item \textbf{Strongly convex:} $F(\hat{x}) - F(x^*) \geq \Omega (\min  \{\Delta , {1}/({c \kappa^{3/2})} ({\zeta^2}/{\beta}) \}\exp (- {R}/{\sqrt{\kappa}} ).)$ when $\mu > 0$, and
	\item \textbf{General Convex} $F(\hat{x}) - F(x^*) \geq \Omega (\min  \{ {\beta D^2}/{R^2} ,{\zeta D}/({c^{1/2} \sqrt{R^5})}  \} )$ when $\mu = 0$.
	\end{itemize}
\end{theorem}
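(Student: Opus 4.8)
The plan is to lift the quadratic ``chain'' hard instance of \citet{woodworth2020minibatch} and \citet{carmon2020lower} to a two-client federated problem, and then to use the distance-conserving restriction \pcref{def:dist-conserve} to pin the heterogeneity over the region actually visited by the algorithm to $\zeta$. Concretely, I would take $F(x) = \tfrac12 x^\top H x - e_1^\top x$ with $H$ the standard tridiagonal Nesterov matrix, rescaled so that $F$ is $\beta$-smooth \pcref{asm:smooth} and $\mu$-strongly convex \pcref{asm:strongconvex} with condition number $\kappa$. I split $F = (F_1+F_2)/2$ by assigning the ``odd'' off-diagonal couplings $(1,2),(3,4),\dots$ to $F_1$ and the ``even'' couplings $(2,3),(4,5),\dots$ to $F_2$, with each client carrying half of the diagonal and of the linear term. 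This leaves three free scales --- an overall magnitude controlling $\Delta$ (resp.\ $D$), the off-diagonal weight controlling $\zeta$, and the chain length controlling $\kappa$ --- which I tune at the very end to produce the two $\min\{\cdot,\cdot\}$ expressions.

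The combinatorial heart is a support-growth argument that makes the number of local steps $K$ irrelevant. Because of the odd/even splitting, the coupling graph of each $F_i$ is a perfect matching, so every reachable iterate has support equal to an initial segment $\{1,\dots,m\}$, and a single evaluation of $\nabla F_i$ advances $m$ by at most one, and only when the parity of $m$ matches that client. Consequently, within a round a client's local iterates can activate at most one new coordinate no matter how many local steps it takes: after advancing the frontier once, the next gradient of the same $F_i$ lies in the already-active segment. Since \pcref{def:zero-respect} confines each local iterate to the supports of gradients computed so far, the two clients advance complementary parities, so the global active support grows by at most one coordinate per round and is contained in $\{1,\dots,O(R)\}$ after $R$ rounds, independent of $K$. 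I would formalize this by induction on $r$, tracking the largest activated index.

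With only the first $m = O(R)$ coordinates of $\hat x$ nonzero, the standard spectral estimate for the Nesterov quadratic lower-bounds the residual by its ``tail'' energy: $F(\hat x) - F(x^*) \ge \Omega(\mu\|x_{\text{init}} - x^*\|^2)\exp(-\Omega(m/\sqrt\kappa))$ in the strongly convex case, and $\Omega(\beta D^2/m^2)$ in the convex case. The genuinely new step is controlling heterogeneity. Since $\nabla F_i - \nabla F = \tfrac12(\nabla F_i - \nabla F_j)$ is affine in $x$, its norm grows with $\|x - x^*\|$ and is unbounded over all of $\mathbb{R}^d$; invoking \pcref{def:dist-conserve} I restrict to the set $A$ of \pcref{def:mod-het}, on which $\|x-x^*\|^2 \le (c/2)(\|x_{\text{init}}-x^*\|^2 + \sum_i\|x_{\text{init}}-x_i^*\|^2)$, and I choose the off-diagonal weight so that $\sup_{x\in A}\|\nabla F_i(x)-\nabla F(x)\|^2$ equals exactly $\zeta^2$. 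This ties the coupling strength of the construction to $\zeta$ and injects the factor $c$. Substituting this weight into the tail bound and balancing it against the overall magnitude (which caps the first alternative at $\Delta$, resp.\ the distance at $D$) yields $\min\{\Delta,\ \tfrac{1}{c\kappa^{3/2}}\tfrac{\zeta^2}{\beta}\}\exp(-R/\sqrt\kappa)$ and $\min\{\beta D^2/R^2,\ \zeta D/(c^{1/2}\sqrt{R^5})\}$.

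The main obstacle is the simultaneous calibration of the three scales so that $\beta$, $\mu$, $\Delta$ (or $D$), and $\zeta$ all hold at once while the region $A$ of \pcref{def:mod-het} stays large enough to contain the iterates yet small enough to cap the heterogeneity at $\zeta$. This coupling between the distance bound and the eigenstructure of $H$ is exactly what produces the unusual exponents $\kappa^{3/2}$ and $R^{5/2}$ and the dependence on $c$; getting them right requires propagating the radius of $A$ through the geometric decay of the Nesterov optimizer's coordinates rather than applying any single clean inequality, and it is also where the restriction to distance-conserving algorithms --- the feature that tightens \citet{woodworth2020minibatch} --- is essential, since without it $\zeta$ could not be bounded on the construction at all.
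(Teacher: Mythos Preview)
Your proposal is correct and follows essentially the same route as the paper's proof: the same parity-split Nesterov chain construction (the paper's $F_1,F_2$ with free parameters $\ell_2,\hat\zeta$), the same one-coordinate-per-round support-growth lemma for distributed zero-respecting algorithms, and the same use of the distance-conserving radius to cap $\|\nabla F_1-\nabla F_2\|^2$ over the visited region before tuning the free scales. The only cosmetic difference is that the paper places the entire linear term $-\ell_2\hat\zeta\,x_1$ in $F_1$ rather than splitting it, which slightly simplifies the computation of $x_1^*,x_2^*$ and hence of the radius of $A$.
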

\begin{corollary}
    \label{cor:pl-lower}
    Under the hypotheses of Theorem~\ref{thm:lowerbound},  there exists a global objective $F$ which is $\mu$-PL and satisfies 
$    F(\hat{x}) - F(x^*) \geq \Omega (\min \{\Delta ,{1}/{(c\kappa^{3/2})} ({\zeta^2}/{\beta}) \}\exp(- {R}/{\sqrt{\kappa}} ) )$. 
\end{corollary}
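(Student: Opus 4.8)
The plan is to observe that the lower-bound construction already supplied by \cref{thm:lowerbound} in the strongly convex case is, without any modification, a valid hard instance for the PL setting. Concretely, \cref{thm:lowerbound} produces a global objective $F = (F_1 + F_2)/2$ that is the average of two $\beta$-smooth, $\mu$-strongly convex quadratics; such an $F$ is itself $\mu$-strongly convex. Since $\mu$-strong convexity implies the $\mu$-PL condition, this same $F$ lies in the PL class, and the lower bound established for it transfers verbatim, yielding the claimed $\Omega(\min\{\Delta, (1/(c\kappa^{3/2}))(\zeta^2/\beta)\}\exp(-R/\sqrt{\kappa}))$ bound.

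The one substantive step is to verify the implication strong convexity $\Rightarrow$ PL. For a $\mu$-strongly convex $F$ we have $F(y) \geq F(x) + \langle \nabla F(x), y - x \rangle + (\mu/2)\|y - x\|^2$ for all $x,y$. Minimizing the left side over $y$ gives $F(x^*)$, while minimizing the quadratic lower bound on the right over $y$ gives $F(x) - (1/(2\mu))\|\nabla F(x)\|^2$; combining the two yields $2\mu(F(x) - F(x^*)) \leq \|\nabla F(x)\|^2$, which is exactly the $\mu$-PL inequality. Thus every instance in the strongly convex construction is automatically $\mu$-PL, and no new objective needs to be built.

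Because a lower bound is an existence statement over instances, enlarging the admissible class from strongly convex to PL can only preserve (or strengthen) the bound: the hard instance from \cref{thm:lowerbound} remains admissible, so the same quantity is a valid lower bound over the PL class. All remaining hypotheses---the algorithm being distributed zero-respecting (\cref{def:zero-respect}) and distance-conserving (\cref{def:dist-conserve}), and the problem being $(\zeta,c)$-heterogeneous (\cref{def:mod-het})---are properties of the very same construction and the same algorithm class, hence are inherited unchanged.

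There is no genuine obstacle here; the only point requiring care is the logical direction of the class inclusion. Since we are proving a lower bound (where the adversary selects the instance), it suffices that the strongly convex hard instance belongs to the \emph{larger} PL class, which the implication above guarantees. Were we instead proving an upper bound, this inclusion would run the wrong way and a genuinely new (possibly nonconvex) PL construction might be needed; but for the lower bound the reduction is immediate.
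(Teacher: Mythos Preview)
Your proposal is correct and follows exactly the paper's approach: the paper states that the corollary ``follows immediately from the fact that $\mu$-strong convexity implies $\mu$-PL,'' and your argument is precisely this observation, with the added bonus of spelling out the standard proof of that implication. There is nothing to add.
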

A proof of the lower bound is in \cref{app:lowerbound}, and the corollary follows immediately from the fact that $\mu$-strong convexity implies $\mu$-PL. 
 This result tightens the lower bound in \citet{woodworth2020minibatch}, which proves a similar lower bound but in terms of a much larger class of functions with heterogeneity bounded in $\zeta_*=(1/N)\sum_{i=1}^N\|\nabla F_i(x^*)\|^2$.   To the best of our knowledge, all achievable rates that can take advantage of heterogeneity require  $(\zeta,c)$-heterogeneity (which is a smaller class than $\zeta_*$-heterogeneity), which are incomparable with the existing lower bound from   \citep{woodworth2020minibatch} that requires $\zeta_*$-heterogeneity. 
By introducing the class of distributed distance-conserving algorithms (which includes most of the algorithms we are interested in), 
\cref{thm:lowerbound} allows us, for the first time, to identify the optimality of the achievable rates as shown in Tables \ref{tab:strongconvexrates}, \ref{tab:generalconvexrates}, and \ref{tab:plrates}. 

Note that this lower bound allows full participation and should be compared to the achievable rates with $S=N$; comparisons with variance reduced methods like \chain{\lsgd}{\saga} and \chain{\lsgd}{\ssnm} are unnecessary.  Our lower bound proves that \chain{\lsgd}{\asg} is optimal up to condition number factors shrinking exponentially in $\sqrt{\kappa}$ among algorithms satisfying \cref{def:dist-conserve} and \cref{def:mod-het}.  Under the PL-condition, the situation is similar, except \chain{\lsgd}{\mbsgd} loses $\sqrt{\kappa}$ in the exponential versus the lower bound.  On the other hand, there remains a substantial gap between \chain{\lsgd}{\asg} and the lower bound in the general convex case.  Meaningful progress in closing the gap in the general convex case is an important future direction.

\section{Experiments}
\label{sec:experiments}
We evaluate the utility of our framework on strongly convex and nonconvex settings. 
We compare the communication round complexities of four baselines: \lsgd, \Mbsgd, \asg, and SCAFFOLD, the stepsize decaying variants of these algorithms (which are prefixed by M- in plots) and the various instantiations of \fedchain \pcref{algo:chaining}. 

\paragraph{Convex Optimization (Logistic Regression)}
We first study federated regularized logistic regression, which is strongly convex (objective function in \cref{app:convex-details}). 
In this experiment, we use the MNIST dataset of handwritten digits \citep{lecun2010mnist}. 
We (roughly) control client heterogeneity by creating ``clients" through shuffling samples across different digit classes.
The details of this shuffling process are described in \Cref{app:convex-details}; if a dataset is more shuffled (more homogeneous), it roughly corresponds to a lower heterogeneity  dataset.  All clients participate in each round.

\cref{fig:stoc} compares the convergence of chained and non-chained algorithms in the stochastic gradient setting (minibatches are 1\% of a client's data), over $R=100$ rounds (tuning details in \Cref{app:convex-details}).  We observe that in all heterogeneity levels, \fedchain instantiations (whether two or more stages) outperform other baselines.  We also observe that SCAFFOLD$\to$SGD outperforms all other curves, including SCAFFOLD$\to$ASG.  This can be explained by the fact that acceleration increases the effect of noise, which can contribute to error if one does not take $K$ large enough (we set $K=20$ in the convex experiments).  The effect of large $K$ is elaborated upon in \cref{sec:largek}.
\vspace{-0.2cm}

\begin{table}[t]
    \caption{Test accuracies ($\uparrow$). ``Constant'' means the stepsize does not change during optimization, while ``w/ Decay'' means the stepsize decays during optimization.  \textbf{Left:} Comparison among algorithms in the EMNIST task.  \textbf{Right:} Comparison among algorithms in the CIFAR-100 task.  ``SCA.'' abbreviates SCAFFOLD.}
    \begin{minipage}[b]{0.49\linewidth}
        \centering
        \begin{tabular}{ l l l }
            \hline 
            \textbf{Algorithm}  & \textbf{Constant}& \textbf{w/ Decay}
            \\ \hline
            \textit{Baselines}
            \\ \hline		
            \begin{tabular}{l} 
                SGD \\
                FedAvg \\
                SCAFFOLD \\			
            \end{tabular}
            & \begin{tabular}{l}
                0.7842 \\
                0.8314 \\
                0.8157 \\
            \end{tabular}
            & \begin{tabular}{l}
                
                0.7998 \\
		        0.8224 \\
		        0.8174 \\
            \end{tabular}
        \\ \hline
        \textit{\Chaineds}
        \\ \hline
        \begin{tabular}{l} 
            {\small FedAvg $\to$ SGD} \\
            {\small SCA. $\to$ SGD} \\
            \end{tabular}
        & \begin{tabular}{l}
            0.8501 \\
			\textbf{0.8508} \\
        \end{tabular}	
        & \begin{tabular}{l}
            0.8355 \\
		    \textbf{0.8392} \\
        \end{tabular}	
            \\\hline
        \end{tabular}
    \end{minipage}
    \begin{minipage}[b]{0.49\linewidth}
        \label{tab:accuracy}
        \centering
        \begin{tabular}{ l l l }
            \hline 
            \textbf{Algorithm}  & \textbf{Constant}& \textbf{w/ Decay}
            \\ \hline
            \textit{Baselines}
            \\ \hline		
            \begin{tabular}{l} 
                SGD \\
                FedAvg \\
                SCAFFOLD \\			
            \end{tabular}
            & \begin{tabular}{l}
                0.1987 \\
                0.4944 \\
                -- \\
            \end{tabular}
            & \begin{tabular}{l}
                0.1968 \\
		        0.5059 \\
		        -- \\
            \end{tabular}
        \\ \hline
        \textit{\Chaineds}
        \\ \hline
        \begin{tabular}{l} 
            {\small FedAvg $\to$ SGD }\\
            {\small SCA. $\to$ SGD} \\
            \end{tabular}
        & \begin{tabular}{l}
            \textbf{0.5134} \\
			-- \\
        \end{tabular}	
        & \begin{tabular}{l}
            \textbf{0.5167} \\
		    -- \\
        \end{tabular}	
            \\\hline
        \end{tabular}
    \end{minipage}
    \label{tab:neuralnet}
\end{table}

\paragraph{Nonconvex Optimization (Neural Networks)}
We also evaluated nonconvex image classification tasks with convolutional neural networks. In all experiments, we consider client sampling with $S = 10$.  We started with digit classification over EMNIST \citep{cohen2017emnist} (tuning details in \Cref{app:emnist-details}), where handwritten characters are partitioned by author.  \Cref{tab:neuralnet} (\textbf{Left}) displays test accuracies on the task. Overall, \fedchain instantiations perform better than baselines.

We also considered image classification with ResNet-18 on CIFAR-100 \citep{Krizhevsky09learningmultiple} (tuning details in \Cref{app:cifar-details}). \Cref{tab:neuralnet} (\textbf{Right}) displays the test accuracies on CIFAR-100; SCAFFOLD is not included due to memory constraints.  
\fedchain instantiations again perform the best. 

\subsubsection*{Acknowledgments}
This work is supported by Google faculty research award, JP Morgan Chase, Siemens, the Sloan Foundation, Intel, NSF grants CNS-2002664, CA-2040675, IIS-1929955, DMS-2134012, CCF-2019844 as a part of NSF Institute for Foundations of Machine Learning (IFML), and CNS-2112471 as a part of NSF AI Institute for Future Edge Networks and Distributed Intelligence (AI-EDGE). Most of this work was done prior to the second author
joining Amazon, and it does not relate to his current
position there.

\bibliography{ref}
\bibliographystyle{iclr2022_conference}
\newpage
\appendix
\appendixpage

\startcontents[sections]
\printcontents[sections]{l}{1}{\setcounter{tocdepth}{2}}
\begin{table*}
	\centering
	\renewcommand{\arraystretch}{1.4}
	\caption{ Rates under the PL condition.  $^\clubsuit$ Rate requires $R \geq \frac{N}{S}$.}
	\begin{tabular}{ l l}
	\hline 
	\textbf{Method/Analysis}  & \textbf{$\E F(\hat{x}) - F(x^*)\leq \tilde{\mathcal{O}}(\cdot)$ } 
	\\ 
	\hline
	\textit{ Centralized Algorithms} \\
	\hline
	\begin{tabular}{l} 
		\Mbsgd \\
	\end{tabular}
	& \begin{tabular}{l}
		$\Delta \exp(-\kappa^{-1} R) + (1 - \frac{S}{N})\frac{\kappa \zeta^2}{\mu S R}$ 
		\\
	\end{tabular}
	\\\hline
	\textit{ Federated Algorithms} \\
	\hline
	\begin{tabular}{l}
		\Lsgd { \small\citep{karimireddy2020mime}}\\
	\end{tabular}
	&
	\begin{tabular}{l}
        $\kappa \Delta \exp(-\kappa^{-1} R) + \frac{\kappa^2 \zeta^2}{\mu R^2}$ 
		\\
	\end{tabular}
	\\\hline 
	\textit{ This paper} \\
	\hline
    \begin{tabular}{l}
        \lsgd $\to$ \mbsgd \pcref{thm:fedavg-sgd-informal} \\
        \lsgd $\to$ \saga \pcref{thm:fedavg-saga-informal} \\
        Algo.-independent LB { \small\pcref{cor:pl-lower}}\\
    \end{tabular}
    &
    \begin{tabular}{l}
        $\min \{ \Delta, \frac{\zeta^2}{\mu} \} \exp(-\kappa^{-1} R) + (1 - \frac{S}{N})\frac{\kappa \zeta^2}{\mu S R}$ 
		\\
        $\min \{ \Delta, \frac{\zeta^2}{\mu} \} \exp(-((\frac{N}{S})^{\frac{2}{3}}\kappa )^{-1} R)$$^\clubsuit$ \\
		$\min \{\Delta, \kappa^{-\frac{3}{2}}(\frac{\zeta^2}{\beta})\} \exp(-\kappa^{-\frac{1}{2}} R)$ \\
    \end{tabular}
    \\\hline 
\end{tabular}
\label{tab:plrates}
\end{table*}

\section{Additional Related Work}
\textbf{Multistage algorithms.}  Our chaining framework has similarities in analysis to multistage algorithms which have been employed in the classical convex optimization setting \citep{aybat2019universally, fallah2020optimal,ghadimi2012optimal,woodworth2020minibatch}.  The idea behind multistage algorithms is to manage stepsize decay to balance fast progress in ``bias'' error terms while controlling variance error.  However chaining differs from multistage algorithms in a few ways: (1) We do not necessarily decay stepsize (2) In the heterogeneous FL setting, we have to accomodate error due to client drift \citep{reddi2020adaptive,karimireddy2020scaffold}, which is not analyzed in prior multistage literature. (3) Our goal is to minimize commuincation rounds rather than iteration complexity.  
\section{Definitions}
\label{app:definitions}
\begin{assumption}
    \label{asm:strongconvex}
    $F_i$'s are $\mu$-strongly convex for $\mu > 0$, i.e.
    \begin{align}
        \langle \nabla F_i(x), y - x \rangle \leq -(F_i(x) - F_i(y) + \frac{\mu}{2} \|x - y\|^2)
    \end{align}
    for any $i$, $x$, $y$.  
\end{assumption}
\begin{assumption}
    \label{asm:convex}
    $F_i$'s are general convex, i.e.
    \begin{align}
        \langle \nabla F_i(x), y - x \rangle \leq -(F_i(x) - F_i(y))
    \end{align}
    for any $i$, $x$, $y$.  
\end{assumption}
\begin{assumption}
    \label{asm:pl}
    $F$ is $\mu$-PL for $\mu > 0$, i.e.
    \begin{align}
        2\mu (F(x) - F(x^*)) \leq \|\nabla F(x)\|^2
    \end{align}
    for any $x$.
\end{assumption}
\begin{assumption}
    \label{asm:smooth}
    $F_i$'s are $\beta$-smooth where
    \begin{align}
        \|\nabla F_i (x) - \nabla F_i(y) \| \leq \beta \|x - y\|
    \end{align}
    for any $i$, $x$, $y$.  
\end{assumption}
\begin{assumption}
    \label{asm:grad_het}
    $F_i$'s are $\zeta^2$-heterogeneous, defined as
    \begin{align}
        \zeta^2 := \sup_{x} \max_{i \in [N]}\|\nabla F(x) - \nabla F_i(x) \|^2
    \end{align}
\end{assumption}
\begin{assumption}
    Gradient queries within a client have a uniform upper bound on variance and are also unbiased.
    \label{asm:uniform_variance}
    \begin{align}
        \mathbb{E}_{z_i \sim \mathcal{D}_i} \| \nabla f(x;z_i) - \nabla F_i(x) \|^2 \leq \sigma^2, \ \ \ \mathbb{E}_{z_i \sim \mathcal{D}_i} [\nabla f(x;z_i)] =  \nabla F_i(x)
    \end{align}
\end{assumption}
\begin{assumption}
    Cost queries within a client have a uniform upper bound on variance and are also unbiased.
    \label{asm:cost_variance}
    \begin{align}
        \mathbb{E}_{z_i \sim \mathcal{D}_i} ( f(x;z_i) - F_i(x) )^2 \leq \sigma_F^2, \ \ \ \mathbb{E}_{z_i \sim \mathcal{D}_i} f(x;z_i) = F_i(x)
    \end{align}
\end{assumption}
\begin{assumption}
    Cost queries within a client have an absolute deviation.
    \label{asm:func-het}
    \begin{align}
        \zeta^2_F = \sup_{x} \max_{i \in [N]} (F(x) - F_i(x))^2
    \end{align}
\end{assumption}
\begin{assumption}
    \label{asm:subopt}
    The initial suboptimality gap is upper bounded by $\Delta$, where $x_0$ is the initial feasible iterate and $x^*$ is the global minimizer.
    \begin{align}
    \E F(x_0) - F(x^*) \leq \Delta
    \end{align}
\end{assumption}

\begin{assumption}
    \label{asm:distance}
    The expected initial distance from the optimum is upper bounded by $D$, where $x_0$ is the initial feasible iterate and $x^*$ is one of the global minimizers: 
    \begin{align}
    \E \|x_0 - x^*\|^2 \leq D^2
    \end{align}
\end{assumption}

\newcommand{\fgrad}{\text{Grad}}
\section{Omitted Algorithm Definitions}
\begin{algorithm}[H]
	\caption{\Mbsgd}
	\label{algo:mbsgd}
	\begin{algorithmic}
		\STATE \textbf{Input:} initial point $\x{0}$, stepsize $\eta$, loss $f$
		\FOR{$r = 0, \dots, R-1$}
			\STATE  $\triangleright$ \underline{\textbf{Run a step of \Mbsgd}}
			\STATE Sample $S$ clients $\mathcal{S}_r \subseteq [1,\dots,N]$ 
			\STATE \underline{\textbf{Send}} $x^{(r)}$ to all clients in $\mathcal{S}_r$
			\STATE \underline{\textbf{Receive}} $\{g_i^{(r)}\}_{i \in \mathcal{S}_r} = \fgrad(\x{r}, \mathcal{S}_r, z^{(r)})$ \pcref{algo:gradsample}
			\STATE $g^{(r)} = \frac{1}{S} \sum_{i \in \mathcal{S}_r} g_i^{(r)}$
			\STATE $x^{(r+1)} = x^{(r)} -  \eta \cdot g^{(r)}$
		\ENDFOR
	\end{algorithmic}
\end{algorithm}

\begin{algorithm}[H]
	\caption{\asg}
	\label{algo:acsa}
	\begin{algorithmic}
		\STATE \textbf{Input:} initial point ${x}^{(0)} = \x{0}_{\text{ag}}$, $\{\alpha_r\}_{1 \leq r \leq R}$, $\{\gamma_r\}_{1 \leq r \leq R}$, loss $f$
		\FOR{$r = 1, \dots, R$}
			\STATE  $\triangleright$ \underline{\textbf{Run a step of \asg}}
			\STATE Sample $S$ clients $\mathcal{S}_r \subseteq [1,\dots,N]$ 
			\STATE \underline{\textbf{Send}} $x^{(r)}$ to all clients in $\mathcal{S}_r$
			\STATE $\x{r}_{\text{md}} = \frac{(1 - \alpha_r) (\mu + \gamma_r)}{\gamma_r + (1 - \alpha_r^2) \mu} \x{r-1}_{\text{ag}} + \frac{\alpha_r [(1 - \alpha_r) \mu + \gamma_r]}{\gamma_r + (1 - \alpha_r^2)\mu} \x{r-1}$
			\STATE \underline{\textbf{Receive}} $\{g_i^{(r)}\}_{i \in \mathcal{S}_r} = \fgrad(\x{r}_{\text{md}}, \mathcal{S}_r, z^{(r)})$ \pcref{algo:gradsample}
			\STATE $g^{(r)} = \frac{1}{S} \sum_{i \in \mathcal{S}_r} g_i^{(r)}$
			\STATE $\x{r} = \argmin_x \{ \alpha_r[\langle g^{(r)}, x \rangle + \frac{\mu}{2} \|\x{r}_{\text{md}} - x\|^2] + [(1 - \alpha_r)\frac{\mu}{2} + \frac{\gamma_r}{2} \|\x{r-1} - x\|^2]\}$
			\STATE $\x{r}_{\text{ag}} = \alpha_r \x{r} + (1 - \alpha_r) \x{r-1}_{\text{ag}}$
		\ENDFOR
	\end{algorithmic}
\end{algorithm}
\begin{algorithm}[H]
	\caption{\Lsgd}
	\label{algo:fedavg}
	\begin{algorithmic}
		\STATE \textbf{Input:} Stepsize $\eta$, initial point ${x}^{(0)}$
		\FOR{$r = 0, \dots, R-1$}
			\STATE Sample $S$ clients $\mathcal{S}_r \subseteq [1,\dots,N]$ 
			\STATE \underline{\textbf{Send}} $x^{(r)}$ to all clients in $\mathcal{S}_r$, all clients set $x_{i,0}^{(r)} = x^{(r)}$
			\FOR{client $i \in \mathcal{S}_r$ \textit{in parallel}}
				\FOR{$k=0$, \dots, $\sqrt{K}-1$}
				\STATE Client samples $g_{i,k}^{(r)} = \frac{1}{\sqrt{K}} \sum_{k' = 0}^{\sqrt{K} - 1} \nabla f(x_{i,k}^{(r)} ; z_{i,k \sqrt{K} + k'}^{(r)})$ where $z_{i,k \sqrt{K} + k'}^{(r)} \sim \mathcal{D}_i$
				\STATE $x_{i,k + 1}^{(r)} = x_{i,k}^{(r)} - \eta \cdot g_{i,k}^{(r)}$ 
			\ENDFOR
			\STATE \underline{\textbf{Receive}} $g_{i}^{(r)} = \sum_{k=0}^{\sqrt{K} - 1} g_{i,k}^{(r)}$ from client
		\ENDFOR
		\STATE $g^{(r)} = \frac{1}{S} \sum_{i \in \mathcal{S}_r} g_{i}^{(r)}$
		\STATE $x^{(r+1)} \gets x^{(r)} - \eta \cdot g^{(r)}$
		\ENDFOR
	\end{algorithmic}
\end{algorithm}
\begin{algorithm}[H]
	\caption{\saga}
	\label{algo:saga}
	\begin{algorithmic}
		\STATE \textbf{Input:} stepsize $\eta$, loss $f$, initial point $\x{0}$
		\STATE \underline{\textbf{Send}} $\phi_i^{(0)} = \x{0}$ to all clients $i$
		\STATE \underline{\textbf{Receive}} $c_i^{(0)} = \frac{1}{K} \sum_{k=0}^{K-1}\nabla f(\phi_i^{(0)}; z_{i, k}^{(-1)})$, $z_{i,k}^{(-1)} \sim \mathcal{D}_i$ from all clients, $c^{(0)} = \frac{1}{N} \sum_{i=1}^N c_i^{(0)}$
		\FOR{$r = 0, \dots, R-1$}
			\STATE Sample $S$ clients $\mathcal{S}_r \subseteq [1,\dots,N]$ 
			\STATE \underline{\textbf{Send}} $x^{(r)}$ to all clients in $\mathcal{S}_r$, \underline{\textbf{Receive}} $\{g_i^{(r)}\}_{i \in \mathcal{S}_r} = \fgrad(\x{r}, \mathcal{S}_r, z^{(r)})$ \pcref{algo:gradsample}
			\STATE $g^{(r)} = \frac{1}{S} \sum_{i \in \mathcal{S}_r} g_i^{(r)}- \frac{1}{S} \sum_{i \in \mathcal{S}_r} c_i^{(r)} + c^{(r)}$
			\STATE $x^{(r+1)} = x^{(r)} -  \eta \cdot g^{(r)} $
			\STATE \textbf{Option I:} Set $c_i^{(r+1)} = g_i^{(r)}$, $\phi_i^{(r+1)} = \x{r}$ for $i \in \mathcal{S}_r$
			\STATE \textbf{Option II:} New \textit{independent} sample of clients $\mathcal{S}_r'$, \underline{\textbf{send}} $\x{r}$ to all clients in $\mathcal{S}_r'$, \underline{\textbf{receive}} $\{\tilde{g}_i^{(r)}\}_{i \in \mathcal{S}_r'} = \fgrad(\x{r}, \mathcal{S}_r', \tilde{z}^{(r)})$, set $c_i^{(r+1)} = \tilde{g}_i^{(r)}$, $\phi_i^{(r+1)} = \x{r}$ for $i \in \mathcal{S}_r'$
			\STATE
			\STATE $c_i^{(r+1)} = c_i^{(r)}$ and $\phi_i^{(r+1)} = \phi_i^{(r)}$ for any $i$ not yet updated
			\STATE $c^{(r+1)} = \frac{1}{N}\sum_{i=1}^N c_i^{(r+1)}$
		\ENDFOR
	\end{algorithmic}
\end{algorithm}
\begin{algorithm}[H]
	\caption{\ssnm}
	\label{algo:ssnm}
	\begin{algorithmic}
		\STATE \textbf{Input:} stepsize $\eta$, momentum $\tau$, client losses are $F_i(x) = \E_{z_i \sim \mathcal{D}_i} [f(x;z_i)] + h(x) =  \tilde{F}_i(x) + h(x)$ where $\tilde{F}_i$ can be general convex and $h(x)$ is $\mu$-strongly convex, initial point $\x{0}$
		\STATE \underline{\textbf{Send}} $\phi_i^{(0)} = \x{0}$ to all clients $i$
		\STATE \underline{\textbf{Receive}} $c_i^{(0)} = \frac{1}{K} \sum_{k=0}^{K-1}\nabla f(\phi_i^{(0)}; z_{i, k}^{(-1)})$, $z_{i,k}^{(-1)} \sim \mathcal{D}_i$ from all clients $c^{(0)} = \frac{1}{N} \sum_{i=1}^N c_i^{(0)}$
		\FOR{$r = 0, \dots, R-1$}
			\STATE Sample $S$ clients $\mathcal{S}_r \subseteq [1,\dots,N]$ 
			\STATE \underline{\textbf{Send}} $x^{(r)}$ to all clients in $\mathcal{S}_r$, $y_{i_r}^{(r)} = \tau x^{(r)} + (1 - \tau)\phi^{(r)}_{i_r}$ for $i_r \in \mathcal{S}_r$
			\STATE \underline{\textbf{Receive}} $g_{i_r}^{(r)} = \frac{1}{K}\sum_{k=0}^{K-1} \nabla f(y_{i_r}^{(r)}; z_{i_r,k}^{(r)})$ for $i_r \in \mathcal{S}_r$, $z_{i_r,k}^{(r)} \sim \mathcal{D}_i$
			\STATE $g^{(r)}= \frac{1}{S} \sum_{i_r \in \mathcal{S}_r} g_{i_r}^{(r)} - \frac{1}{S} \sum_{i_r \in \mathcal{S}_r} c_{i_r}^{(r)} + c^{(r)}$
			\STATE $x^{(r + 1)} = \argmin_x \{h(x) + \langle g^{(r)}, x \rangle + \frac{1}{2 \eta}\|\x{r} - x\|^2 \}$
			\STATE Sample new \textit{independent} sample of $S$ clients $\mathcal{S}_r' \subseteq [1, \dots, N]$ 
			\STATE \underline{\textbf{Send}} $\x{r}$ to all clients in $\mathcal{S}_r'$, $\phi_{I_r}^{(r+1)} = \tau \x{r+1} + (1 - \tau)\phi_{I_r}^{(r)}$ for $I_r \in \mathcal{S}_r'$
			\STATE \underline{\textbf{Receive}} $g_{I_r}^{(r)} = \frac{1}{K}\sum_{k=0}^{K-1} \nabla f(\phi_{I_r}^{(r+1)}; \tilde{z}_{I_r,k}^{(r)})$ for $I_r \in \mathcal{S}_r'$, $\tilde{z}_{I_r,k}^{(r)} \sim \mathcal{D}_i$
			\STATE $c_i^{(r+1)} = g_{I_r}^{(r)}$ for $I_r \in \mathcal{S}_r'$
			\STATE $c_i^{(r+1)} = c_i^{(r)}$ and $\phi_i^{(r+1)} = \phi_i^{(r)}$ for any $i$ not yet updated
			\STATE $c^{(r+1)} = \frac{1}{N}\sum_{i=1}^N c_i^{(r+1)}$
		\ENDFOR
	\end{algorithmic}
\end{algorithm}
\begin{algorithm}[H]
	\caption{$\fgrad(x, \mathcal{S}, z)$}
	\label{algo:gradsample}
	\begin{algorithmic}
		\FOR{client $i \in \mathcal{S}$ \textit{in parallel}}
		\STATE Client samples $g_{i,k}^{(r)} = \nabla f(x^{(r)}; z_{i,k}^{(r)})$ where $z_{i,k}^{(r)} \sim \mathcal{D}_i$, $k \in \{ 0,\dots,K-1 \}$
		\STATE $g_i^{(r)} = \frac{1}{K} \sum_{k=0}^{K-1} g_{i,k}^{(r)}$ 
		\ENDFOR
		\RETURN $\{g_i^{(r)}\}_{i \in \mathcal{S}_r}$
	\end{algorithmic}
\end{algorithm}
\section{Proofs for \globalms}
\subsection{\mbsgd}
\begin{theorem}
    \label{thm:sgd-formal}
    Suppose that client objectives $F_i$'s and their gradient queries satisfy \cref{asm:smooth} and \cref{asm:uniform_variance}.  Then running \cref{algo:mbsgd} gives the following for returned iterate $\hat{x}$: 
    \begin{itemize}
        \item \textbf{Strongly convex:} $F_i$'s satisfy \cref{asm:strongconvex} for $\mu > 0$.  If we return $\hat{x} = \frac{1}{W_R} \sum_{r=0}^R w_r x^{(r)}$ with $w_r = (1 - \eta \mu)^{1 - (r+1)}$ and $W_R = \sum_{r=0}^R w_r$, $\eta = \tilde{\mathcal{O}}(\min \{ \frac{1}{\beta},\frac{1}{\mu R}\})$, and $R > \kappa$,
        \begin{align*}
            \E F(\hat{x}) - F(x^*) \leq \tilde{\mathcal{O}}(\Delta \exp(-\frac{R}{\kappa}) + \frac{\sigma^2}{\mu S K R}+ (1 - \frac{S}{N}) \frac{\zeta^2}{\mu SR})
        \end{align*}
        \begin{align*}
            \E \|x^{(r)} - x^*\|^2 \leq \order(D^2) \textrm{  for any } 0 \leq r \leq R
        \end{align*}
        \item \textbf{General convex (grad norm):} $F_i$'s satisfy \cref{asm:convex}.  If we return the average iterate $\hat{x} = \frac{1}{R} \sum_{r=1}^{R} x^{(r)}$, and $\eta = \min \{\frac{1}{\beta}, (\frac{\Delta }{\beta c R})^{1/2} \}$ where $c$ is the update variance
        \begin{align*}
            \E \|\nabla F(\hat{x})\|^2 \leq \order(\frac{\beta \Delta}{R} + \frac{\beta \sigma D}{\sqrt{SKR}} + \sqrt{1 - \frac{S}{N}}\frac{\beta \zeta D}{\sqrt{SR}})
        \end{align*}
        \begin{align*}
            \E \|x^{(r)} - x^*\|^2 \leq \mathcal{O}(D^2) \text{ for any } 0 \leq r \leq R
        \end{align*}
        \item \textbf{PL condition:} $F_i$'s satisfy \cref{asm:pl}.  If we return the final iterate $\hat{x} = x^{(R)}$, set $\eta = \order(\min\{\frac{1}{\beta}, \frac{1}{\mu R}\})$, then 
        \begin{align*}
            \E F(x^{(R)}) - F(x^*) \leq \tilde{\mathcal{O}}(\Delta \exp(-\frac{R}{\kappa}) + \frac{\kappa \sigma^2}{\mu S K R} + (1 - \frac{S}{N})\frac{\kappa \zeta^2}{\mu S R})
        \end{align*}
        
    \end{itemize}
\end{theorem}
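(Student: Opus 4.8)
The plan is to treat \cref{algo:mbsgd} as a standard stochastic gradient method whose update direction $g^{(r)} = (1/S)\sum_{i \in \mathcal{S}_r} g_i^{(r)}$ is an unbiased estimate of $\nabla F(x^{(r)})$, and to feed a sharp variance bound into the textbook bias--variance analyses, specialized to each of the three regimes. The central object is the per-round variance of $g^{(r)}$ conditioned on $x^{(r)}$, which decomposes into two independent sources: (i) intra-client sampling noise from averaging $K$ stochastic gradients on each of $S$ sampled clients, and (ii) client-subsampling noise from drawing $S$ of $N$ clients without replacement. By \cref{asm:uniform_variance} the first contributes $\sigma^2/(SK)$, and by the definition of heterogeneity $\zeta^2$ \pcref{asm:grad_het} the second contributes at most $(1-S/N)\zeta^2/S$, the factor $(1-S/N)$ being the finite-population correction. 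The first step is therefore to prove the variance lemma
\begin{align*}
\E\, g^{(r)} = \nabla F(x^{(r)}), \qquad \E\,\|g^{(r)} - \nabla F(x^{(r)})\|^2 \;\leq\; \frac{\sigma^2}{SK} + \Big(1 - \frac{S}{N}\Big)\frac{\zeta^2}{S} \;=:\; c,
\end{align*}
using that the client draw and the gradient oracles are independent; this $c$ is exactly the ``update variance'' named in the general convex stepsize.

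With the variance lemma in hand, each regime reduces to a one-line descent recursion plus stepsize tuning. For the strongly convex case I would combine $\beta$-smoothness with $\mu$-strong convexity \pcref{asm:strongconvex} to obtain, for $\eta \leq 1/(2\beta)$, the contraction $\E\|x^{(r+1)} - x^*\|^2 \leq (1-\eta\mu)\E\|x^{(r)} - x^*\|^2 - \eta\,\E(F(x^{(r)})-F(x^*)) + \eta^2 c$, and then apply the geometric reweighting $w_r = (1-\eta\mu)^{-(r+1)}$ so the bias telescopes into $\Delta\exp(-R/\kappa)$ while the variance averages to $\sigma^2/(\mu SKR)$ and $(1-S/N)\zeta^2/(\mu SR)$; the hypothesis $R > \kappa$ places us in the small-stepsize regime $\eta = \tilde{\mathcal O}(1/(\mu R))$. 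For the PL case I would instead use the smoothness descent lemma $\E F(x^{(r+1)}) \leq F(x^{(r)}) - \eta(1-\eta\beta/2)\|\nabla F(x^{(r)})\|^2 + (\beta\eta^2/2)c$ and insert the PL inequality $\|\nabla F(x^{(r)})\|^2 \geq 2\mu(F(x^{(r)})-F(x^*))$ to get the recursion $\E(F(x^{(r+1)})-F(x^*)) \leq (1-\eta\mu)(F(x^{(r)})-F(x^*)) + (\beta\eta^2/2)c$ on the suboptimality gap, unrolled on the final iterate; here the extra factor of $\kappa$ in the variance terms appears because PL controls only function values, forcing the slower $\sum(1-\eta\mu)^{R-1-r}(\beta\eta^2/2)c \sim (\beta\eta/\mu)c$ accumulation rather than the distance-based one.

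For the general convex gradient-norm bound I would sum the smoothness descent lemma over $r$, telescope $F(x^{(0)}) - F(x^{(R)}) \leq \Delta$, and obtain $(1/R)\sum_r \E\|\nabla F(x^{(r)})\|^2 \leq 2\Delta/(\eta R) + \beta\eta c$; the stated $\eta = \min\{1/\beta, (\Delta/(\beta c R))^{1/2}\}$ is exactly the minimizer of this trade-off, yielding the $\beta\Delta/R$ bias term together with the variance-driven $\beta\sigma D/\sqrt{SKR}$ and $\sqrt{1-S/N}\,\beta\zeta D/\sqrt{SR}$ terms after using $\Delta \leq (\beta/2)D^2$. Separately, I would establish the uniform distance bound $\E\|x^{(r)} - x^*\|^2 \leq \mathcal{O}(D^2)$ for every $r$, which is what makes \cref{algo:mbsgd} distance-conserving \pcref{def:dist-conserve} and keeps its trajectory inside \cref{def:mod-het}. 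In the strongly convex and PL cases this is immediate from the contraction above: each step shrinks the distance (or gap) by $(1-\eta\mu)$ and adds a controlled increment $\eta^2 c$, so the iterate never exceeds a constant multiple of its initial distance plus accumulated noise, which the tuned stepsize keeps at $\mathcal{O}(D^2)$; in the convex case the descent lemma already shows $\E\|x^{(r)}-x^*\|^2$ is nonincreasing up to the same increment.

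I expect the main obstacle to be the client-subsampling term of the variance lemma: extracting the exact finite-population factor $(1-S/N)$, rather than a crude $1/S$, requires carefully computing the variance of a without-replacement sample mean of the vectors $\nabla F_i(x^{(r)})$ about their average $\nabla F(x^{(r)})$, and then bounding each per-client deviation by $\zeta^2$. The remaining work is bookkeeping: verifying that the uniform $\mathcal{O}(D^2)$ distance guarantee holds \emph{simultaneously} with the tuned small stepsizes in all three regimes, since the reweighting (strongly convex), final-iterate (PL), and averaged-iterate (convex) choices differ, and each must be checked not to inflate the trajectory beyond the distance-conserving threshold.
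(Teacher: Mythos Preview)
Your proposal is correct and follows essentially the same route as the paper: it proves the same variance lemma for $g^{(r)}$ (the paper states it as $\E_r\|g^{(r)}\|^2 \leq \|\nabla F(x^{(r)})\|^2 + \sigma^2/(SK) + (1-\tfrac{S-1}{N-1})\zeta^2/S$), then runs exactly the three analyses you describe---distance contraction with geometric reweighting for strong convexity, the smoothness descent lemma plus PL for the PL case, and the summed descent lemma with stepsize balancing for the convex gradient-norm bound---and derives the $\mathcal{O}(D^2)$ distance bounds by unrolling the same recursions under the tuned stepsizes. The only cosmetic difference is the finite-population factor $(1-\tfrac{S-1}{N-1})$ versus your $(1-\tfrac{S}{N})$, which are equivalent up to constants.
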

\begin{lemma}[Update variance]
    \label{lemma:mbsgd-variance}
    For \mbsgd, we have that 
    \begin{align}
        \E_r \|g^{(r)}\|^2 \leq \|\nabla F(x^{(r)})\|^2 + \frac{\sigma^2}{SK} + (1 - \frac{S - 1}{N - 1})\frac{\zeta^2}{S}
    \end{align}
\end{lemma}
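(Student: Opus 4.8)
The plan is to apply a bias--variance decomposition to $g^{(r)}$ and then split the resulting variance into a within-client stochastic-gradient contribution and a client-sampling contribution. Fix the round and write $x = x^{(r)}$, $g = g^{(r)}$, suppressing conditioning on the history. First I would check that $g$ is unbiased for $\nabla F(x)$: each $g_i = \frac{1}{K}\sum_{k=0}^{K-1}\nabla f(x;z_{i,k})$ satisfies $\E g_i = \nabla F_i(x)$ by \cref{asm:uniform_variance}, and since $\mathcal{S}$ is a uniform sample of size $S$ drawn without replacement, $\E[\frac1S\sum_{i\in\mathcal{S}}\nabla F_i(x)] = \frac1N\sum_{i=1}^N \nabla F_i(x) = \nabla F(x)$. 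Hence $\E\|g\|^2 = \|\nabla F(x)\|^2 + \E\|g - \nabla F(x)\|^2$, and it remains to bound the variance term by $\frac{\sigma^2}{SK} + (1 - \frac{S-1}{N-1})\frac{\zeta^2}{S}$.

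For the variance, write $g_i = \nabla F_i(x) + \xi_i$ with $\xi_i := g_i - \nabla F_i(x)$, so that
$$g - \nabla F(x) = \underbrace{\tfrac1S\textstyle\sum_{i\in\mathcal{S}}\xi_i}_{\text{data noise}} + \underbrace{\Big(\tfrac1S\textstyle\sum_{i\in\mathcal{S}}\nabla F_i(x) - \nabla F(x)\Big)}_{\text{sampling noise}}.$$
Conditioning on $\mathcal{S}$, the $\xi_i$ are zero-mean and mutually independent, so the cross term between the two pieces vanishes in expectation (the data-noise piece has conditional mean zero while the sampling-noise piece is $\mathcal{S}$-measurable) and the two contributions add. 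The data-noise piece I would bound by conditioning on $\mathcal{S}$, using independence of the $z_{i,k}$ to kill cross terms and using that an average of $K$ i.i.d. samples with per-sample variance at most $\sigma^2$ has variance at most $\sigma^2/K$; this gives $\E\|\frac1S\sum_{i\in\mathcal{S}}\xi_i\|^2 \le \frac1{S^2}\cdot S\cdot \frac{\sigma^2}{K} = \frac{\sigma^2}{SK}$.

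The sampling-noise piece is the variance of a sample average taken without replacement from the population $\{\nabla F_i(x)\}_{i=1}^N$, which has mean $\nabla F(x)$. I would evaluate it by writing $\frac1S\sum_{i\in\mathcal{S}}\nabla F_i(x) = \frac1S\sum_{i=1}^N \1\{i\in\mathcal{S}\}\nabla F_i(x)$ and expanding the second moment through the inclusion probabilities $\Pr[i\in\mathcal{S}] = S/N$ and $\Pr[i,j\in\mathcal{S}] = \frac{S(S-1)}{N(N-1)}$ for $i\ne j$. This produces the finite-population-correction identity $\E\|\frac1S\sum_{i\in\mathcal{S}}\nabla F_i(x) - \nabla F(x)\|^2 = \frac1S(1 - \frac{S-1}{N-1})\cdot\frac1N\sum_{i=1}^N\|\nabla F_i(x) - \nabla F(x)\|^2$. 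Bounding the population variance $\frac1N\sum_i\|\nabla F_i(x)-\nabla F(x)\|^2 \le \max_i\|\nabla F_i(x)-\nabla F(x)\|^2 \le \zeta^2$ via \cref{asm:grad_het} yields the $(1-\frac{S-1}{N-1})\frac{\zeta^2}{S}$ term, and summing the three pieces finishes the bound.

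I expect the main obstacle to be the sampling-noise computation: obtaining the exact correction factor $1 - \frac{S-1}{N-1}$ (rather than the looser $1 - S/N$ that arises from sampling with replacement) hinges on the pair inclusion probability $\frac{S(S-1)}{N(N-1)}$ and on carefully separating the diagonal ($i=j$) and off-diagonal ($i\ne j$) contributions when expanding the squared norm. Everything else is a routine application of unbiasedness and independence.
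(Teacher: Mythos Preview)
Your proposal is correct and follows essentially the same approach as the paper: a bias--variance decomposition $\E_r\|g^{(r)}\|^2 = \|\nabla F(x^{(r)})\|^2 + \E_r\|g^{(r)}-\nabla F(x^{(r)})\|^2$, followed by splitting the variance into a stochastic-gradient piece (bounded by $\sigma^2/(SK)$ via independence) and a client-sampling piece (bounded by $(1-\tfrac{S-1}{N-1})\zeta^2/S$ via the finite-population correction for sampling without replacement). The paper packages the variance bound into a separate technical lemma but the argument is the same; your derivation of the $\tfrac{N-S}{N-1}$ factor through the inclusion probabilities is exactly the standard justification underlying that lemma.
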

\begin{proof}
    Observing that 
    \begin{align}
        \E_r \|g^{(r)}\|^2 = \E_r\|g^{(r)} - \nabla F(x^{(r)})\|^2 + \|\nabla F(x^{(r)})\|^2
    \end{align}
    and using \cref{lemma:minibatchvariance}, we have 
    \begin{align}
        \E_r\|g^{(r)} - \nabla F(x^{(r)})\|^2 \leq \frac{\sigma^2}{SK} + (1 - \frac{S - 1}{N-1})\frac{\zeta^2}{S}
    \end{align}
\end{proof}
\subsubsection{Convergence of \mbsgd for Strongly Convex functions}
\begin{proof}
    Using the update of \mbsgd,
    \begin{align}
        \|x^{(r+1)} - x^*\|^2 \leq \|x^{(r)} - x^*\|^2 - 2 \eta \langle g^{(r)}, x^{(r)} - x^* \rangle + \eta^2 \|g^{(r)}\|^2
    \end{align}
    Taking expectation up to the r-th round
    \begin{align}
        \E_r \|x^{(r+1)} - x^*\|^2 \leq \|x^{(r)} - x^*\|^2 - 2 \eta \langle \nabla F(x^{(r)}), x^{(r)} - x^* \rangle + \eta^2 \E_r\|g^{(r)}\|^2
    \end{align}
    By \cref{asm:strongconvex},
    \begin{align}
        - 2 \eta \langle \nabla F(x^{(r)}), x^{(r)} - x^* \rangle \leq - 2 \eta (F(x^{(r)}) - F(x^*)) - \eta \mu \|x^{(r)} - x^*\|^2
    \end{align}
    Using \cref{lemma:mbsgd-variance}, \cref{asm:smooth}, and setting $\eta \leq \frac{1}{2\beta}$,
    \begin{align}
        &\E_r \|x^{(r+1)} - x^*\|^2 \\
        &\leq (1 - \eta \mu)\|x^{(r)} - x^*\|^2 - \eta (F(x^{(r)}) - F(x^*))+ \eta^2(\frac{\sigma^2}{SK}  + (1 - \frac{S - 1}{N-1})\frac{\zeta^2}{S}) 
    \end{align}
    Rearranging and taking full expectation,
    \begin{align}
        \E F(x^{(r)}) - F(x^*) \leq \frac{(1 - \eta \mu)\E \|x^{(r)} - x^*\|^2 - \E \|x^{(r + 1)} - x^*\|^2 }{\eta} + \eta(\frac{\sigma^2}{SK}  + (1 - \frac{S - 1}{N-1})\frac{\zeta^2}{S})
    \end{align}
    letting $w_r = (1 - \mu \eta)^{1 - (r + 1)}$, $W_R = \sum_{r = 0}^{R}$, $\hat{x} = \frac{1}{W_R} \sum_{r=0}^{R} x^{(r)}$, then
    \begin{align}
        \E F(\hat{x}) - F(x^*) \leq \frac{\E \|x^{(0)} - x^*\|^2}{\eta W_R} + \eta(\frac{\sigma^2}{SK}  + (1 - \frac{S - 1}{N-1})\frac{\zeta^2}{S}) 
    \end{align}
    From \citep{karimireddy2020scaffold} Lemma 1, if $R > \kappa$ and $\eta = \min\{\frac{1}{2 \beta}, \frac{\log(\max(1, \mu^2 R D^2/c))}{\mu R}\}$ where $c = \frac{\sigma^2}{NK}  + (1 - \frac{S - 1}{N-1})\frac{\zeta^2}{S}$, 
    \begin{align}
        \E F(\hat{x}) - F(x^*) \leq \tilde{\mathcal{O}}(\Delta \exp(- \frac{R}{\kappa}) + \frac{\sigma^2}{\mu S K R} + (1 - \frac{S}{N}) \frac{\zeta^2}{\mu SR})
    \end{align}
    Which finishes our work on the convergence rate.
    Next we consider the distance bound.  Returning to the recurrence
    \begin{align}
        &\E_r \|x^{(r+1)} - x^*\|^2 \\
        &\leq (1 - \eta \mu)\|x^{(r)} - x^*\|^2 - \eta (F(x^{(r)}) - F(x^*))+ \eta^2(\frac{\sigma^2}{SK}  + (1 - \frac{S - 1}{N-1})\frac{\zeta^2}{S}) 
    \end{align}
    Taking full expectation and unrolling,
    \begin{align}
        &\E\|x^{(r)} - x^*\|^2 \\
        &\leq \E \|x^{(0)} - x^*\|^2 \exp(-\eta \mu r) + \frac{\eta}{\mu}(\frac{\sigma^2}{SK}  + (1 - \frac{S - 1}{N-1})\frac{\zeta^2}{S})
    \end{align}
    Note that \citet{karimireddy2020scaffold} chose $\eta$ so that $\eta(\frac{\sigma^2}{SK} + (1 - \frac{S - 1}{N-1})\frac{\zeta^2}{S}) \leq \order(\mu D^2 \exp(-\mu \eta R))$
    And so 
    \begin{align}
        \E \|x^{(r)} - x^*\|^2 \leq \order(D^2)
    \end{align}
\end{proof}
\subsubsection{Convergence of \mbsgd for General Convex functions}
\begin{proof}
    By smoothness \pcref{asm:smooth}, we have that 
    \begin{align}
        F(\x{r + 1}) - F(\x{r}) \leq -\eta \langle \nabla F(\x{r}), g^{(r)} \rangle + \frac{\beta \eta^2}{2} \|g^{(r)}\|^2
    \end{align}
    Taking expectation conditioned up to $r$ and using \cref{lemma:mbsgd-variance},
    \begin{align}
        \E_r F(\x{r + 1}) - F(\x{r}) \leq -(\eta - \frac{\beta \eta^2}{2}) \|\nabla F(\x{r})\|^2 + \frac{\beta \eta^2}{2}(\frac{\sigma^2}{SK} + (1 - \frac{S-1}{N-1})\frac{\zeta^2}{S})
    \end{align}
    If $\eta \leq \frac{1}{\beta}$,
    \begin{align}
        \E_r F(\x{r + 1}) - F(\x{r}) \leq -\frac{\eta}{2} \|\nabla F(\x{r})\|^2 + \frac{\beta \eta^2}{2}(\frac{\sigma^2}{SK} + (1 - \frac{S-1}{N-1})\frac{\zeta^2}{S})
    \end{align}
    Taking full expectation and rearranging,
    \begin{align}
        \frac{1}{2} \E \|\nabla F(\x{r})\|^2 \leq \frac{\E F(\x{r+1}) - \E F(\x{r})}{\eta} + \frac{\beta \eta}{2}(\frac{\sigma^2}{SK} + (1 - \frac{S-1}{N-1})\frac{\zeta^2}{S})
    \end{align}
    Summing both sides over $r$ and averaging,
    \begin{align}
        \frac{1}{2} \frac{1}{R} \sum_{r = 0}^{R-1} \E \|\nabla F(\x{r})\|^2 \leq \frac{\E F(\x{0}) - \E F(\x{R})}{\eta R} + \frac{\beta \eta}{2}(\frac{\sigma^2}{SK} + (1 - \frac{S-1}{N-1})\frac{\zeta^2}{S})
    \end{align}
    Letting $\hat{x}$ be an average of all the $\x{r}$'s, noting that $\E F(\x{0}) - \E F(\x{R}) \leq \Delta$, $\Delta \leq \beta D^2$, and choosing $\eta = \min \{\frac{1}{\beta}, (\frac{\Delta }{\beta c R})^{1/2} \}$ where $c = \frac{\sigma^2}{SK} + (1 - \frac{S-1}{N-1})\frac{\zeta^2}{S}$,
    \begin{align}
        \E \|\nabla F(\hat{x})\|^2 \leq \order(\frac{\beta \Delta}{R} + \frac{\beta \sigma D}{\sqrt{SKR}} + \sqrt{1 - \frac{S}{N}}\frac{\beta \zeta D}{\sqrt{SR}})
    \end{align}
    So we have our convergence rate.  Next, for the distance bound,
    \begin{align}
        \|x^{(r+1)} - x^*\|^2 = \|x^{(r)} - x^*\|^2 - 2 \eta \langle g^{(r)}, x^{(r)} - x^* \rangle + \eta^2 \|g^{(r)}\|^2
    \end{align}
    Taking expectation up to $r$, we know from \cref{lemma:mbsgd-variance},
    \begin{align}
        &\E_r \|x^{(r+1)} - x^*\|^2 \\
        &\leq \|x^{(r)} - x^*\|^2 - 2 \eta \langle g^{(r)}, x^{(r)} - x^* \rangle +  \eta^2\|\nabla F(x^{(r)})\|^2 + \eta^2(\frac{\sigma^2}{SK} + (1 - \frac{S}{N})\frac{\zeta^2}{S})
    \end{align}
    By \cref{asm:convex} and \cref{asm:smooth},
    \begin{align}
        &\E_r \|x^{(r+1)} - x^*\|^2 \\
        &\leq \|x^{(r)} - x^*\|^2 - 2 \eta(1 - \beta \eta)(F(x^{(r)}) - F(x^*))+ \eta^2(\frac{\sigma^2}{SK} + (1 - \frac{S}{N})\frac{\zeta^2}{S})
    \end{align}
    And with $\eta \leq \frac{1}{\beta}$,
    \begin{align}
        \E_r \|x^{(r+1)} - x^*\|^2 \leq \|x^{(r)} - x^*\|^2 + \eta^2(\frac{\sigma^2}{SK} + (1 - \frac{S}{N})\frac{\zeta^2}{S})
    \end{align}
    Unrolling and taking full expectation,
    \begin{align}
        \E \|x^{(r)} - x^*\|^2 \leq \E\|x^{(0)} - x^*\|^2 + \eta^2 R (\frac{\sigma^2}{SK} + (1 - \frac{S}{N})\frac{\zeta^2}{S})
    \end{align}
    We chose $\eta$ so that $\beta \eta (\frac{\sigma^2}{SK} + (1 - \frac{S}{N})\frac{\zeta^2}{S}) \leq \frac{\Delta}{\eta R}$.  Therefore
    \begin{align}
        \E \|x^{(r)} - x^*\|^2 \leq \E \|x^{(0)} - x^*\|^2 + \frac{\Delta}{\beta} \leq 2 D^2
    \end{align}
\end{proof}
\subsubsection{Convergence of \mbsgd under the PL-condition}
\begin{proof}
    Using \cref{asm:smooth} we have that 
    \begin{align}
        F(x^{(r + 1)}) - F(x^{(r)}) \leq -\eta \langle \nabla F(x^{(r)}), g^{(r)} \rangle + \frac{\beta \eta^2}{2} \|g^{(r)}\|^2
    \end{align}
    Taking expectations of both sides conditioned on the $r$-th step,
    \begin{align}
        \E_r F(x^{(r + 1)}) - F(x^{(r)}) \leq -\eta \|F(x^{(r)})\|^2 + \frac{\beta \eta^2}{2} \E_r \|g^{(r)}\|^2
    \end{align}
    Using \cref{lemma:mbsgd-variance},
    \begin{align}
        \E_r F(x^{(r + 1)}) - F(x^{(r)}) \leq -\frac{\eta}{2} \|F(x^{(r)})\|^2 + \frac{\beta \eta^2}{2}(\frac{\sigma^2}{SK} + (1 - \frac{S-1}{N-1})\frac{\zeta^2}{S})
    \end{align}
    Using \cref{asm:pl}, we have that
    \begin{align}
        \E_r F(x^{(r + 1)}) - F(x^{(r)}) \leq -\eta \mu(F(x^{(r)}) - F(x^*)) + \frac{\beta \eta^2}{2}(\frac{\sigma^2}{SK} + (1 - \frac{S-1}{N-1})\frac{\zeta^2}{S})
    \end{align}
    Rearranging,
    \begin{align}
        \E_r F(x^{(r + 1)}) - F(x^*) \leq (1 - \eta \mu)(F(x^{(r)}) - F(x^*)) + \frac{\beta \eta^2}{2}(\frac{\sigma^2}{SK} + (1 - \frac{S-1}{N-1})\frac{\zeta^2}{S})
    \end{align}
    Letting $c = \frac{\sigma^2}{SK} + (1 - \frac{S-1}{N-1})\frac{\zeta^2}{S}$ and subtracting $\frac{\beta c \eta }{2 \mu}$ from both sides,
    \begin{align}
        \E_r F(x^{(r + 1)}) - F(x^*) - \frac{\beta c \eta}{2 \mu}\leq (1 - \eta \mu)(F(x^{(r)}) - F(x^*) - \frac{\beta c\eta}{2 \mu})
    \end{align}
    Which, upon unrolling the recursion, gives us
    \begin{align}
        \E F(x^{(R)}) - F(x^*)  \leq \Delta \exp(-\eta \mu R) + \frac{\beta c \eta}{2 \mu}
    \end{align}
    Now, if we choose stepsize 
    \begin{align}
        \eta = \min \{ \frac{1}{\beta}, \frac{\log(\max\{e, \frac{\mu^2 \Delta R}{\beta c}\})}{\mu R} \}
    \end{align}
    Then the final convergence rate is 
    \begin{align}
        \E F(x^{(R)}) - F(x^*) \leq \tilde{\mathcal{O}}(\Delta \exp(-\frac{R}{\kappa}) + \frac{\kappa \sigma^2}{\mu S K R} + (1 - \frac{S}{N})\frac{\kappa \zeta^2}{\mu S R})
    \end{align}
\end{proof}
\subsection{\asg}
The precise form of accelerated stochastic gradient we run is AC-SA\citep{ghadimi2012optimal,ghadimi2013optimal}.  The following specification is taken from \citet{woodworth2020minibatch}: 

We run \cref{algo:acsa} for $R_s$ iterations using $x^{(0)} = p_{s-1}$, $\{\alpha_t\}_{t \geq 1}$ and $\{\gamma_t\}_{t \geq 1}$, with definitions
\begin{itemize}
    \item $R_s = \lceil \max \{ 4 \sqrt{\frac{4 \beta}{\mu}}, \frac{128 c}{3 \mu \Delta 2^{-(s+1)} }\} \rceil $
    \item $\alpha_r = \frac{2}{r+1}$, $\gamma_r = \frac{4 \phi_s}{r (r+1)}$,
    \item $\phi_s = \max \{ 2 \beta, [ \frac{\mu c}{3 \Delta 2^{-(s-1)}R_s (R_s + 1)(R_s + 2) }]^{1/2} \}$
\end{itemize}
Where $c = \frac{\sigma^2}{SK} + (1 - \frac{S-1}{N-1}) \frac{\zeta^2}{S}$.  Set $p_s = x^{(R_s)}_{\text{ag}}$ where $x^{(R_s)}_{\text{ag}}$ is the solution obtained in the previous step.
\begin{theorem}
    \label{thm:asg-formal}
    Suppose that client objectives $F_i$'s and their gradient queries satisfy \cref{asm:smooth} and \cref{asm:uniform_variance}.  Then running \cref{algo:acsa} gives the following for returned iterate $\hat{x}$: 
    \begin{itemize}
        \item \textbf{Strongly convex:} $F_i$'s satisfy \cref{asm:strongconvex} for $\mu > 0$.  If we return $\hat{x} = x_{\text{ag}}^{(R_s)}$ after $R$ rounds of the multistage AC-SA specified above,
        \begin{align*}
            \E F(\hat{x}) - F(x^*) \leq \mathcal{O}(\Delta \exp(-\frac{R}{\sqrt{\kappa}}) + \frac{ \sigma^2}{\mu S K R}+ (1 - \frac{S}{N}) \frac{\zeta^2}{\mu SR})
        \end{align*}
        \begin{align*}
            \E \|\hat{x} - x^*\|^2 \leq \mathcal{O}(\kappa \E \|\x{0} - x^*\|^2 \exp( - \frac{R}{\sqrt{\kappa}})+ \frac{ \sigma^2}{\mu^2 S K R}+ (1 - \frac{S}{N}) \frac{\zeta^2}{\mu^2 SR})
        \end{align*}
        And given no randomness, for any $0 \leq r \leq R$
        \begin{align*}
            \|\x{r} - x^*\|^2 \leq \|\x{0} - x^*\|^2 
        \end{align*}
        \begin{align*}
            \|\x{r}_{\text{ag}} - x^*\|^2 \leq \|\x{0} - x^*\|^2 
        \end{align*}
        and for any $1 \leq r \leq R$
        \begin{align*}
            \|\x{r}_{\text{md}} - x^*\|^2 \leq \|\x{0} - x^*\|^2 
        \end{align*}
        
        \item \textbf{General convex (grad norm):} $F_i$'s satisfy \cref{asm:convex}.  If we return $\hat{x} = x_{\text{ag}}^{(R_s)}$ after $R$ rounds of the multistage AC-SA specified above on the regularized objective $F_\mu(x) = F(x) + \frac{\mu}{2} \|\x{0} - x\|^2$ with $\mu = \max\{ \frac{2 \beta}{R^2} \log^2(e^2 + R^2), \sqrt{\frac{\beta \sigma^2}{\Delta SKR}}, \sqrt{\frac{(1 - \frac{S}{N}) \beta \zeta^2}{\Delta S R}} \}$, we have
        \begin{align*}
            \E \|\nabla F(\hat{x})\|^2 &\leq \order(\frac{\beta \Delta}{R^2}  + \frac{ \sigma^2}{S K R} + (1 - \frac{S}{N}) \frac{\zeta^2}{SR} + \frac{\beta \sigma D}{\sqrt{SKR}} + \sqrt{ 1- \frac{S}{N}}\frac{\zeta D}{\sqrt{S R}})
        \end{align*}
        \begin{align*}
            \E\|\hat{x} - x^*\|^2 \leq \order(D^2)
        \end{align*}
    \end{itemize}
\end{theorem}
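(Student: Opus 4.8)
The plan is to build on the standard convergence theory for accelerated stochastic approximation (AC-SA) of \citet{ghadimi2012optimal, ghadimi2013optimal}, treating the minibatch gradient $g^{(r)}$ as an unbiased stochastic oracle with controlled variance. First I would establish the update-variance bound: since the $S$ sampled clients each average $K$ i.i.d.\ queries, the same computation as in \cref{lemma:mbsgd-variance} gives $\E_r\|g^{(r)} - \nabla F(\x{r})\|^2 \leq c$ with $c = \frac{\sigma^2}{SK} + (1 - \frac{S-1}{N-1})\frac{\zeta^2}{S}$. This is the only place where \cref{asm:uniform_variance} and the heterogeneity enter; from here the argument is purely about optimizing a $\beta$-smooth (and, in the first case, $\mu$-strongly convex) function with a variance-$c$ oracle.

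For the strongly convex function-value bound, I would invoke the single-stage AC-SA guarantee, which for a stage of length $R_s$ with $\alpha_r = 2/(r+1)$ and $\gamma_r = 4\phi_s/(r(r+1))$ yields an accelerated bias term of order $\phi_s\|\x{0}-x^*\|^2/R_s^2$ plus a variance term of order $c R_s/\phi_s$. The multistage schedule is designed so that each stage reduces the suboptimality gap by a constant factor while $R_s = \mathcal{O}(\sqrt{\kappa} + c/(\mu\Delta 2^{-s}))$. Summing the per-stage round counts and telescoping the geometric reduction shows the bias contracts as $\Delta\exp(-R/\sqrt{\kappa})$ — the accelerated rate, since each halving costs only $\mathcal{O}(\sqrt{\kappa})$ rounds — while the accumulated variance settles at $\mathcal{O}(c/(\mu R)) = \mathcal{O}(\sigma^2/(\mu SKR) + (1-S/N)\zeta^2/(\mu SR))$. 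The expected distance bound then follows from $\frac{\mu}{2}\E\|\hat x - x^*\|^2 \leq \E F(\hat x) - F(x^*)$: dividing the function bound by $\mu/2$ produces the extra $1/\mu$ in the variance terms (matching the $\mu^2$ denominators), and bounding $\Delta \leq \frac{\beta}{2}\|\x{0}-x^*\|^2$ by smoothness turns the leading $\frac{2}{\mu}\Delta$ into $\kappa\|\x{0}-x^*\|^2$, yielding the stated $\kappa$ prefactor on the exponential.

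The main obstacle I anticipate is the trio of \emph{deterministic} per-iterate distance bounds $\|\x{r}-x^*\|^2,\ \|\x{r}_{\text{ag}}-x^*\|^2,\ \|\x{r}_{\text{md}}-x^*\|^2 \leq \|\x{0}-x^*\|^2$, needed to certify the distance-conserving property (\cref{def:dist-conserve}) used in the lower bound; unlike the function rate, these cannot be read off from the black-box guarantee and require opening up the recursion. I would prove them by simultaneous induction on $r$ in the noiseless setting. The iterate $\x{r}_{\text{md}}$ is, by its defining formula, a combination of $\x{r-1}_{\text{ag}}$ and $\x{r-1}$ with coefficients $\frac{(1-\alpha_r)(\mu+\gamma_r)}{\gamma_r+(1-\alpha_r^2)\mu}$ and $\frac{\alpha_r[(1-\alpha_r)\mu+\gamma_r]}{\gamma_r+(1-\alpha_r^2)\mu}$; the delicate point is to verify that these are nonnegative and sum to one for the chosen schedule, so that $\x{r}_{\text{md}}$ is a genuine convex combination and stays in the ball of radius $\|\x{0}-x^*\|$ by the inductive hypothesis and convexity of $\|\cdot - x^*\|^2$. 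Likewise $\x{r}_{\text{ag}} = \alpha_r\x{r} + (1-\alpha_r)\x{r-1}_{\text{ag}}$ is a convex combination since $\alpha_r = 2/(r+1) \in (0,1]$. The hardest step is the prox iterate $\x{r}$, which solves a strongly convex subproblem built from the gradient at $\x{r}_{\text{md}}$ and quadratic penalties around $\x{r}_{\text{md}}$ and $\x{r-1}$; I would use its first-order optimality condition together with convexity of $F$ to show the minimizer cannot be pushed farther from $x^*$ than its reference points, closing the induction.

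Finally, for the general convex case I would reduce to the strongly convex analysis by running AC-SA on the regularized objective $F_\mu(x) = F(x) + \frac{\mu}{2}\|\x{0}-x\|^2$, which is $\mu$-strongly convex and $(\beta+\mu)$-smooth. Applying the first part to $F_\mu$ and paying the regularization bias $\frac{\mu}{2}D^2$ bounds $F(\hat x) - F(x^*)$; the choice of $\mu$ in the statement balances this bias against the accelerated bias $\beta\Delta/R^2$ and the two variance terms. The gradient-norm guarantee then follows from the standard smoothness inequality $\|\nabla F(\hat x)\|^2 \leq 2\beta(F(\hat x)-F(x^*))$ for convex $\beta$-smooth $F$, and the $\mathcal{O}(D^2)$ distance bound follows by applying the deterministic and stochastic distance bounds of the strongly convex case to $F_\mu$, whose minimizer lies within $\mathcal{O}(D)$ of $x^*$.
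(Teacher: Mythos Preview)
Your plan for the strongly-convex function-value rate, the expected distance bound on $\hat x$ via strong convexity, and the convex-combination arguments for $\x{r}_{\text{ag}}$ and $\x{r}_{\text{md}}$ all match the paper's approach. For the general-convex case you take a slightly different (and arguably cleaner) route: you bound $F(\hat x)-F(x^*)$ first via the regularized problem and then convert with $\|\nabla F(\hat x)\|^2 \le 2\beta(F(\hat x)-F(x^*))$, whereas the paper splits $\nabla F(\hat x) = \nabla F_\mu(\hat x) - \mu(\hat x - \x{0})$ and bounds the two pieces separately. Both decompositions produce the stated rate after plugging in the prescribed $\mu$.

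The one genuine gap is in your treatment of the deterministic per-iterate bound $\|\x{r}-x^*\|^2 \le \|\x{0}-x^*\|^2$. You propose to obtain it inside a simultaneous induction, using the first-order optimality condition of the prox subproblem to show that $\x{r}$ ``cannot be pushed farther from $x^*$ than its reference points''. This one-step contraction is precisely what accelerated schemes do \emph{not} enjoy: the linearized gradient term in the prox objective can drive $\x{r}$ outside the ball containing $\x{r-1}$ and $\x{r}_{\text{md}}$, and the usual smoothness/convexity manipulations produce a residual $\|\x{r}-\x{r}_{\text{md}}\|^2$ term that is only absorbed through the $x_{\text{ag}}$ sequence, not step-by-step. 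The paper instead lifts this bound out of the \emph{global} AC-SA Lyapunov inequality (Proposition~5 of \citet{ghadimi2012optimal}): in the noiseless case one has
\[
F(x_{\text{ag}}^{(r)}) - F(x^*) + \mu\|\x{r}-x^*\|^2 \;\le\; \Gamma_r \sum_{\tau=1}^r \tfrac{\gamma_\tau}{\Gamma_\tau}\bigl[\|\x{\tau-1}-x^*\|^2 - \|\x{\tau}-x^*\|^2\bigr],
\]
and the crucial observation is that for the specified schedule $\gamma_\tau/\Gamma_\tau \equiv \gamma_1$ is constant, so the sum telescopes to $\gamma_1\bigl[\|\x{0}-x^*\|^2 - \|\x{r}-x^*\|^2\bigr]$. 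Dropping the nonnegative function gap and rearranging yields $(\mu+\Gamma_r\gamma_1)\|\x{r}-x^*\|^2 \le \Gamma_r\gamma_1\|\x{0}-x^*\|^2$, hence $\|\x{r}-x^*\| \le \|\x{0}-x^*\|$ for every $r$. Only \emph{after} this global bound is in hand does the paper run the convex-combination induction for $\x{r}_{\text{ag}}$ and then $\x{r}_{\text{md}}$; the order matters, and your simultaneous induction would need to be reorganized around this telescoping argument rather than a local prox-step bound.
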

\subsubsection{Convergence of \asg for Strongly Convex Functions}
\begin{proof}
    The convergence rate proof comes from \citet{woodworth2020minibatch} Lemma 5.  What remains is to show the distance bound.

    From Proposition 5, (Eq. 3.25, 4.25) of \citet{ghadimi2012optimal}, we have that the generic (non-multistage) AC-SA satisfies (together with \cref{lemma:minibatchvariance}) 
    \begin{align}
        &\E F(x_{\text{ag}}^{(r)}) + \mu \E \|\x{r} - x^*\|^2 - F(x) \\
        &\leq \Gamma_r \sum_{\tau = 1}^r \frac{\gamma_\tau}{\Gamma_\tau}[\E \|\x{\tau - 1} - x^*\|^2 - \E \|\x{\tau} - x^*\|^2] + \Gamma_r \frac{4 r}{\mu}(\frac{\sigma^2}{SK} + (1 - \frac{S}{N})\frac{\zeta^2}{S})
    \end{align}
    Where $\Gamma_r = \begin{cases}
        1, \text{    } r = 1 \\
        (1 - \alpha_r) \Gamma_{r - 1} \text{   } r \geq 2
    \end{cases}$
    Notice that 
    \begin{align}
        \Gamma_r \sum_{\tau = 1}^r \frac{\gamma_\tau}{\Gamma_\tau}[\E \|\x{\tau - 1} - x^*\|^2 - \E \|\x{\tau} - x^*\|^2] = \Gamma_t \gamma_1 [\|\x{0} - x^*\|^2 - \|\x{r} - x^*\|^2]
    \end{align}
    So 
    \begin{align}
        (\mu + \Gamma_t \gamma_1)\E \|\x{r} - x^*\|^2 \leq \Gamma_t \gamma_1 \E \|\x{0} - x^*\|^2 + \Gamma_r \frac{4 r}{\mu}(\frac{\sigma^2}{SK} + (1 - \frac{S}{N})\frac{\zeta^2}{S}) 
    \end{align}
    Which implies
    \begin{align}
        \E \|\x{r} - x^*\|^2 \leq \E \|\x{0} - x^*\|^2 + \Gamma_r \frac{4 r}{\mu^2}(\frac{\sigma^2}{SK} + (1 - \frac{S}{N})\frac{\zeta^2}{S}) 
    \end{align}
    Furthermore, $\Gamma_r = \frac{2}{r(r+1)}$ so
    \begin{align}
        \E \|\x{r} - x^*\|^2 \leq \E \|\x{0} - x^*\|^2 + \frac{8 }{\mu^2 (r + 1)}(\frac{\sigma^2}{SK} + (1 - \frac{S}{N})\frac{\zeta^2}{S}) 
    \end{align}
    If there is no gradient variance or sampling,
    \begin{align}
        \|\x{r} - x^*\|^2 \leq \|\x{0} - x^*\|^2
    \end{align}
    Next, we show the above two conclusions hold for $x^{(r)}_{\text{md}}$ and $x^{(r)}_{\text{ag}}$.  

    For $x^{(r)}_{\text{ag}}$, we show by induction:

    \textbf{Base case:} $x^{(r)}_{\text{ag}} = \x{0}$, so it is true

    \textbf{Inductive case:} $x^{(r)}_{\text{ag}}$ is a convex combination of $\x{r}$ and $\x{r-1}_{\text{ag}}$, so the above statements hold for $\x{r}_{\text{ag}}$ as well.

    For $x^{(r)}_{\text{md}}$, note it is a convex combination of $\x{r-1}_{\text{ag}}$ and $\x{r-1}$, so the above statements on distance also hold for $x^{(r)}_{\text{md}}$.

    For the distance bound on the returned solution, use strong convexity on the convergence rate:
    \begin{align}
        \E \|\hat{x} - x^*\|^2 \leq \mathcal{O}(\kappa \E\|\x{0} - x^*\|^2 \exp(-\frac{R}{\sqrt{\kappa}}) + \frac{c}{\mu^2 R})
    \end{align}
    
\end{proof}
\subsubsection{Convergence of \asg for General Convex Functions}
For the general convex case, we use Nesterov smoothing.  Concretely, we will run \cref{algo:acsa} assuming strong convexity by optimizing instead a modified objective 
\begin{align}
    F_\mu(x) = F(x) + \frac{\mu}{2} \|\x{0} - x\|^2
\end{align}
Define $x^*_\mu = \argmin_x F_\mu(x)$ and $\Delta_\mu = \E F_\mu(\x{0}) - F(x^*_\mu)$.  We will choose $\mu$ carefully to balance the error introduced by the regularization term and the better convergence properties of having larger $\mu$.
\begin{proof}
    Observe that 
    \begin{align}
        \E \| \nabla F (\hat{x}) \|^2 =\E \|\nabla F_\mu (\hat{x}) - \mu (\hat{x} - x^{(0)})\|^2 \leq 2\E \|\nabla F_\mu (\hat{x})\|^2 + 2\mu^2 \E \| \hat{x} - x^{(0)} \|^2
    \end{align}
    We evaluate the second term first.  We know that from the theorem statement on strongly convex functions and letting $\kappa' = \frac{\beta + \mu}{\mu}$ and $\kappa = \frac{\beta}{\mu}$ because $F$ is now $\beta + \mu$-smooth,
    \begin{align}
        \E F_\mu(\hat{x}) - F_\mu(x^*_\mu) \leq \mathcal{O}((\E F_\mu(\x{0}) - F_\mu(x^*_\mu)) \exp(-\frac{R}{\sqrt{\kappa'}}) + \frac{ \sigma^2}{\mu S K R}+ (1 - \frac{S}{N}) \frac{\zeta^2}{\mu SR})
    \end{align}
    Which implies
    \begin{align}
        \E F_\mu(\hat{x}) &\leq  \mathcal{O}(\E F_\mu(\x{0}) + \frac{ \sigma^2}{\mu S K R}+ (1 - \frac{S}{N}) \frac{\zeta^2}{\mu SR}) \\
        &= \mathcal{O}(\E F(\x{0}) + \frac{ \sigma^2}{\mu S K R}+ (1 - \frac{S}{N}) \frac{\zeta^2}{\mu SR})
    \end{align}
    So it is true that 
    \begin{align}
        \E F_\mu(\hat{x}) &= \E [F (\hat{x}) + \frac{\mu}{2} \|\hat{x} - \x{0}\|^2] \\
        &= \mathcal{O}(\E F(\x{0}) + \frac{ \sigma^2}{\mu S K R}+ (1 - \frac{S}{N}) \frac{\zeta^2}{\mu SR})
    \end{align}
    If we rearrange,
    \begin{align}
         \frac{\mu}{2} \E \|\hat{x} - \x{0}\|^2 &\leq \mathcal{O}(\E F(\x{0}) - F(\hat{x})+ \frac{ \sigma^2}{\mu S K R}+ (1 - \frac{S}{N}) \frac{\zeta^2}{\mu SR}) \\
         &\leq \mathcal{O}(\E F(\x{0}) - F(x^*)+ \frac{ \sigma^2}{\mu S K R}+ (1 - \frac{S}{N}) \frac{\zeta^2}{\mu SR}) \\
         &\leq \mathcal{O}(\Delta + \frac{ \sigma^2}{\mu S K R}+ (1 - \frac{S}{N}) \frac{\zeta^2}{\mu SR})
    \end{align}
    Next we evaluate $2\E \|\nabla F_\mu (\hat{x})\|^2$.  Observe that the smoothness of $F_\mu$ is $\beta + \mu$.  Returning to the convergence rate,
    \begin{align}
        \E F_\mu(\hat{x}) - F_\mu(x^*_\mu) \leq \mathcal{O}(\Delta_\mu \exp(-\frac{R}{\sqrt{\kappa'}}) + \frac{ \sigma^2}{\mu S K R}+ (1 - \frac{S}{N}) \frac{\zeta^2}{\mu SR})
    \end{align}
    We have that 
    \begin{align}
        F_\mu (x^{(0)}) - F_\mu(x^*_\mu) = F(x^{(0)}) - F(x^*_\mu) - \frac{\mu}{2} \|x^{(0)} - x_\mu^*\|^2 \leq F(x^{(0)}) - F(x^*)
    \end{align}
    So $\Delta_\mu \leq \Delta$.
    By \cref{asm:smooth},   
    \begin{align}
        \E \|\nabla F_\mu(\hat{x})\|^2 &\leq (\beta + \mu)\mathcal{O}(\Delta \exp(-\frac{R}{\sqrt{\kappa'}}) + \frac{ \sigma^2}{\mu S K R}+ (1 - \frac{S}{N}) \frac{\zeta^2}{\mu SR}) \\
        &\leq \mathcal{O}(\beta \Delta \exp(-\frac{R}{\sqrt{\kappa'}}) + \mu \Delta + \frac{ \beta \sigma^2}{\mu S K R} + \frac{ \sigma^2}{ S K R} + (1 - \frac{S}{N}) \frac{\zeta^2}{SR} + (1 - \frac{S}{N}) \frac{\beta \zeta^2}{\mu SR})
    \end{align} 
    So altogether,
    \begin{align}
        \E \|\nabla F(\hat{x})\|^2 &\leq \mathcal{O}(\beta \Delta \exp(-\frac{R}{\sqrt{\kappa'}})  + (1 + \kappa)\frac{ \sigma^2}{S K R} + (1 + \kappa)(1 - \frac{S}{N}) \frac{\zeta^2}{SR} + \mu \Delta)
    \end{align}
    Choose $\mu = \max\{ \frac{2 \beta}{R^2} \log^2(e^2 + R^2), \sqrt{\frac{\beta \sigma^2}{\Delta SKR}}, \sqrt{\frac{(1 - \frac{S}{N}) \beta \zeta^2}{\Delta S R}} \}$.  By Theorem E.1's proof in \citet{yuan2020federatedac},
    \begin{align}
        \E \|\nabla F(\hat{x})\|^2 &\leq \order(\frac{\beta \Delta}{R^2}  + \frac{ \sigma^2}{S K R} + (1 - \frac{S}{N}) \frac{\zeta^2}{SR} + \frac{\beta \sigma D}{\sqrt{SKR}} + \sqrt{ 1- \frac{S}{N}}\frac{\zeta D}{\sqrt{S R}})
    \end{align}
    Next we evaluate the distance bound for the returned iterate.  Observe that from the distance bound in the strongly convex case,
    \begin{align}
        \E \|\hat{x} - x^*_\mu\|^2 \leq \mathcal{O}(\frac{\beta + \mu}{\mu}\E \|\x{0} - x^*_\mu\|^2 \exp( - \frac{R}{\sqrt{\kappa}}) + \frac{\sigma^2}{\mu^2 S K R} + (1 - \frac{S}{N})\frac{\zeta^2}{\mu^2 S R})
    \end{align}
    Given that $\mu = \max\{ \frac{2 \beta}{R^2} \log^2(e^2 + R^2), \sqrt{\frac{\beta \sigma^2}{\Delta SKR}}, \sqrt{\frac{(1 - \frac{S}{N}) \beta \zeta^2}{\Delta S R}} \}$,
    \begin{align}
        \E \|\hat{x} - x^*_\mu\|^2 &\leq \mathcal{O}(\E \|\x{0} - x^*_\mu\|^2 + \|\hat{x} - x^*_\mu\|^2) \\
        &\leq \order(\E \|\x{0} - x^*_\mu\|^2)
    \end{align}
    Now we look at the actual distance we want to bound:
    \begin{align}
        \E\|\hat{x} - x^*\|^2 &\leq \order(\E \|\hat{x} - x^*_\mu\|^2 + \E\|x^*_\mu - \x{0}\|^2 + \E\|x^* - \x{0}\|^2) \\
    \end{align}
    Observe that
    \begin{align}
        F_\mu(x_\mu^*) = F(x_\mu^*) + \|x^{(0)} - x_\mu^*\|^2 \leq F(x^*) + \|x^{(0)} - x^*\|^2 = F_\mu(x^*)
    \end{align}
    which implies that 
    \begin{align}
        \|x^{(0)} - x_\mu^*\|^2 \leq \|x^{(0)} - x^*\|^2
    \end{align}
    So altogether
    \begin{align}
        \E\|\hat{x} - x^*\|^2 \leq \order(D^2)
    \end{align}
\end{proof}
\subsection{\saga}
\begin{theorem}
    \label{thm:saga-formal}
    Suppose that client objectives $F_i$'s and their gradient queries satisfy \cref{asm:smooth} and \cref{asm:uniform_variance}.  Then running \cref{algo:saga} gives the following for returned iterate $\hat{x}$: 
    \begin{itemize}
        \item \textbf{Strongly convex:} $F_i$'s satisfy \cref{asm:strongconvex} for $\mu > 0$.  If we return $\hat{x} = \frac{1}{W_R} \sum_{r=0}^R w_r x^{(r)}$ with $w_r = (1 - \eta \mu)^{1 - (r+1)}$ and $W_R = \sum_{r=0}^R w_r$, $\eta = \tilde{\mathcal{O}}(\min \{ \frac{1}{\beta},\frac{1}{\mu R}\})$, $R > \kappa$, and we use Option I,
        \begin{align*}
            \E F(\hat{x}) - F(x^*) \leq \tilde{\mathcal{O}}(\Delta \exp(-\min \{\frac{\mu}{\beta}, \frac{S}{N}\}  R) + \frac{\sigma^2}{\mu R K S})
        \end{align*}
        \item \textbf{PL condition:} $F_i$'s satisfy \cref{asm:pl}.  If we set $\eta = \frac{1}{3 \beta (N/S)^{2/3}}$, use Option II in a multistage manner (details specified in proof), and return a uniformly sampled iterate from the final stage, 
        \begin{align}
            \E F(\hat{x}) - F(x^* ) \leq  \mathcal{O}(\Delta \exp(-\frac{R}{\kappa (N/S)^{2/3}}) + \frac{\sigma^2}{\mu SK})
        \end{align}
        
    \end{itemize}
\end{theorem}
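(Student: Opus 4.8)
The plan is to treat \cref{algo:saga} as federated variance reduction over the client finite sum $F = \frac1N\sum_i F_i$, in which the control variates $c_i^{(r)}$ progressively learn $\nabla F_i(x^*)$ and thereby cancel the sampling-induced heterogeneity that forces the $(1-S/N)\zeta^2/(SR)$ term in \mbsgd. The first step, shared by both cases, is to show $\E[g^{(r)}] = \nabla F(\x{r})$ --- immediate from $c^{(r)} = \frac1N\sum_i c_i^{(r)}$ and the unbiasedness in \cref{asm:uniform_variance} --- and then to bound its variance. Writing $v_i := \nabla F_i(\x{r}) - c_i^{(r)}$, the without-replacement sampling variance of $\frac1S\sum_{i\in\mathcal{S}_r} v_i$ about its mean $\frac1N\sum_i v_i$, together with the per-client oracle variance $\sigma^2/K$, gives $\E\|g^{(r)} - \nabla F(\x{r})\|^2 \lesssim \frac{\sigma^2}{SK} + \frac1S\cdot\frac1N\sum_i\|\nabla F_i(\x{r}) - c_i^{(r)}\|^2$. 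Splitting the last sum through $\nabla F_i(x^*)$ and applying $\beta$-smoothness (\cref{asm:smooth}) replaces it by $\mathcal{O}(\beta^2\|\x{r}-x^*\|^2 + H^{(r)})$, where $H^{(r)} := \frac1N\sum_i\|c_i^{(r)} - \nabla F_i(x^*)\|^2$ is the gradient-learning potential. Crucially no $\zeta$ appears: heterogeneity enters only through $H^{(r)}$, which the control variates drive to zero, which is why \cref{asm:grad_het} is absent from the hypotheses.

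For the strongly convex case I would run a Lyapunov argument with $T^{(r)} := \|\x{r}-x^*\|^2 + c\,\eta^2 H^{(r)}$ for a suitable constant $c$, paralleling the \mbsgd proof in \cref{thm:sgd-formal}. The distance recursion uses strong convexity (\cref{asm:strongconvex}) for the $-\eta\mu\|\x{r}-x^*\|^2$ contraction and a $-\eta(F(\x{r})-F(x^*))$ term, while the variance bound contributes $\eta^2(\sigma^2/(SK) + \mathcal{O}((\beta^2\|\x{r}-x^*\|^2 + H^{(r)})/S))$. Under Option I each client refreshes with probability $S/N$, so $\E H^{(r+1)} \le (1-S/N)H^{(r)} + (S/N)(\mathcal{O}(\beta^2\|\x{r}-x^*\|^2) + \sigma^2/K)$. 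Adding the two recursions with $c$ tuned to absorb the $H^{(r)}$ feedback and $\eta$ capped near $1/\beta$, the Lyapunov function contracts per round by $1 - \Omega(\min\{\eta\mu, S/N\})$, so the effective rate is $\min\{\mu/\beta, S/N\}$ once the refresh rate $S/N$ rather than $\eta\mu$ becomes the bottleneck. Feeding this into the weighted average $\hat x = \frac{1}{W_R}\sum_r w_r\x{r}$ with $w_r = (1-\eta\mu)^{1-(r+1)}$ and tuning $\eta = \tilde{\mathcal{O}}(\min\{1/\beta, 1/(\mu R)\})$ exactly as in \cref{thm:sgd-formal} telescopes the function-gap terms and yields $\E F(\hat x) - F(x^*) \le \tilde{\mathcal{O}}(\Delta\exp(-\min\{\mu/\beta, S/N\}R) + \sigma^2/(\mu RKS))$, with no surviving $\zeta^2$ term.

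The PL case needs a different argument, since \cref{asm:pl} gives neither a unique minimizer nor a distance contraction. Here I would use Option II (an independent resample refreshing the stored points $\phi_i$) together with a multistage schedule: partition the $R$ rounds into stages and, within a stage, track the nonconvex potential $F(\x{r}) - F(x^*) + \frac{c_r}{N}\sum_i\|\x{r} - \phi_i^{(r)}\|^2$ in the style of nonconvex SAGA. Smoothness gives a descent inequality, and with $\eta = 1/(3\beta(N/S)^{2/3})$ the variance-reduced step produces a net potential decrease of order $\eta\|\nabla F(\x{r})\|^2$; the exponent $(N/S)^{2/3}$ is precisely the balance point between the stepsize and the $S/N$ refresh rate of the stale stored gradients, mirroring the $n^{2/3}$ scaling of nonconvex SVRG/SAGA. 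Summing the descent across a stage and invoking PL to replace $\|\nabla F(\x{r})\|^2$ by $2\mu(F(\x{r})-F(x^*))$ turns the per-stage progress into geometric decay of the function gap at rate $\exp(-R/(\kappa(N/S)^{2/3}))$; returning a uniformly sampled iterate from the final stage controls the residual $\sigma^2/(\mu SK)$ floor.

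The main obstacle in the strongly convex part is choosing the constant $c$ and the stepsize cap so that the coupled $(\|\x{r}-x^*\|^2, H^{(r)})$ system contracts at $\min\{\eta\mu, S/N\}$ rather than at the product of the two individual rates: the $\beta^2\|\x{r}-x^*\|^2$ feedback from the variance into $H^{(r)}$ and back must be dominated without shrinking $\eta$ below $1/\beta$. For the PL part the crux is the multistage bookkeeping of the stale stored gradients under Option II --- establishing a valid nonconvex descent/Lyapunov inequality whose per-stage contraction, after applying PL, matches the claimed $(N/S)^{2/3}$-degraded exponent while keeping the noise term at $\sigma^2/(\mu SK)$.
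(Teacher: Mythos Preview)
Your overall plan matches the paper closely: a coupled Lyapunov $\|\x{r}-x^*\|^2 + c\,\eta^2 H^{(r)}$ in the strongly convex case (the paper takes $c=9N/S$), and the Reddi--et--al.\ nonconvex-SAGA potential $F(\x{r}) + \frac{c_r}{N}\sum_i\|\x{r}-\phi_i^{(r)}\|^2$ followed by a multistage PL reduction in the second case. The PL sketch is essentially what the paper does.

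In the strongly convex case there is one genuine gap. You bound $\frac{1}{N}\sum_i\|\nabla F_i(\x{r})-\nabla F_i(x^*)\|^2$ by $\beta^2\|\x{r}-x^*\|^2$ using Lipschitz gradients alone. With that bound, the $\beta^2\|\x{r}-x^*\|^2$ feedback---both directly from the variance and through the $H^{(r)}$ recursion, where after choosing $c\sim N/S$ it reappears with coefficient $c\,\eta^2\,(S/N)\,\beta^2\sim\eta^2\beta^2$---can only be absorbed into the $-\eta\mu\|\x{r}-x^*\|^2$ contraction. That forces $\eta\lesssim\mu/\beta^2$, hence a contraction of order $\eta\mu\lesssim 1/\kappa^2$ rather than $1/\kappa$. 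You flag this as ``the main obstacle'' and treat it as a tuning problem; it is not solvable by tuning $c$ or capping $\eta$ near $1/\beta$.

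The paper sidesteps it by invoking convexity together with smoothness: the inequality $\|\nabla F_i(x)-\nabla F_i(x^*)\|^2 \le 2\beta\bigl(F_i(x)-F_i(x^*)-\langle\nabla F_i(x^*),x-x^*\rangle\bigr)$, averaged over $i$ so the inner product vanishes, yields $\frac{1}{N}\sum_i\|\nabla F_i(\x{r})-\nabla F_i(x^*)\|^2 \le 2\beta\bigl(F(\x{r})-F(x^*)\bigr)$. Now the feedback lands on the \emph{function-gap} term and is absorbed by $-2\eta(F(\x{r})-F(x^*))$ with only $\eta\lesssim 1/\beta$. The paper makes the same substitution in the control-lag recursion (its $\mathcal{C}_r$, your $H^{(r)}$), getting $\mathcal{C}_{r+1}\le(1-S/N)\mathcal{C}_r+\frac{2\beta S}{N}(F(\x{r})-F(x^*))$. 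With these two replacements your Lyapunov argument goes through exactly as the paper's does, under the caps $\eta\le 1/(26\beta)$ and $\eta\le S/(9\mu N)$, and delivers the claimed $\min\{\mu/\beta,\,S/N\}$ rate.
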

\subsubsection{Convergence of \saga for Strongly Convex functions}
The proof of this is similar to that of \citet[Theorem VII]{karimireddy2020scaffold}.
\begin{proof}
    Following the standard analysis,
    \begin{align}
        &\E_{r} \|\x{r+1} - x^*\|^2 \\
        &= \|\x{r} - x^*\|^2 -  2 \eta\E_r \langle g^{(r)}, \x{r} - x^* \rangle + \E_r \|\eta g^{(r)} \|^2 \\
    \end{align}
    We treat each of these terms separately.
    
\textbf{Second term:}
    Observe first that 
    \begin{align}
        \E_r g^{(r)} &= \E_r(\frac{1}{S} \sum_{i \in \mathcal{S}_r} [\frac{1}{K}\sum_{k =0}^{K-1} \nabla f(\x{r}; z_{i,k}^{(r)}) - c_i^{(r)}] + c^{(r)}) \\
        &= \nabla F(\x{r})
    \end{align}
    And so the middle term has, by (strong) convexity,
    \begin{align}
        -  2 \eta\E_r \langle g^{(r)}, \x{r} - x^* \rangle &= - 2 \eta \langle \nabla F(\x{r}), \x{r} - x^* \rangle \\
        &\leq-2 \eta (F(\x{r}) - F(x^*) + \frac{\mu}{2} \|\x{r} - x^*\|^2)
    \end{align}
\textbf{Third term:}
\begin{align}
    &\E_r \|\eta g^{(r)}\|^2 \\
    &= \E_r \| \eta (\frac{1}{S} \sum_{i \in \mathcal{S}_r} [\frac{1}{K}\sum_{k =0}^{K-1} \nabla f(\x{r}; z_{i,k}^{(r)}) - c_i^{(r)}] + c^{(r)})\|^2 \\
    &= \eta^2 \E_r \|\frac{1}{KS} \sum_{k, i \in \mathcal{S}_r} \nabla f(\x{r}; z_{i,k}^{(r)}) - c_i^{(r)} + c^{(r)}\|^2 \\
    &\leq 4 \eta^2 \E_r \|\frac{1}{KS} \sum_{k, i \in \mathcal{S}_r} \nabla f(\x{r}; z_{i,k}^{(r)}) - \nabla F_i(\x{r})\|^2 + 4 \eta^2 \E_r \|c^{(r)}\|^2 \\
    & \ \ \ \ + 4 \eta^2 \E_r \|\frac{1}{KS} \sum_{k, i \in \mathcal{S}_r} \nabla F_i(x^*) - c_i^{(r)}\|^2 + 4 \eta^2 \E_r \|\frac{1}{KS} \sum_{k, i \in \mathcal{S}_r} \nabla F_i(\x{r}) - \nabla F_i(x^*)\|^2 \\
\end{align}
Where the last inequality comes from the relaxed triangle inequality.  Then, by using \cref{asm:uniform_variance}, \cref{asm:smooth}, and \cref{asm:convex},
\begin{align}
    &\E_r \|\eta g^{(r)}\|^2 \\
    &\leq 4 \eta^2 \E_r \|c^{(r)}\|^2  + 4 \eta^2 \E_r \|\frac{1}{KS} \sum_{k, i \in \mathcal{S}_r} \nabla F_i(x^*) - c_i^{(r)}\|^2 + 8 \beta \eta^2 [F(\x{r}) - F(x^*)] + \frac{4 \eta^2 \sigma^2}{KS}
\end{align}
Taking full expectation and separating out the variance of the control variates,
\begin{align}
    &\E \|\eta g^{(r)}\|^2 \\
    &\leq 4 \eta^2 \|\E c^{(r)}\|^2  + 4 \eta^2 \|\frac{1}{KS} \sum_{k, i \in \mathcal{S}_r} \nabla F_i(x^*) - \E c_i^{(r)}\|^2 + 8 \beta \eta^2 \E[F(\x{r}) - F(x^*)] + \frac{12 \eta^2 \sigma^2}{KS}
\end{align}
Now observe that because $c^{(r)} = \frac{1}{N} \sum_{i =1}^N c_i^{(r)}$,
\begin{align}
    \E \|\eta g^{(r)}\|^2 
    &\leq \frac{8 \eta^2}{N} \sum_{i=1}^N \| \nabla F_i(x^*) - \E c_i^{(r)}\|^2 + 8 \beta \eta^2 \E[F(\x{r}) - F(x^*)] + \frac{12 \eta^2 \sigma^2}{KS} \\
    &\leq 8 \eta^2\mathcal{C}_r + 8 \beta \eta^2 \E[F(\x{r}) - F(x^*)] + \frac{12 \eta^2 \sigma^2}{KS} 
\end{align}

\textbf{Bounding the control lag: }

Recall that 
\begin{align}
    c_i^{(r+1)} = \begin{cases}
        c_i^{(r)} \text{ w.p. } 1 - \frac{S}{N}\\
        \frac{1}{K} \sum_{k=0}^{K-1} \nabla f(\x{r}; z_{i,k}^{(r)}) \text{ w.p. } \frac{S}{N}
    \end{cases}
\end{align}
Therefore
\begin{align}
    \E c_i^{(r+1)} = (1 - \frac{S}{N}) \E[c_i^{(r)}] + \frac{S}{N} \E \nabla F_i(\x{r})
\end{align}
Returning to the definition of $\mathcal{C}_{r + 1}$,
\begin{align}
    \mathcal{C}_{r+1} &= \frac{1}{N} \sum_{i=1}^N \E \| \E[c_i^{(r+1)}] - \nabla F_i(x^*)\|^2 \\
    &= \frac{1}{N} \sum_{i=1}^N \E \| (1 - \frac{S}{N}) (\E[c_i^{(r)}] - \nabla F_i(x^*)) + \frac{S}{N} (\E \nabla F_i(\x{r}) - \nabla F_i(x^*))\|^2 \\
    &\leq (1 - \frac{S}{N}) \mathcal{C}_r + \frac{S}{N^2} \sum_{i=1}^N \E \|\nabla F_i(\x{r}) - \nabla F_i(x^*)\|^2
\end{align}
where in the last inequality we applied Jensen's inequality twice.  By smoothness of $F_i$'s \pcref{asm:smooth},
\begin{align}
    \mathcal{C}_{r+1} \leq (1 - \frac{S}{N}) \mathcal{C}_r + \frac{2\beta S}{N}[\E F(\x{r}) - F(x^*)]
\end{align}
\textbf{Putting it together: }

So putting it all together, we have
\begin{align}
    &\E \|\x{r+1} - x^*\|^2 \\
    &\leq (1 - \eta \mu)\E \|\x{r} - x^*\|^2 -\eta(2 - 8 \beta \eta) (\E F(\x{r}) - F(x^*)) + 8 \eta^2\mathcal{C}_r + \frac{12 \eta^2 \sigma^2}{KS} 
\end{align}
From our bound on the control lag,
\begin{align}
    \frac{9 \eta^2 N}{S} \mathcal{C}_{r+1} &\leq \frac{9 \eta^2 N}{S} (1 - \frac{S}{N}) \mathcal{C}_r + 18\beta \eta^2[\E F(\x{r}) - F(x^*)] \\
    &= (1 - \eta \mu)\frac{9\eta^2 N}{S}\mathcal{C}_r + 9\eta^2 (\frac{\eta \mu N}{S} - 1)\mathcal{C}_r + 18\beta \eta^2[\E F(\x{r}) - F(x^*)]
\end{align}
Adding both inequalities, we have 
\begin{align}
    &\E \|\x{r+1} - x^*\|^2 + \frac{9 \eta^2 N}{S} \mathcal{C}_{r+1} \\
    &\leq (1 - \eta \mu)[\E \|\x{r} - x^*\|^2 + \frac{9\eta^2 N}{S}\mathcal{C}_r] - \eta (2 - 26 \beta \eta)(\E F(\x{r}) - F(x^*)) \\
    &+ \eta^2 (\frac{9 \eta \mu N}{S} - 1) \mathcal{C}_r + \frac{12 \eta^2 \sigma^2}{KS}
\end{align}
Let $\eta \leq \frac{1}{26 \beta}$, $\eta \leq \frac{S}{9 \mu N}$, then 
\begin{align}
    \label{eq:saga-recurrence}
    &\E \|\x{r+1} - x^*\|^2 + \frac{9 \eta^2 N}{S} \mathcal{C}_{r+1} \\
    &\leq (1 - \eta \mu)[\E \|\x{r} - x^*\|^2 + \frac{9\eta^2 N}{S}\mathcal{C}_r] - \eta (\E F(\x{r}) - F(x^*))  + \frac{12 \eta^2 \sigma^2}{KS}
\end{align}
Then by using Lemma 1 in \citep{karimireddy2020scaffold}, setting $\eta_{\text{max}} = \min \{\frac{1}{26 \beta}, \frac{S}{9 \mu N}\}$, $R \geq \frac{1}{2 \eta_\text{max} \mu}$, and choosing $\eta = \min \{\frac{\log(\max(1, \mu^2 R d_0/c))}{\mu R}, \eta_{\max} \}$ where $d_0 = \E \|\x{0} - x^*\|^2 + \frac{9 N \eta^2}{S} \mathcal{C}_0$ and $c = \frac{12 \sigma^2}{KS}$, we have that by outputting $\hat{x} = \frac{1}{W_R} \sum_{r=0}^{R} w_r \x{r}$ with $W_R = \sum_{r=0}^{R} w_r$ and $w_r = (1 - \mu \eta)^{1-r}$, we have

\begin{align}
    \E F(\hat{x}) - F(x^*) &\leq \tilde{\mathcal{O}}(\mu d_0 \exp(-\mu \eta_{\max} R) + \frac{c}{\mu R}) \\
    &= \tilde{\mathcal{O}}(\mu d_0 \exp(-\mu \eta_{\max} R) + \frac{c}{\mu R}) \\
    &= \tilde{\mathcal{O}}(\mu [\E \|\x{0} - x^*\|^2 + \frac{N \eta^2}{S} \mathcal{C}_0] \exp(-\mu \eta_{\max} R) + \frac{\sigma^2}{\mu R K S})
\end{align}

We use a warm-start strategy to initialize all control variates in the first $N/S$ rounds such that \[c_i^{(0)} = \frac{1}{K} \sum_{k=0}^{K-1} \nabla f(\x{0}; z_{i,k}^{(-1)})\]

By smoothness of $F_i$'s \pcref{asm:smooth},
\begin{align}
    \mathcal{C}_0 &= \frac{1}{N} \sum_{i=1}^N \E \|\E c_i^{(0)} - \nabla F_i(x^*)\|^2 \\
    &=\frac{1}{N} \sum_{i=1}^N \E \|\nabla F_i(\x{0}) - \nabla F_i(x^*)\|^2 \\
    &\leq  \beta \E(F(\x{0}) - F(x^*))
\end{align}
And recalling that $\eta \leq \min \{\frac{1}{26 \beta}, \frac{S}{9 \mu N}\}$, we know that 
\begin{align}
    \frac{9N \eta^2}{S} \mathcal{C}_0 &\leq \frac{9N \eta^2}{S} \beta \E (F(\x{0}) - F(x^*)) \leq \frac{\eta \beta}{\mu}(F(\x{0}) - F(x^*)) \leq \frac{1}{\mu} \Delta  
\end{align}
So altogether,
\begin{align}
    \E F(\hat{x}) - F(x^*) &\leq \tilde{\mathcal{O}}(\Delta \exp(-\mu \eta_{\max} R) + \frac{\sigma^2}{\mu R K S})
\end{align}
\end{proof}
\subsubsection{Convergence of \saga under the PL condition}
This proof follows \citet{reddi2016fast}.
\begin{proof}
    We start with \cref{asm:smooth}
    \begin{align}
        \E F(\x{r+1}) \leq \E [F(\x{r}) + \langle \nabla F(\x{r}), \x{r+1} - \x{t} \rangle + \frac{\beta}{2}\|\x{r+1} - \x{r}\|^2]
    \end{align}
    Using the fact that $g^{(r)}$ is unbiased,
    \begin{align}
        \E F(\x{r+1}) \leq \E[F(\x{r}) - \eta \|\nabla F(\x{r})\|^2 + \frac{\beta \eta^2}{2} \|g^{(r)}\|^2]
    \end{align}
    Now we consider the Lyapunov function 
    \begin{align}
        L_r = \E[F(\x{r}) + \frac{c_r}{N} \sum_{i=1}^N \|\x{r} - \phi_i^{(r)}\|^2]
    \end{align}
    We bound $L_{r+1}$ using 
    \begin{align}
        &\frac{1}{N}\sum_{i=1}^N \E\|\x{r+1} - \phi_i^{(r+1)}\|^2 \\
        &= \frac{1}{N}\sum_{i=1}^N [\frac{S}{N} \E\|\x{r+1} - \x{r}\|^2 + \frac{N - S}{N} \E \|\x{r+1} - \phi_i^{(r)}\|^2]
    \end{align}
    Where the equality comes from how $\phi_i^{(r+1)} = x^{(r)}$ with probability $S/N$ and is $\phi_i^{(r)}$ otherwise.
    Observe that we can bound
    \begin{align}
        &\E \|\x{r+1} - \phi_i^{(r)}\|^2 \\
        &= \E[\|\x{r+1} - \x{r}\|^2 + \|\x{r} - \phi_i^{(r)}\|^2 + 2 \langle \x{r+1} - \x{r}, \x{r} - \phi_i^{(r)}\rangle] \\
        &\leq \E[\|\x{r+1} - \x{r}\|^2 + \|\x{r} - \phi_i^{(r)}\|^2] + 2 \eta \E [\frac{1}{2 b} \|\nabla F(\x{r})\|^2 + \frac{1}{2}b \|\x{r} - \phi_i^{(r)}\|^2]
    \end{align}
    Where we used unbiasedness of $g^{(r)}$ and Fenchel-Young inequality.  Plugging this into $L_{r+1}$,
    \begin{align}
        L_{r+1} &\leq \E[F(\x{r}) - \eta \|\nabla F(\x{r})\|^2 + \frac{\beta \eta^2}{2} \|g^{(r)}\|^2] \\
        &+ \E[c_{r+1} \|x^{(r+1)} - \x{r}\|^2 + c_{r+1} \frac{N-S}{N^2} \sum_{i=1}^N \|\x{r} - \phi_i^{(r)}\|^2] \\
        &+ \frac{2(N-S) c_{r+1} \eta}{N^2} \sum_{i=1}^N \E[\frac{1}{2b}\|\nabla F(\x{r})\|^2 + \frac{1}{2}b \|\x{r} - \phi_i^{(r)}\|^2] \\
        &\leq \E[F(\x{r}) - (\eta - \frac{c_{r+1}\eta (N - S)}{b N})\|\nabla F(\x{r})\|^2 + (\frac{\beta \eta^2}{2} + c_{r+1} \eta^2) \E \|g^{(r)}\|^2] \\
        &+ (\frac{N-S}{N}c_{r+1} + \frac{c_{r+1} \eta b (N - S)}{N}) \frac{1}{N}\sum_{i=1}^N \E \|\x{r} - \phi_i^{(r)}\|^2
    \end{align}
    Now we must bound $\E \|g^{(r)}\|^2$.
    \begin{align}
        &\E \|(\frac{1}{S} \sum_{i \in \mathcal{S}_r} [\frac{1}{K}\sum_{k =0}^{K-1} \nabla f(\x{r}; z_{i,k}^{(r)}) - \nabla f(\phi_i^{(r)}; \tilde{z}_{i,k}^{(r)})] + \frac{1}{N K} \sum_{i=1}^N \nabla f(\phi_i^{(r)}; \tilde{z}_{i,k}^{(r)}))\|^2 \\
        &\leq 2 \E \|(\frac{1}{S} \sum_{i \in \mathcal{S}_r} [\frac{1}{K}\sum_{k =0}^{K-1} \nabla f(\x{r}; z_{i,k}^{(r)}) - \nabla f(\phi_i^{(r)}; \tilde{z}_{i,k}^{(r)})] \\
        &- \frac{1}{N K} \sum_{i=1}^N [\nabla f(\x{r};z_{i,k}^{(r)}) - \nabla f(\phi_i^{(r)}; \tilde{z}_{i,k}^{(r)}))]\|^2 + 2 \E \|\frac{1}{N K} \sum_{i=1}^N \nabla f(\x{r};z_{i,k}^{(r)})\|^2 \\
        &\leq 2 \E \| \frac{1}{S} \sum_{i \in \mathcal{S}_r} \nabla F_i(\x{r}) - \nabla F_i(\phi_i^{(r)})\|^2 + 2 \E \| \nabla F(\x{r})\|^2 + \frac{\nu \sigma^2}{SK}
    \end{align}
    Where the second to last inequality is an application of $\text{Var}(X) \leq \E [X^2]$, and separating out the variance (taking advantage of the fact that $\tilde{z}$'s and $z$'s are independent), and $\nu$ is some constant.

    We use \cref{asm:smooth} and take expectation over sampled clients to get 
    \begin{align}
        \E\|g^{(r)}\|^2 \leq \frac{2 \beta^2}{N} \sum_{i=1}^N \E \|\x{t} - \phi_i^{(r)}\|^2 + 2 \E \|\nabla F(\x{r})\|^2 + \frac{\nu \sigma^2}{SK}
    \end{align}
    Returning to our bound on $L_{r+1}$,
    \begin{align}
        L_{r+1} &\leq \E F(\x{r}) - (\eta - \frac{c_{r+1}\eta (N - S)}{b N} - \eta^2 \beta - 2 c_{r+1} \eta^2)\E \|\nabla F(\x{r})\|^2 \\
        &+ (c_{r+1}(\frac{N-S}{N} + \frac{\eta b (N - S)}{N} + 2 \eta^2 \beta^2) + \eta^2 \beta^3) \frac{1}{N}\sum_{i=1}^N \E \|\x{r} - \phi_i^{(r)}\|^2 \\
        &+ \frac{\nu \eta^2 \sigma^2}{SK}(c_{r+1} + \frac{\beta}{2})
    \end{align}
    We set $c_{r} = c_{r+1}(\frac{N - S}{N} + \frac{\eta b (N-S)}{N} + 2 \eta^2 \beta^2) + \eta^2 \beta^3$, which results in 
    \begin{align}
        &L_{r+1} \\
        &\leq L_{r} - (\eta - \frac{c_{r+1}\eta (N - S)}{b N} - \eta^2 \beta - 2 c_{r+1} \eta^2)\E \|\nabla F(\x{r})\|^2 + \frac{\nu \eta^2 \sigma^2}{SK}(c_{r+1} + \frac{\beta}{2})
    \end{align}
    Let $\Gamma_r = \eta - \frac{c_{r+1}\eta (N - S)}{b N} - \eta^2 \beta - 2 c_{r+1} \eta^2$.  Then by rearranging
    \begin{align}
        \Gamma_r \E \|\nabla F(\x{r})\|^2 \leq L_r - L_{r+1} + \frac{\nu \eta^2 \sigma^2}{SK}(c_{r+1} + \frac{\beta}{2})
    \end{align}
    Letting $\gamma_n = \min_{0 \leq r \leq R - 1} \Gamma_r$,
    \begin{align}
        \gamma_n \sum_{r=0}^{R-1} \E \|\nabla F(\x{r})\|^2 &\leq \sum_{r=0}^{R-1}\Gamma_r\E \|\nabla F(\x{r})\|^2 \\
        &\leq L_0 - L_R + \sum_{r=0}^{R-1} \frac{\nu \eta^2 \sigma^2}{SK}(c_{r+1} + \frac{\beta}{2}) 
    \end{align}
    Implying 
    \begin{align}
        \sum_{r=0}^{R-1} \E \|\nabla F(\x{r})\|^2 \leq \frac{\Delta}{\gamma_n} + \frac{1}{\gamma_n}\sum_{r=0}^{R-1} \frac{\nu \eta^2 \sigma^2}{SK}(c_{r+1} + \frac{\beta}{2})
    \end{align}
    Therefore, if we take a uniform sample from all the $x{(r)}$, denoted $\bar{x}^{R}$,
    \begin{align}
        \E \|\nabla F(\bar{x}^{(R)})\|^2 \leq \frac{\Delta}{\gamma_n R} + \frac{1}{\gamma_n R}\sum_{r=0}^{R-1} \frac{\nu \eta^2 \sigma^2}{SK}(c_{r+1} + \frac{\beta}{2})
    \end{align}
    We start by bounding $c_r$.  Take $\eta = \frac{1}{3 \beta} (\frac{S}{N})^{2/3}$ and $b = \beta (\frac{S}{N})^{1/3}$.  Let $\theta = \frac{S}{N} - \frac{\eta b (N-S)}{N} - 2 \eta^2 \beta^2$.  Observe that $\theta < 1$ and $\theta > \frac{4}{9} \frac{S}{N}$.  Then $c_r = c_{r+1} (1 - \theta) + \eta^2 \beta^3$, which implies $c_r = \eta^2 \beta^3 \frac{1 - (1 - \theta)^{R - r}}{\theta} \leq \frac{\eta^2 \beta^3}{\theta} \leq \frac{\beta}{4} (\frac{S}{N})^{1/3}$

    So we can conclude that 
    \begin{align}
        \gamma_n = \min_r (\eta - \frac{c_{r+1} \eta}{\beta} - \eta^2 \beta - 2c_{r+1} \eta^2) \geq \frac{1}{12 \beta}(\frac{S}{N})^{2/3}
    \end{align}

    So altogether,
    \begin{align}
        \E \|\nabla F(\bar{x}^{(R)})\|^2 &\leq \frac{12 \beta \Delta}{R} (\frac{N}{S})^{2/3} + \frac{\nu \eta^2 \beta \sigma^2}{SK} (\frac{1}{12 \beta (N/S)^{2/3}}) \\
        &= \frac{12 \beta \Delta}{R} (\frac{N}{S})^{2/3} + \frac{\nu  \beta \sigma^2}{SK} (12 \beta (N/S)^{2/3}) (\frac{1}{9 \beta^2 (N/S)^{4/3}}) \\
        &\leq \frac{12 \beta \Delta}{R} (\frac{N}{S})^{2/3} + \frac{2 \nu \sigma^2}{SK} 
    \end{align}
    Now we run \cref{algo:saga} in a repeated fashion, as follows:
    \begin{enumerate}
        \item Set $x^{(0)} = p_{s-1}$
        \item Run \cref{algo:saga} for $R_s$ iterations 
        \item Set $p_s$ to the result of \cref{algo:saga}
    \end{enumerate}
    Repeat for $s$ stages.  Let $p_0$ be the initial point.
    Letting $R_s = \lceil 24 \kappa (\frac{N}{S})^{2/3} \rceil$ this implies that 
    \begin{align}
        2 \mu (\E F(p_{s}) - F(x^*))\leq \E \|\nabla F(p_s)\|^2 \leq \frac{\mu(\E F(p_{s-1}) - F(x^*))}{2} + \frac{2 \nu \sigma^2}{SK}
    \end{align}
    Which gives 
    \begin{align}
        \E F(p_{s}) - F(x^*) \leq \mathcal{O}(\Delta \exp(-s) + \frac{\sigma^2}{\mu SK})
    \end{align}
    If the total number of rounds is $R$, then 
    \begin{align}
        \E F(\hat{x}) - F(x^* ) \leq  \mathcal{O}(\Delta \exp(-\frac{R}{\kappa}) + \frac{\sigma^2}{\mu SK})
    \end{align}
\end{proof}

\subsection{\ssnm}
Note that our usual assumption $F_i(x)$ is $\mu$-strongly convex can be straightforwardly converted into the assumption that our losses are $F_i(x) = \tilde{F}_i(x) + h(x)$ where $\tilde{F}_i(x)$ is merely convex and $h(x)$ is $\mu$-strongly convex (see \citep{zhou2019direct} section 4.2).
\begin{theorem}
    \label{thm:ssnm-formal}
    Suppose that client objectives $F_i$'s and their gradient queries satisfy \cref{asm:smooth} and \cref{asm:uniform_variance}.  Then running \cref{algo:ssnm} gives the following for returned iterate $\hat{x}$: 
    \begin{itemize}
        \item \textbf{Strongly convex:} $F_i$'s satisfy \cref{asm:strongconvex} for $\mu > 0$.  If we return the final iterate and set $\eta = \frac{1}{2 \mu(N/S)}, \tau = \frac{(N/S) \eta \mu}{1 + \eta \mu}$ if $\frac{(N/S)}{\kappa} > \frac{3}{4}$ and $\eta =  \sqrt{\frac{1}{3 \mu (N/S) \beta}},\tau = \frac{(N/S) \eta \mu}{1 + \eta \mu}$ if $\frac{(N/S)}{\kappa} \leq \frac{3}{4}$,
        \begin{align*}
            \E F(x^{(R)}) - F(x^*) \leq \mathcal{O}(\kappa \Delta \exp(-\min\{\frac{S}{N}, \sqrt{\frac{S}{N \kappa}}\}R) + \frac{\kappa \sigma^2}{\mu K S})
        \end{align*}
    \end{itemize}
\end{theorem}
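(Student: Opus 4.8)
The plan is to follow the potential-function (Lyapunov) analysis of SSNM from \citet{zhou2019direct}, adapted to the federated partial-participation setting with stochastic gradient oracles. First I would rewrite each client loss in composite form $F_i = \tilde F_i + h$, where $\tilde F_i$ is convex and $\beta$-smooth and $h$ is $\mu$-strongly convex, and treat the average over clients as a finite sum of $N$ terms from which $S$ are sampled each round (so the effective number of summands is $N/S$). In Algorithm~\ref{algo:ssnm} the sampled negative-momentum point $y_{i_r}^{(r)} = \tau x^{(r)} + (1-\tau)\phi_{i_r}^{(r)}$ and the SAGA-style control-variate gradient $g^{(r)}$ play the roles of the Nesterov coupling step and the variance-reduced gradient, respectively.

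The key step is to establish a one-round descent inequality for a Lyapunov function combining the function-value gap $\E[F(x^{(r)}) - F(x^*)]$, the squared distance $\E\|x^{(r)} - x^*\|^2$, and a memory-lag term $a_r\,\frac{1}{N}\sum_{i=1}^N \E\|x^{(r)} - \phi_i^{(r)}\|^2$ with an adaptively chosen coefficient $a_r$ (evolving by a backward recursion, as the coefficient $c_r$ does in the SAGA PL proof above). I would: (1) expand $\|x^{(r+1)} - x^*\|^2$ using the proximal update for $x^{(r+1)}$ and the unbiasedness $\E_r g^{(r)} = \nabla \tilde F(y)$; (2) bound the second moment $\E_r\|g^{(r)}\|^2$, where, compared to the deterministic SAGA analysis, there is an extra additive $\sigma^2/(SK)$ term arising from the $K$ stochastic gradient queries per client (via \cref{lemma:minibatchvariance} and \cref{asm:uniform_variance}); (3) use convexity of $\tilde F_i$ together with \cref{asm:smooth} to convert the gradient-mapping terms into function-value gaps; and (4) control the memory-lag term using the fact that $\phi_i$ is refreshed to $x^{(r+1)}$ with probability $S/N$ each round, exactly as in the SAGA control-lag bound derived earlier.

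Combining these and choosing $\tau = \frac{(N/S)\eta\mu}{1+\eta\mu}$ to cancel the cross terms yields a contraction $\mathcal{L}_{r+1} \le (1-\rho)\mathcal{L}_r + \frac{\sigma^2}{SK}\cdot(\text{const})$, where the rate $\rho$ depends on the stepsize. The two stepsize regimes then split the analysis: in the ill-conditioned regime $(N/S)/\kappa \le 3/4$ the accelerated choice $\eta = \sqrt{1/(3\mu(N/S)\beta)}$ gives $\rho = \Theta(\sqrt{S/(N\kappa)})$, while in the well-conditioned regime the choice $\eta = 1/(2\mu(N/S))$ gives $\rho = \Theta(S/N)$; together these produce the $\min\{S/N,\sqrt{S/(N\kappa)}\}$ exponent. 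Unrolling over $R$ rounds, the transient contracts at rate $\rho$ while the variance term settles to the noise floor $\frac{\kappa\sigma^2}{\mu SK}$; warm-starting the control variates (so that $\mathcal{C}_0 \lesssim \beta\Delta$, as in the SAGA proof) and converting the Lyapunov bound back to a pure function-value bound --- which loses a factor $\kappa$ because acceleration naturally controls $\|x^{(r)}-x^*\|^2$ rather than the gap --- gives the claimed rate.

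The hard part will be steps (2)--(3): correctly propagating the accelerated coupling through the variance-reduced estimator while accounting for both the partial-participation sampling (which re-weights the control variates by $S/N$) and the intra-client stochastic noise $\sigma^2/(SK)$. In the original SSNM analysis the gradient at the sampled component is exact, so the delicate cancellations in the Nesterov potential must be re-derived to absorb the extra variance without degrading the contraction rate; ensuring the $\min\{S/N,\sqrt{S/(N\kappa)}\}$ exponent survives this modification is the crux of the argument.
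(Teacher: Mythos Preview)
Your high-level plan --- follow \citet{zhou2019direct}, add the $\sigma^2/(SK)$ noise term, split into two parameter regimes --- matches the paper. But the specific Lyapunov function you propose is the wrong one, and with it the accelerated rate will not come out.

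You propose to track the SAGA-style memory lag $a_r\cdot\frac{1}{N}\sum_i \|x^{(r)}-\phi_i^{(r)}\|^2$ with a backward-recursive coefficient, and you describe the table update as ``$\phi_i$ is refreshed to $x^{(r+1)}$ with probability $S/N$.'' Both of these are features of the non-accelerated SAGA analysis, not of SSNM. In \cref{algo:ssnm} the refreshed table point is the convex combination $\phi_{I_r}^{(r+1)}=\tau x^{(r+1)}+(1-\tau)\phi_{I_r}^{(r)}$, and the corresponding potential term is the Bregman-type quantity
\[
B_r \;=\; \frac{1}{N}\sum_{i=1}^N\Big[F_i(\phi_i^{(r)})-F_i(x^*)-\langle\nabla F_i(x^*),\,\phi_i^{(r)}-x^*\rangle\Big],
\]
i.e.\ function values at the stored points, not squared distances to them. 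The paper's Lyapunov is $\frac{1}{\tau}B_r+\frac{S}{2\eta N}\|x^{(r)}-x^*\|^2$ with \emph{fixed} coefficients, and the contraction comes from combining convexity at $(y_{i_r}^{(r)},x^*)$, smoothness at $(\phi_{I_r}^{(r+1)},y_{I_r}^{(r)})$, and the proximal identity for $x^{(r+1)}$; the variance bound on $g^{(r)}$ is itself expressed in terms of this same Bregman gap, which is exactly what lets the $\sqrt{S/(N\kappa)}$ cancellation go through. A distance-based memory lag with a backward coefficient recursion will control the variance, but it does not couple to the negative-momentum step in the way needed for acceleration --- you would recover at best the unaccelerated $\min\{S/N,1/\kappa\}$ exponent of \saga, not the claimed $\sqrt{S/(N\kappa)}$. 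The remaining ingredients you list (the extra $\nu\sigma^2/(KS)$ from \cref{asm:uniform_variance}, the choice $\tau=\frac{(N/S)\eta\mu}{1+\eta\mu}$, the two-regime stepsize split, and the final $\kappa$ loss when passing from $\|x^{(R)}-x^*\|^2$ to $F(x^{(R)})-F(x^*)$ via smoothness) are all correct and agree with the paper.
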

\subsubsection{Convergence of \ssnm on Strongly Convex functions}
\newcommand{\ir}{i_r}
\newcommand{\yr}{y_{\ir}^{(r)}}
\newcommand{\Fr}{\tilde{F}_{\ir}}
\newcommand{\phir}{\phi_{\ir}^{(r)}}
\newcommand{\gr}{g^{(r)}}
Most of this proof follows that of \citep{zhou2019direct} Theorem 1.

First, we compute the variance of the update $g^{(r)}$.
\begin{align}
    &\E [\|g^{(r)} - \frac{1}{N} \sum_{i=1}^N \nabla \tilde{F}_i(y_{i_r}^{(r)})\|^2] \\
    &\leq \E \|\frac{1}{S} \sum_{i_r \in \mathcal{S}_r} \nabla \tilde{F}_{\ir}(y_{\ir}^{(r)}) - \nabla \tilde{F}_{\ir}(\phi_{i_r}^{(r)}) - \frac{1}{N} \sum_{i=1}^N (\nabla \tilde{F}_i(y_{i}^{(r)}) - \nabla \tilde{F}_i(\phi_{i}^{(r)}))\|^2 + \frac{\nu\sigma^2}{KS} \\
    &\leq \E \|\frac{1}{S} \sum_{i_r \in \mathcal{S}_r}\nabla \Fr(\yr) - \nabla \Fr(\phir)\|^2 + \frac{\nu\sigma^2}{KS} \\
    &\leq 2 \beta \E [\frac{1}{S} \sum_{i_r \in \mathcal{S}_r} \Fr(\phir) - \Fr(\yr) - \langle \nabla \Fr(\yr), \phir - \yr \rangle] + \frac{\nu\sigma^2}{KS} \\
    &= 2 \beta [\frac{1}{N}\sum_{i=1}^N \tilde{F}_i(\phi_i^{(r)}) - \tilde{F}_i(y_i^{(r)}) - \frac{1}{N} \sum_{i=1}^N \langle \nabla \tilde{F}_i(y_i^{(r)}), \phi_i^{(r)} - y_i^{(r)}\rangle] + \frac{\nu\sigma^2}{KS}
\end{align}
For some constant $\nu$.  In the first inequality we separated out the gradient variance, second inequality we use the fact that $\text{Var}(X) \leq \E[X^2]$, third inequality used \cref{asm:smooth}, and fourth we took expectation with respect to the sampled clients. 
From convexity we have that
\begin{align}
    &\Fr(\yr) - \Fr(x^*) \\
    &\leq \langle \nabla \Fr(\yr), \yr - x^* \rangle \\ 
    &= \frac{1 - \tau}{\tau} \langle \nabla \Fr(\yr), \phir - \yr \rangle + \langle \nabla \Fr(\yr), x^{(r)} - x^* \rangle \\
    &= \frac{1 - \tau}{\tau} \langle \nabla \Fr(\yr), \phir - \yr \rangle + \langle \nabla \Fr(\yr) - \gr, x^{(r)} - x^* \rangle \\
    &+ \langle \gr, x^{(r)} - x^{(r+1)}\rangle + \langle \gr, x^{(r+1)} - x^* \rangle
\end{align}
where the second to last inequality comes from the definition that $\yr = \tau x^{(r)} + (1 - \tau) \phir$.
Taking expectation with respect to the sampled clients, we have
\begin{align}
    \label{eq:mirror-acc}
    \frac{1}{N} \sum_{i=1}^N \tilde{F}_i(y_i^{(r)}) - \tilde{F}(x^*) &\leq \frac{1 - \tau}{\tau N} \sum_{i=1}^N \langle \nabla \tilde{F}_i(y_i^{(r)}), \phi_i^{(r)} - y_i^{(r)} \rangle + \E_{\mathcal{S}_r} \langle \gr, x^{(r)} - x^{(r+1)} \rangle \\
    &+ \E_{\mathcal{S}_r} \langle \gr, x^{(r+1)} - x^* \rangle
\end{align}
For $\E_{\mathcal{S}_r} \langle \gr, x^{(r)} - x^{(r+1)} \rangle$, we can employ smoothness at $(\phi_{I_r}^{(r+1)}, y_{I_r}^{(r)})$, which holds for any $I_r \in \mathcal{S}_r'$:
\begin{align}
    \tilde{F}_{I_r}(\phi_{I_r}^{(r+1)}) - \tilde{F}_{I_r}(y_{I_r}^{(r)}) \leq \langle \nabla \tilde{F}_{I_r}(y_{I_r}^{(r)}),  \phi_{I_r}^{(r+1)} - y_{I_r}^{(r)} \rangle + \frac{\beta}{2} \|\phi_{I_r}^{(r+1)} - y_{I_r}^{(r)}\|^2
\end{align}
using $\phi^{(r+1)}_{I_r} = \tau x^{(r + 1)} + (1 - \tau)\phi^{(r)}_{I_r}$ and $y_{I_r}^{(r)} = \tau x^{(r)} + (1 - \tau) \phi_{I_r}^{(r)}$ (though the second is never explicitly computed and only implicitly exists)
\begin{align}
    \tilde{F}_{I_r}(\phi_{I_r}^{(r+1)}) - \tilde{F}_{I_r}(y_{I_r}^{(r)}) \leq \tau \langle \nabla \tilde{F}_{I_r}(y_{I_r}^{(r)}),  x^{(r+1)} - x^{(r)} \rangle + \frac{\beta \tau^2}{2} \|x^{(r+1)} - x^{(r)}\|^2
\end{align}

Taking expectation over $\mathcal{S}_r'$ and using $\phi^{(r+1)}_{I_r} = \tau x^{(r + 1)} + (1 - \tau)\phi^{(r)}_{I_r}$ and $y_{I_r}^{(r)} = \tau x^{(r)} + (1 - \tau) \phi_{I_r}^{(r)}$  we see that 
\begin{align}
    \E_{\mathcal{S}_r'}[\frac{1}{S} \sum_{I_r \in \mathcal{S}_r'} \tilde{F}_{I_r}(\phi_{I_r}^{(r+1)})] - \frac{1}{N} \sum_{i=1}^N \tilde{F}_i(y_i^{(r)}) &\leq \tau \langle \frac{1}{N} \sum_{i=1}^N\nabla \tilde{F}_i(y_i^{(r)}), x^{(r+1)} - x^{(r)} \rangle \\
    &+ \frac{\beta \tau^2}{2} \|x^{(r+1)} - x^{(r)} \|^2
\end{align}
and rearranging, 
\begin{align}
    &\langle \gr, x^{(r)} - x^{(r+1)} \rangle \\
    &\leq \frac{1}{\tau N} \sum_{i=1}^N \tilde{F}_i(y_i^{(r)}) - \frac{1}{\tau S} \E_{\mathcal{S}_r'}[ \sum_{I_r \in \mathcal{S}_r'} \tilde{F}_{I_r}(\phi_{I_r}^{(r+1)})] \\
    &+ \langle \frac{1}{N} \sum_{i=1}^N \nabla \tilde{F}_i(y_i^{(r)}) - \gr, x^{(r+1)} - x^{(r)} \rangle + \frac{\beta \tau}{2} \|x^{(r+1)} - x^{(r)} \|^2
\end{align}
Substituting this into \cref{eq:mirror-acc} after taking expectation over $\mathcal{S}_r$, and observing that from \citep[Lemma 2]{zhou2019direct} we have identity
\begin{align}
    \langle \gr, x^{(r + 1)} - u \rangle &\leq - \frac{1}{2 \eta} \|x^{(r + 1)} - x^{(r)}\|^2 + \frac{1}{2 \eta} \|x^{(r)} - u\|^2 \\
    &- \frac{1 + \eta \mu}{2 \eta} \|x^{(r + 1)} - u\|^2 + h(u) - h(x^{(r + 1)})
\end{align}
so with $u = x^*$, we get
\begin{align}
    &\frac{1}{N} \sum_{i=1}^N \tilde{F}_i(y_i^{(r)}) - \tilde{F}(x^*) \\
    &\leq \frac{1 - \tau}{\tau N} \sum_{i=1}^N \langle \nabla \tilde{F}_i(y_i^{(r)}), \phi_i^{(r)} - y_i^{(r)} \rangle + \frac{1}{\tau N} \sum_{i=1}^N \tilde{F}_i(y_i^{(r)}) \\
    &- \frac{1}{\tau S} \E_{\mathcal{S}_r, \mathcal{S}_r'}[ \sum_{I_r \in \mathcal{S}_r'} \tilde{F}_{I_r}(\phi_{I_r}^{(r+1)})] \\
    & \ \ \ + \E_{\mathcal{S}_r }\langle \frac{1}{N} \sum_{i=1}^N \nabla \tilde{F}_i(y_i^{(r)}) - \gr, x^{(r+1)} - x^{(r)} \rangle + \frac{\beta \tau}{2} \E_{\mathcal{S}_r } \|x^{(r+1)} - x^{(r)} \|^2  \\
    & \ \ \ - \frac{1}{2 \eta} \E_{\mathcal{S}_r} \|x^{(r + 1)} - x^{(r)}\|^2 + \frac{1}{2 \eta} \|x^{(r)} - x^*\|^2 - \frac{1 + \eta \mu}{2 \eta} \E_{\mathcal{S}_r} \|x^{(r + 1)} - x^*\|^2 \\
    & \ \ \ + h(x^*) - \E_{\mathcal{S}_r} h(x^{(r + 1)})
\end{align}
We use the constraint that $\beta \tau \leq \frac{1}{\eta} - \frac{\beta \tau}{1 - \tau}$ plus Young's inequality $\langle a, b \rangle \leq \frac{1}{2 c} \|a\|^2 + \frac{c}{2} \|b\|^2$ with $c = \frac{\beta \tau}{1 - \tau}$ on $\E_{\mathcal{S}_r }\langle \frac{1}{N} \sum_{i=1}^N \nabla \tilde{F}_i(y_i^{(r)}) - \gr, x^{(r+1)} - x^{(r)} \rangle$ to get 
\begin{align}
    &\frac{1}{N} \sum_{i=1}^N \tilde{F}_i(y_i^{(r)}) - \tilde{F}(x^*) \\
    &\leq \frac{1 - \tau}{\tau N} \sum_{i=1}^N \langle \nabla \tilde{F}_i(y_i^{(r)}), \phi_i^{(r)} - y_i^{(r)} \rangle \\
    & \ \ \ + \frac{1}{\tau N} \sum_{i=1}^N \tilde{F}_i(y_i^{(r)}) - \frac{1}{\tau S} \E_{\mathcal{S}_r, \mathcal{S}_r'}[ \sum_{I_r \in \mathcal{S}_r'} \tilde{F}_{I_r}(\phi_{I_r}^{(r+1)})] \\
    & \ \ \ + \frac{1 - \tau}{2 \beta \tau} \E_{\mathcal{S}_r }\| \frac{1}{N} \sum_{i=1}^N \nabla \tilde{F}_i(y_i^{(r)}) - \gr\|^2  + \frac{1}{2 \eta} \|x^{(r)} - x^*\|^2\\
    & \ \ \ - \frac{1 + \eta \mu}{2 \eta} \E_{\mathcal{S}_r} \|x^{(r + 1)} - x^*\|^2 + h(x^*) - \E_{\mathcal{S}_r} h(x^{(r + 1)})
\end{align}
using the bound on variance, we get 
\begin{align}
    &\frac{1}{N} \sum_{i=1}^N \tilde{F}_i(y_i^{(r)}) - \tilde{F}(x^*) \\
    &\leq  \frac{1}{\tau N} \sum_{i=1}^N \tilde{F}_i(y_i^{(r)}) - \frac{1}{\tau S} \E_{\mathcal{S}_r, \mathcal{S}_r'}[ \sum_{I_r \in \mathcal{S}_r'} \tilde{F}_{I_r}(\phi_{I_r}^{(r+1)})] + \frac{1 - \tau}{\tau N} \sum_{i=1}^N \tilde{F}_i(\phi_i^{(r)}) - \tilde{F}_i(y_i^{(r)})\\
    & \ \ \  + \frac{1}{2 \eta} \|x^{(r)} - x^*\|^2 - \frac{1 + \eta \mu}{2 \eta} \E_{\mathcal{S}_r} \|x^{(r + 1)} - x^*\|^2 \\
    & \ \ \ + h(x^*) - \E_{\mathcal{S}_r} h(x^{(r + 1)}) + \frac{(1 - \tau)\nu\sigma^2}{2 \beta \tau KS}
\end{align}
combining terms,
\begin{align}
    &\frac{1}{\tau S} \E_{\mathcal{S}_r, \mathcal{S}_r'}[ \sum_{I_r \in \mathcal{S}_r'} \tilde{F}_{I_r}(\phi_{I_r}^{(r+1)})] - F(x^*) \\
    &\leq  \frac{1 - \tau}{\tau N} \sum_{i=1}^N \tilde{F}_i(\phi_i^{(r)}) + \frac{1}{2 \eta} \|x^{(r)} - x^*\|^2 - \frac{1 + \eta \mu}{2 \eta} \E_{\mathcal{S}_r} \|x^{(r + 1)} - x^*\|^2 \\
    & \ \ \ - \E_{\mathcal{S}_r} h(x^{(r + 1)}) + \frac{(1 - \tau)\nu\sigma^2}{2 \beta \tau KS}
\end{align}
Using convexity of $h$ and $\phi_{I_k}^{(r+1)} = \tau x^{(r+1)} + (1 - \tau) \phi_{I_k}^{(r)}$ for $I_k \in \mathcal{S}_r'$,
\begin{align}
    h(\phi_{I_k}^{(r+1)}) \leq \tau h(x^{(r+1)}) + (1 - \tau) h(\phi_{I_k}^{(r)})
\end{align}
After taking expectation with respect to $\mathcal{S}_r$ and $\mathcal{S}_r'$,
\begin{align}
    -\E_{\mathcal{S}_r}[h(x^{(r+1)})] \leq \frac{1 - \tau}{\tau N} \sum_{i=1}^N h(\phi_i^{(r)}) - \frac{1}{\tau} \E_{\mathcal{S}_r, \mathcal{S}_r'}[\frac{1}{S} \sum_{I_r \in \mathcal{S}_r'} h(\phi_{I_k}^{(r+1)})]
\end{align}
and plugging this back in, multiplying by $S/N$ on both sides, and adding both sides by $\frac{1}{\tau N} \E_{\mathcal{S}_r'} [\sum_{i \notin \mathcal{S}_r'} (F_i(\phi_i^{(r)}) - F_i(x^*))]$
\begin{align}
    &\frac{1}{\tau } \E_{\mathcal{S}_r, \mathcal{S}_r'}[ \frac{1}{N} \sum_{i=1}^N  F_{i}(\phi_{i}^{(r+1)}) - F_{i}(x^*)] \\
    &\leq  \frac{(1 - \tau)S}{\tau N} (\frac{1}{N} \sum_{i=1}^N F_i(\phi_i^{(r)}) - F_i(x^*)) + \frac{1}{\tau N} \E_{\mathcal{S}_r'} [\sum_{i \notin \mathcal{S}_r'} (F_i(\phi_i^{(r)}) - F_i(x^*))] \\
    & \ \ \ + \frac{S}{2 \eta N} \|x^{(r)} - x^*\|^2 - \frac{(1 + \eta \mu)S}{2 \eta N} \E_{\mathcal{S}_r} \|x^{(r + 1)} - x^*\|^2 + \frac{(1 - \tau)\nu\sigma^2}{2 \beta \tau KN}
\end{align}
Observe that the probability of choosing any client index happens with probability $S/N$, so
\begin{align}
    \frac{1}{\tau N} \E_{\mathcal{S}_r'} [\sum_{i \notin \mathcal{S}_r'} (F_i(\phi_i^{(r)}) - F_i(x^*))] = \frac{N - S}{\tau N} (\frac{1}{N} \sum_{i=1}^N F_i(\phi_i^{(r)}) - F_i(x^*))
\end{align}
which implies 
\begin{align}
    &\frac{1}{\tau } \E_{\mathcal{S}_r, \mathcal{S}_r'}[ \frac{1}{N} \sum_{i=1}^N  F_{i}(\phi_{i}^{(r+1)}) - F_{i}(x^*)] \\
    &\leq  \frac{1 - \frac{\tau S}{N}}{\tau } (\frac{1}{N} \sum_{i=1}^N F_i(\phi_i^{(r)}) - F_i(x^*)) \\
    & \ \ \ + \frac{S}{2 \eta N} \|x^{(r)} - x^*\|^2 - \frac{(1 + \eta \mu)S}{2 \eta N} \E_{\mathcal{S}_r} \|x^{(r + 1)} - x^*\|^2 + \frac{(1 - \tau)\nu\sigma^2}{2 \beta \tau KN}
\end{align}
To complete our Lyapunov function so the potential is always positive, we need another term:
\begin{align}
    &-\frac{1}{N} \sum_{i=1}^N \langle \nabla F_i(x^*), \phi_i^{(r+1)} - x^* \rangle \\
    &= -\frac{1}{N} \sum_{I_r \in \mathcal{S}_r'} \langle \nabla F_{I_r}(x^*), \phi^{(r+1)}_{I_r} - x^* \rangle - \frac{1}{N} \sum_{j \notin \mathcal{S}_r'} \langle \nabla F_j(x^*), \phi^{(r)}_{j} - x^* \rangle \\
    &= \sum_{I_r \in \mathcal{S}_r'} -\frac{\tau}{N} \langle \nabla F_{I_r}(x^*), x^{(r+1)} - x^* \rangle + \frac{\tau}{N} \langle \nabla F_{I_r}(x^*), \phi_{I_r}^{(r)} - x^* \rangle \\
    & \ \ \ - \frac{1}{N} \sum_{i=1}^N \langle \nabla F_i(x^*), \phi_i^{(r)} - x^* \rangle
\end{align}
Taking expectation with respect to $\mathcal{S}_r, \mathcal{S}_r'$,
\begin{align}
    \E_{\mathcal{S}_r, \mathcal{S}_r'} [-\frac{1}{N} \sum_{i=1}^N \langle \nabla F_i(x^*), \phi_i^{(r+1)} - x^* \rangle] = -(1 - \frac{\tau S}{N})(\frac{1}{N} \sum_{i=1}^N \langle \nabla F_i(x^*), \phi_i^{(r)} - x^* \rangle)
\end{align}
Let $B_r := \frac{1}{N} \sum_{i=1}^N F_i(\phi_i^{(r)}) - F(x^*) - \frac{1}{N} \sum_{i=1}^N \langle \nabla F_i(x^*), \phi_i^{(r)} - x^* \rangle$ and $P_r := \|x^{(r)} - x^* \|^2$, then we can write 
\begin{align}
    \frac{1}{\tau} \E_{\mathcal{S}_r, \mathcal{S}_r'}[B_{r+1}] + \frac{(1 + \eta \mu) S}{2 \eta N} \E_{\mathcal{S}_r} [P_{r+1}] \leq \frac{1 - \frac{\tau S}{N}}{\tau} B_r + \frac{S}{2 \eta N} P_r + \frac{(1 - \tau)\nu\sigma^2}{2 \beta \tau KN} 
\end{align}

\textbf{Case 1:} Suppose that $\frac{(N/S)}{\kappa } \leq \frac{3}{4}$, then choosing $\eta = \sqrt{\frac{1}{3 \mu (N/S) \beta}}$, $\tau = \frac{(N/S) \eta \mu}{1 + \eta \mu} = \frac{\sqrt{\frac{(N/S)}{3 \kappa}}}{1 + \sqrt{\frac{1}{3 (N/S) \kappa}}} < \frac{1}{2}$, we evaluate the parameter constraint $\beta \tau \leq \frac{1}{\eta} - \frac{\beta \tau}{1 - \tau}$:
\begin{align}
    \beta \tau \leq \frac{1}{\eta} - \frac{\beta \tau}{1 - \tau} \implies (1 + \frac{1}{1 - \tau}) \tau \leq \frac{1}{\beta \eta} \implies \frac{2 - \tau}{1 - \tau} \frac{\sqrt{\frac{(N/S)}{3 \kappa}}}{1 + \sqrt{\frac{1}{3 (N/S) \kappa}}} \leq \sqrt{\frac{3(N/S)}{\kappa}}
\end{align}
which shows our constraint is satisfied.  We also know that 
\begin{align}
    \frac{1}{\tau(1 + \eta \mu)} = \frac{1 - \frac{\tau S}{N}}{\tau}=\frac{1}{(N/S) \eta \mu} 
\end{align}
So we can write
\begin{align}
    &\frac{1}{(N/S) \eta \mu} \E_{\mathcal{S}_r, \mathcal{S}_r'}[B_{r+1}] + \frac{1}{2 \eta (N/S)} \E_{\mathcal{S}_r}[P_{r+1}] \\
    &\leq (1 + \eta \mu)^{-1} (\frac{1}{(N/S) \eta \mu} B_r + \frac{1}{2 \eta (N/S)} P_r) + \frac{(1 - \tau)\nu\sigma^2}{2 \beta \tau KN} 
\end{align}
Telescoping the contraction and taking expectation with respect to all randomness, we have 
\begin{align}
    &\frac{1}{(N/S) \eta \mu} \E[B_{R}] + \frac{1}{2 \eta (N/S)} \E[P_{R}] \\
    &\leq (1 + \eta \mu)^{-R} (\frac{1}{(N/S) \eta \mu} \E B_0 + \frac{1}{2 \eta (N/S)} \E P_0) + \frac{(1 - \tau)\nu\sigma^2}{2 \beta \tau KN} (\frac{1 + \eta \mu}{\eta \mu})
\end{align}
$B_0 = F(x^{(0)}) - F(x^*)$ and $\E B_R \geq 0$ based on convexity.  Next, we calculate the coefficient of the variance term.
\begin{align}
    \frac{1 - \tau}{\tau} \frac{1 + \eta \mu}{\eta \mu} &= (\frac{1}{\tau} - 1) ( 1 + \frac{1}{\eta \mu}) = (\frac{S}{N} (1 + \frac{1}{\eta \mu}) - 1)( 1 + \frac{1}{\eta \mu}) \\
    &= \mathcal{O}((\frac{S}{N}(1 + \sqrt{\kappa}(\sqrt{\frac{N}{S}})) - 1)(1 + \sqrt{\kappa}(\sqrt{\frac{N}{S}}))) \\
    &= \mathcal{O}((\frac{S}{N} + \sqrt{\kappa}(\frac{S}{N})^{1/2} - 1)(1 + \sqrt{\kappa}(\frac{N}{S})^{1/2})) \\
    &= \mathcal{O}(\kappa)
\end{align}

Substituting the parameter choices and using strong convexity,
\begin{align}
    \E\|x^{(R)} - x^*\|^2 \leq (1 + \sqrt{\frac{1}{3(N/S) \kappa}})^{-R} (\frac{2}{\mu} \Delta + D^2) + \mathcal{O}(\frac{ \sigma^2}{\mu^2 K N})
\end{align}

\textbf{Case 2:} On the other hand, we can have $\frac{(N/S)}{\kappa} > \frac{3}{4}$ and choose $\eta = \frac{1}{2\mu (N/S)}$, $\tau = \frac{(N/S) \eta \mu}{1 + \eta \mu} = \frac{(1/2)}{1 + \frac{1}{2 (N/S)}} < \frac{1}{2}$.  One can check that the constraint $\beta \tau \leq \frac{1}{\eta} - \frac{\beta \tau}{1 - \tau}$ is satisfied.  

After telescoping and taking expectation,
\begin{align}
    &2\E[B_{R}] + \frac{1}{2 \eta (N/S)} \E[P_{R}] \\
    &\leq (1 + \eta \mu)^{-R} (2 \E B_0 + \frac{1}{2 \eta (N/S)} \E P_0) + \frac{(1 - \tau)\nu\sigma^2}{2 \beta \tau KN} (\frac{1 + \eta \mu}{\eta \mu})
\end{align}
Next, we calculate the coefficient of the variance term.
\begin{align}
    \frac{1 - \tau}{\tau} \frac{1 + \eta \mu}{\eta \mu} &= (\frac{1}{\tau} - 1) ( 1 + \frac{1}{\eta \mu}) = (\frac{S}{N} (1 + \frac{1}{\eta \mu}) - 1)( 1 + \frac{1}{\eta \mu}) \\
    &= \mathcal{O}((\frac{S}{N}(1 + \frac{N}{S}) - 1)(1 + \frac{N}{S})) \\
    &= \mathcal{O}(\frac{N}{S})
\end{align}
which gives us
\begin{align}
    \E \|x^{(R)} - x^*\|^2 \leq (1 + \frac{1}{2 (N/S)})^{-R}(\frac{2}{\mu} \Delta + D^2) + \mathcal{O}(\frac{\sigma^2}{\beta \mu K S})
\end{align}
Altogether, supposing we choose our parameters as stated in the two cases, we have:
\begin{align}
    \E F(x^{(R)}) - F(x^*) \leq \mathcal{O}(\kappa \Delta \exp(-\min\{\frac{S}{N}, \sqrt{\frac{S}{N \kappa}}\}R) + \frac{\kappa \sigma^2}{\mu K S})
\end{align}
\section{Proofs for \localms}
\subsection{\lsgd}
\begin{theorem}
    \label{thm:fedavg-formal}
    Suppose that client objectives $F_i$'s and their gradient queries satisfy \cref{asm:smooth} and \cref{asm:uniform_variance}.  Then running \cref{algo:fedavg} gives the following for returned iterate $\hat{x}$: 
    \begin{itemize}
        \item \textbf{Strongly convex:} $F_i$'s satisfy \cref{asm:strongconvex} for $\mu > 0$.  If we return the final iterate, set $\eta = \frac{1}{\beta}$, and sample $S$ clients per round arbitrarily,
        \begin{align*}
            \E F(x^{(R)}) - F(x^*) \leq \mathcal{O}(\Delta \exp(-\frac{R \sqrt{K}}{\kappa}) + \frac{\zeta^2}{\mu} + \frac{\sigma^2}{\mu\sqrt{K}})
        \end{align*}
        Further, if there is no gradient variance and only one client $i$ is sampled the entire algorithm,
        \begin{align*}
            \|\x{r} - x^*\|^2 \leq \mathcal{O}(\|\x{0} - x^*\|^2 + \|\x{0} - x^*_i\|^2) \text{ for any } 0 \leq r \leq R 
        \end{align*}
        \item \textbf{General convex:} $F_i$'s satisfy \cref{asm:convex}.  If we return the last iterate after running the algorithm on $F_\mu(x) = F(x) + \frac{\mu}{2} \|\x{0} - x\|^2$ with $\mu = \Theta(\max\{ \frac{ \beta}{R^2} \log^2(e^2 + R^2), \frac{\zeta}{D},\frac{\sigma}{D K^{1/4}} \})$, and $\eta = \frac{1}{\beta + \mu}$,
        \begin{align*}
            \E F(x^{(R)}) - F(x^*) \leq \tilde{\mathcal{O}}(\frac{\beta D^2}{\sqrt{K} R} + \frac{\sigma D}{K^{1/4}} + \zeta D )
        \end{align*}
        and for any $0 \leq r \leq R$,
        \begin{align*}
            \E \|x^{(r)} - x^*\|^2 \leq \order(D^2)
        \end{align*}
        \item \textbf{PL condition:} $F_i$'s satisfy \cref{asm:pl} for $\mu > 0$.  If we return the final iterate, set $\eta = \frac{1}{\beta}$, and sample one client per round,
        \begin{align*}
            \E F(x^{(R)}) - F(x^*) \leq \mathcal{O}(\Delta \exp(-\frac{R \sqrt{K}}{\kappa}) + \frac{\zeta^2}{2 \mu} + \frac{\sigma^2}{2\mu\sqrt{K}})
        \end{align*}
    \end{itemize}
\end{theorem}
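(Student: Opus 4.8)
The plan is to handle all three regimes through a single mechanism—an averaged-iterate recursion plus a client-drift bound—reducing each case to a contraction of the type already established in the \mbsgd proof. The starting observation is that the server step in \cref{algo:fedavg} is exactly the average of the clients' terminal local iterates, $\x{r+1}=\frac{1}{S}\sum_{i\in\mathcal{S}_r}x_{i,\sqrt{K}}^{(r)}$, so FedAvg performs $\sqrt{K}$ inner SGD steps of stepsize $\eta=1/\beta$ with minibatches of size $\sqrt{K}$ (hence per-inner-step gradient variance $\sigma^2/\sqrt{K}$). I would introduce the averaged inner sequence $\bar{x}_k^{(r)}:=\frac{1}{S}\sum_{i\in\mathcal{S}_r}x_{i,k}^{(r)}$, which interpolates between $\x{r}$ and $\x{r+1}$ as $k$ runs from $0$ to $\sqrt{K}$.

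For the strongly convex case I would first prove a single-inner-step descent inequality $\E\|\bar{x}_{k+1}^{(r)}-x^*\|^2\le(1-\eta\mu)\E\|\bar{x}_k^{(r)}-x^*\|^2-\eta\,\E(F(\bar{x}_k^{(r)})-F(x^*))+(\text{errors})$, obtained by expanding the squared update and invoking \cref{asm:strongconvex,asm:smooth} on each $\nabla F_i(x_{i,k}^{(r)})$. The cross term forces two corrections: the client drift $\frac{1}{S}\sum_i\|x_{i,k}^{(r)}-\bar{x}_k^{(r)}\|^2$, and the gap between the sampled average $\frac{1}{S}\sum_{i\in\mathcal{S}_r}F_i$ and $F$, the latter absorbed into a $\zeta^2$ term via \cref{asm:grad_het}. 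I would then prove the drift lemma, bounding $\E\|x_{i,k}^{(r)}-\x{r}\|^2$ across the $\sqrt{K}$ inner steps in terms of $\eta^2$, the heterogeneity $\zeta^2$, the inner variance $\sigma^2/\sqrt{K}$, and the anchor gradient. Chaining the inner recursion over the $\sqrt{K}$ steps gives a per-round contraction $(1-\eta\mu)^{\sqrt{K}}\approx\exp(-\sqrt{K}/\kappa)$ with accumulated error, and unrolling over $R$ rounds with the weighted-averaging device of \citet[Lemma 1]{karimireddy2020scaffold} produces the transient $\Delta\exp(-R\sqrt{K}/\kappa)$ plus the geometric-sum floor $\zeta^2/\mu+\sigma^2/(\mu\sqrt{K})$; a last use of \cref{asm:smooth} converts distance to function value. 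The deterministic single-client distance bound is immediate: with no noise and one client throughout, FedAvg is plain gradient descent on $F_i$, so $\eta=1/\beta$ makes each step nonexpansive toward $x_i^*$, giving $\|\x{r}-x_i^*\|\le\|\x{0}-x_i^*\|$, and two triangle inequalities then yield the claimed $\mathcal{O}(\|\x{0}-x^*\|^2+\|\x{0}-x_i^*\|^2)$ bound (exactly what is needed to verify the distance-conserving property of \cref{def:dist-conserve}).

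For the general convex case I would apply the strongly convex result to the Nesterov-smoothed objective $F_\mu(x)=F(x)+\frac{\mu}{2}\|\x{0}-x\|^2$, which is $\mu$-strongly convex and $(\beta+\mu)$-smooth and, crucially, retains the \emph{same} heterogeneity $\zeta$, since the client-independent regularizer cancels in $\nabla F_i-\nabla F$. Translating back via $F(\x{R})-F(x^*)\le[F_\mu(\x{R})-F_\mu(x_\mu^*)]+\frac{\mu}{2}D^2$ and then taking $\mu=\Theta(\max\{\beta\log^2(e^2+R^2)/R^2,\ \zeta/D,\ \sigma/(DK^{1/4})\})$ balances the regularization bias $\frac{\mu}{2}D^2$ against the three terms inherited from the strongly convex floor, producing $\order(\beta D^2/(\sqrt{K}R)+\sigma D/K^{1/4}+\zeta D)$; the $\order(D^2)$ distance bound follows from the strongly convex distance bound for $F_\mu$ together with $\|\x{0}-x_\mu^*\|\le\|\x{0}-x^*\|$. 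The PL case reuses the descent-plus-contraction skeleton but replaces strong convexity with the descent lemma and \cref{asm:pl}, working directly on the function-value gap rather than on distances; sampling one client per round removes the sampling heterogeneity so that only within-client drift and variance survive.

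I expect the main obstacle to be the drift analysis under the \emph{constant}, aggressive stepsize $\eta=1/\beta$: most FedAvg analyses take $\eta$ decaying in $K$ to keep client drift negligible (yielding $R^{-2}$-type rates), whereas here the large stepsize is precisely what produces the fast $\exp(-R\sqrt{K}/\kappa)$ contraction at the cost of the irreducible floor. The delicate step is to push the drift bound through the $\sqrt{K}$ inner steps tightly enough that the surviving heterogeneity term collapses to exactly $\zeta^2/\mu$ rather than a $\kappa$-inflated version, because it is this clean $\zeta^2/\mu$ floor—and its interaction with $\min\{\Delta,\zeta^2/\mu\}$ after the point-selection step of \cref{algo:chaining}—that ultimately drives the chaining guarantees.
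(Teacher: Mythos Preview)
Your proposal takes the standard FedAvg route (averaged shadow sequence $\bar x_k^{(r)}$, client-drift lemma, distance recursion), but the paper does something considerably simpler that avoids the very obstacle you flag. The paper never tracks the averaged inner iterate and never proves a drift bound. Instead it analyzes each client's local trajectory \emph{individually} as biased SGD on the global $F$: by the descent lemma (\cref{asm:smooth}) and the polarization identity $-2\langle a,b\rangle=\|a-b\|^2-\|a\|^2-\|b\|^2$,
\[
F(x_{i,k+1}^{(r)})-F(x_{i,k}^{(r)})\le -\tfrac{\eta}{2}\|\nabla F(x_{i,k}^{(r)})\|^2+\tfrac{\eta}{2}\|g_{i,k}^{(r)}-\nabla F(x_{i,k}^{(r)})\|^2
\]
for $\eta\le 1/\beta$. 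The bias term is bounded \emph{pointwise} by $\zeta^2+\sigma^2/\sqrt{K}$ via \cref{asm:grad_het,asm:uniform_variance}, and then $\|\nabla F\|^2\ge 2\mu(F-F^*)$ (PL, implied by strong convexity) gives a function-value contraction $(1-\eta\mu)$ per inner step. Unrolling over $\sqrt{K}$ inner steps yields the per-round contraction, and convexity of $F$ is invoked \emph{only once}, at the very end, to pass from $\frac{1}{S}\sum_i F(x_{i,\sqrt K}^{(r)})$ to $F(\x{r+1})$. This completely sidesteps the drift analysis and immediately delivers the clean $\zeta^2/\mu$ floor with the aggressive stepsize---exactly the delicate point you were worried you could not push through.

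Two smaller corrections. First, the paper works in function values throughout the strongly convex case (not distances), returns the last iterate, and does not use the weighted-averaging lemma of \citet{karimireddy2020scaffold}. Second, your stated reason for one-client sampling in the PL case is off: it is not to ``remove sampling heterogeneity'' but to avoid the single use of convexity in the averaging step $F(\frac{1}{S}\sum_i x_{i,\sqrt K}^{(r)})\le\frac{1}{S}\sum_i F(x_{i,\sqrt K}^{(r)})$, which fails without convexity. With $S=1$ that averaging is vacuous and the function-value recursion carries through unchanged. Your general-convex reduction via Nesterov smoothing and the deterministic single-client distance bound match the paper.
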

\subsubsection{Convergence of \lsgd for Strongly Convex functions}
\begin{proof}
    By smoothness of F \pcref{asm:smooth}, we have for $k \in \{0, \dots, \sqrt{K} - 1\}$
    \begin{align}
        F(x^{(r)}_{i, k+1}) - F(x^{(r)}_{i, k}) \leq - \eta \langle \nabla F(x^{(r)}_{i, k}),  g_{i,k}^{(r)} \rangle + \frac{\beta \eta^2}{2} \|g_{i,k}^{(r)}\|^2
    \end{align}
    Using the fact that for any $a,b$ we have $-2ab = (a-b)^2 - a^2 - b^2$,
    \begin{align}
        F(x^{(r)}_{i, k+1}) - F(x^{(r)}_{i, k}) \leq - \frac{\eta}{2} \|\nabla F(x^{(r)}_{i, k})\|^2 + \frac{\beta \eta^2 - \eta}{2}\| g_{i,k}^{(r)}\|^2 + \frac{\eta}{2} \| g_{i,k}^{(r)} - \nabla F(x^{(r)}_{i, k})\|^2
    \end{align}
    Letting $\eta \leq \frac{1}{\beta}$,
    \begin{align}
        F(x^{(r)}_{k+1}) - F(x^{(r)}_{i,k}) \leq - \frac{\eta}{2} \|\nabla F(x^{(r)}_{i,k})\|^2 + \frac{\eta}{2} \| g_{i,k}^{(r)} - \nabla F(x^{(r)}_{i,k})\|^2
    \end{align}
    Conditioning on everything up to the $k$-th step of the $r$-th round,
    \begin{align}
        \E_{r,k} F(x^{(r)}_{i, k+1}) - F(x^{(r)}_{i,k}) &\leq - \frac{\eta}{2} \|\nabla F(x^{(r)}_{i,k})\|^2 + \frac{\eta}{2} \E_{r,k} \| g_{i,k}^{(r)} - \nabla F(x^{(r)}_{i,k})\|^2 \\
        &\leq - \frac{\eta}{2} \|\nabla F(x^{(r)}_{i,k})\|^2 + \frac{\eta \zeta^2}{2}   + \frac{\eta \sigma^2}{2 \sqrt{K}}
    \end{align}
    Where the last step used the fact that $\E[X^2] = \text{Var}(X) + \E[X]$, the assumption on heterogeneity \pcref{asm:grad_het}, and the assumption on gradient variance \pcref{asm:uniform_variance} along with the fact that $g_{i,k}^{(r)}$ is an average over $\sqrt{K}$ client gradient queries.
    Next, using $\mu$-strong convexity of $F$ \pcref{asm:strongconvex},
    \begin{align}
        \E_{r,k} F(x^{(r)}_{i, k+1}) - F(x^{(r)}_{i,k}) \leq - \eta \mu (F(x^{(r)}_{i,k}) - F(x^*)) + \frac{\eta \zeta^2}{2}   + \frac{\eta \sigma^2}{2 \sqrt{K}}
    \end{align}
    which after taking full expectation gives
    \begin{align}
        \E F(x^{(r)}_{i, k+1}) - F(x^*) \leq (1 - \eta \mu)(\E F(x^{(r)}_{i, k}) - F(x^*)) + \frac{\eta \zeta^2}{2} + \frac{\eta \sigma^2}{2 \sqrt{K}}
    \end{align}
    Unrolling the recursion over $k$ we get 
    \begin{align}
        &\E F(x^{(r)}_{i, K}) - F(x^*) \\
        &\leq (1 - \eta \mu)^{\sqrt{K}} (\E F(x^{(r)}_{i,0}) - F(x^*)) + (\frac{\eta \zeta^2}{2 } + \frac{\eta \sigma^2}{2 \sqrt{K}}) \sum_{k=0}^{\sqrt{K}-1} (1 - \eta \mu)^k
    \end{align}
    Also note that $x^{(r + 1)} = \frac{1}{S} \sum_{i \in \mathcal{S}_r} x^{(r)}_{i, \sqrt{K}}$.  So by convexity of $F$,
    \begin{align}
        \label{eq:useconvexity}
        &\E F(x^{(r + 1)}) - F(x^*) \\
        &\leq \frac{1}{S} \sum_{i \in \mathcal{S}_r} \E F(x^{(r)}_{i, \sqrt{K}}) - F(x^*)\\
        &\leq (1 - \eta \mu)^{\sqrt{K}} (\E F(x^{(r)}) - F(x^*)) + (\frac{\eta \zeta^2}{2 } + \frac{\eta \sigma^2}{2 \sqrt{K}}) \sum_{k=0}^{\sqrt{K}-1} (1 - \eta \mu)^k
    \end{align}
    Unrolling the recursion over $R$, we get
    \begin{align}
        &\E F(x^{(r)}) - F(x^*) \\
        &\leq (1 - \eta \mu)^{r\sqrt{K}} (F(x^{(0)}) - F(x^*)) + (\frac{\eta \zeta^2}{2 } + \frac{\eta \sigma^2}{2 \sqrt{K}}) \sum_{\tau=0}^{r-1} \sum_{k=0}^{\sqrt{K}-1} (1 - \eta \mu)^{\tau\sqrt{K} + k}
    \end{align}
    Which can be upper bounded as 
    \begin{align}
        \E F(x^{(r)}) - F(x^*) \leq (1 - \eta \mu)^{R\sqrt{K}} (F(x^{(0)}) - F(x^*)) + \frac{\zeta^2}{2 \mu} + \frac{\sigma^2}{2\mu\sqrt{K}}
    \end{align}
    The final statement comes from the fact that $\|x - \eta \nabla F_i(x) - x^*_i\| \leq \|x - x^*_i\|$ because of $\eta \leq \frac{1}{\beta}$ and applying triangle inequality.
\end{proof}
\subsubsection{Convergence of \lsgd for General Convex functions}
For the general convex case, we use Nesterov smoothing.  Concretely, we will run \cref{algo:acsa} assuming strong convexity by optimizing instead a modified objective 
\begin{align}
    F_\mu(x) = F(x) + \frac{\mu}{2} \|\x{0} - x\|^2
\end{align}
Define $x^*_\mu = \argmin_x F_\mu(x)$ and $\Delta_\mu = \E F_\mu(\x{0}) - F(x^*_\mu)$.  We will choose $\mu$ carefully to balance the error introduced by the regularization term and the better convergence properties of having larger $\mu$.
\begin{proof}
    We know that running \cref{algo:fedavg} with $\eta = \frac{1}{\beta + \mu}$ gives (from the previous proof)
    \begin{align}
        \E F_\mu(x^{(R)}) - F_\mu(x^*_\mu) \leq  \Delta_\mu \exp(-\frac{R \sqrt{K}}{\frac{\beta + \mu}{\mu}})+ \frac{\zeta^2}{2 \mu} + \frac{\sigma^2}{2\mu\sqrt{K}}
    \end{align}
    We have that by \citep[Proposition E.7]{yuan2020federatedac}
    \begin{align}
        \E F(x^{(R)}) - F(x^*) \leq \E F_\mu(x^{(R)}) - F_\mu(x^*_\mu) + \frac{\mu}{2} D^2
    \end{align}
    So we have (because $\Delta_\mu \leq \Delta$ as shown in \cref{thm:asg-formal}),
    \begin{align}
        \E F(x^{(R)}) - F(x^*) \leq \Delta \exp(-\frac{R \sqrt{K}}{\frac{\beta + \mu}{\mu}})+ \frac{\zeta^2}{2 \mu} + \frac{\sigma^2}{2\mu\sqrt{K}} + \frac{\mu}{2} D^2
    \end{align}
    Then if we choose $\mu \geq \Theta(\frac{\beta}{\sqrt{K} R} \log^2(e^2 + \sqrt{K} R))$, $\mu \geq \Theta(\frac{\zeta}{D})$, and $\mu \geq \Theta(\frac{\sigma}{D K^{1/4}})$,
    \begin{align}
        \E F(x^{(R)}) - F(x^*) \leq \tilde{\mathcal{O}}(\frac{\beta D^2}{\sqrt{K} R} + \frac{\sigma D}{K^{1/4}} + \zeta D )
    \end{align}

    Now we show the distance bound.  Recall that 
    \begin{align}
        \E F_\mu(x^{(R)}) - F_\mu(x^*_\mu) \leq  \Delta_\mu \exp(-\frac{R \sqrt{K}}{\frac{\beta + \mu}{\mu}})+ \frac{\zeta^2}{2 \mu} + \frac{\sigma^2}{2\mu\sqrt{K}}
    \end{align}
    By smoothness, strong convexity of $F_\mu$, and the choice of $\mu$ (as we chose each term above divided by $\mu$ to match $D^2$ up to log factors), we have that
    \begin{align}
        \E\|x^{(R)} - x^*_\mu\|^2 \leq \tilde{\mathcal{O}}(D^2)
    \end{align}
    So,
    \begin{align}
        \E\|x^{(R)} - x^*\|^2 \leq 3\E\|x^{(R)} - x^*_\mu\|^2 + 3\E\|x^{(0)} - x^*_\mu\|^2 + 3\E\|x^* - x^{(0)}\|^2\leq \tilde{\mathcal{O}}(D^2)
    \end{align}
    Where the last inequality follows because 
    \begin{align}
        F(x^*_\mu) + \frac{\mu}{2}\|x^{(0)} - x^*_\mu\|^2 \leq F(x^*) + \frac{\mu}{2} \|x^{(0)} - x^*\|^2
    \end{align}
\end{proof}
\subsubsection{Convergence of \lsgd under the PL condition}
\begin{proof}
    The same proof follows as in the strongly convex case, except \cref{eq:useconvexity}, where we avoid having to use convexity by only sampling one client at a time.  This follows previous work such as \citet{karimireddy2020mime}.  Averaging when the functions are not convex can cause the error to blow up, though in practice this is not seen (as mentioned in \citet{karimireddy2020mime}).
\end{proof}
\section{Proofs for \Chaineds}
\label{sec:chained-formal}
\subsection{\texorpdfstring{\chain{\lsgd}{\mbsgd}}{\lsgd -> \mbsgd}}

\subsubsection{Convergence of \texorpdfstring{\chain{\lsgd}{\mbsgd}}{\lsgd -> \mbsgd} on Strongly Convex Functions}
\begin{theorem}
    \label{thm:fedavg-sgd-formal}
    Suppose that client objectives $F_i$'s and their gradient queries satisfy \cref{asm:smooth}, \cref{asm:uniform_variance}, \cref{asm:cost_variance}, \cref{asm:grad_het}, \cref{asm:func-het}.  Then running \cref{algo:chaining} where $\ahead$ is \cref{algo:fedavg} in the setting of \cref{thm:fedavg-formal} and $\atail$ is \cref{algo:mbsgd} in the setting \cref{thm:sgd-formal}: 
    \begin{itemize}
        \item \textbf{Strongly convex:} $F_i$'s satisfy \cref{asm:strongconvex} for $\mu > 0$.  Then there exists a lower bound of $K$ such that we have the rate
        \begin{align*}
            \tilde{\mathcal{O}}(\min \{\frac{\zeta^2}{ \mu}, \Delta \} \exp(-\frac{R}{\kappa}) + \frac{\sigma^2}{\mu S K R}+ (1 - \frac{S}{N}) \frac{\zeta^2}{\mu SR} + \sqrt{1 - \frac{S}{N}}\frac{\zeta_F}{\sqrt{S}})
        \end{align*}
        \item \textbf{General convex:} $F_i$'s satisfy \cref{asm:convex}.  Then there exists a lower bound of $K$ such that we have the rate
        \begin{align*}
            \order(&\min\{\frac{\beta^{1/2} \zeta^{1/2} D^{3/2}}{R^{1/2}}, \frac{\beta D^2}{R}\} + \frac{\beta^{1/2} \sigma^{1/2} D^{3/2}}{(SKR)^{1/4}}\\
            &+ (1 - \frac{S}{N})^{1/4}\frac{\beta^{1/2} \zeta_F^{1/2}D}{S^{1/4}R^{1/2}}  + (1 - \frac{S}{N})^{1/4}\frac{\beta^{1/2} \zeta^{1/2} D^{3/2}}{(SR)^{1/4}})
        \end{align*}
        \item \textbf{PL condition:} $F_i$'s satisfy \cref{asm:pl} for $\mu > 0$. Then there exists a lower bound of $K$ such that we have the rate
        \begin{align*}
            \tilde{\mathcal{O}}(\min \{\frac{\zeta^2}{ \mu}, \Delta \} \exp(-\frac{R}{\kappa}) + \frac{\kappa \sigma^2}{\mu N K R}+ (1 - \frac{S}{N}) \frac{\kappa \zeta^2}{\mu SR} + \sqrt{1 - \frac{S}{N}}\frac{\zeta_F}{\sqrt{S}})
        \end{align*}
    \end{itemize}
\end{theorem}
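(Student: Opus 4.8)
The plan is to analyze the three stages of \cref{algo:chaining} in sequence, using the convergence guarantees of \lsgd (\cref{thm:fedavg-formal}) and \mbsgd (\cref{thm:sgd-formal}) as black boxes. The structural observation that drives everything is that \lsgd, run as $\ahead$ for $R/2$ rounds, produces an iterate $\hat{x}_{1/2}$ whose suboptimality is governed by a \emph{heterogeneity floor} rather than by the initial gap: in the strongly convex case \cref{thm:fedavg-formal} gives $\E F(\hat{x}_{1/2}) - F(x^*) \le \mathcal{O}(\Delta\exp(-\tfrac{(R/2)\sqrt{K}}{\kappa}) + \tfrac{\zeta^2}{\mu} + \tfrac{\sigma^2}{\mu\sqrt{K}})$, so by taking $K$ large enough the $\sigma^2/(\mu\sqrt{K})$ term is absorbed and the exponential term is driven below the floor, leaving an $\mathcal{O}(\zeta^2/\mu)$ bound; the analogous floors are $\mathcal{O}(\zeta D)$ in the general convex case and $\mathcal{O}(\zeta^2/\mu)$ under PL. This is where $\min\{\Delta,\zeta^2/\mu\}$ enters: whichever of $\hat{x}_0$ and $\hat{x}_{1/2}$ is better has gap at most $\min\{\Delta, \mathcal{O}(\zeta^2/\mu)\}$.

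First I would formalize the selection step. Writing $\hat{F}(x) = \frac{1}{SK}\sum_{i\in\mathcal{S}}\sum_{k} f(x;\hat{z}_{i,k})$ for the empirical estimate, a comparison argument gives $F(\hat{x}_1) - \min\{F(\hat{x}_0),F(\hat{x}_{1/2})\} \le |\hat{F}(\hat{x}_0) - F(\hat{x}_0)| + |\hat{F}(\hat{x}_{1/2}) - F(\hat{x}_{1/2})|$, since a wrong choice can cost at most the total estimation error. Conditioning on $\hat{x}_0,\hat{x}_{1/2}$, each estimate is unbiased with mean-squared error at most $\sigma_F^2/(SK) + (1-S/N)\zeta_F^2/S$ (by \cref{asm:cost_variance,asm:func-het}); by Jensen and taking $K$ large the $\sigma_F$ contribution vanishes, yielding $\E F(\hat{x}_1) - F(x^*) \le \min\{\Delta, \mathcal{O}(\zeta^2/\mu)\} + \mathcal{O}(\sqrt{1-S/N}\,\zeta_F/\sqrt{S})$. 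This is the source of the additive $\zeta_F$ term.

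The final stage feeds $\hat{x}_1$ into \mbsgd. Since $\hat{x}_1$ is random, I would invoke \cref{thm:sgd-formal} conditionally on $\hat{x}_1$ and take total expectation; because each \mbsgd bound is affine in its initial gap, the $\Delta$ appearing there is replaced by $\E[F(\hat{x}_1)-F(x^*)]$. In the strongly convex and PL cases this directly yields $\tilde{\mathcal{O}}(\min\{\Delta,\zeta^2/\mu\}\exp(-R/\kappa) + \text{variance} + \text{sampling} + \zeta_F\text{-term})$ after absorbing $R/2$ into $R$ under $\tilde{\mathcal{O}}$. The general convex case needs one extra twist: \cref{thm:sgd-formal} provides a gradient-norm guarantee $\E\|\nabla F(\hat{x}_2)\|^2 \le \mathcal{O}(\beta\Delta_1/R + \cdots)$ together with the distance-preservation bound $\E\|\hat{x}_2-x^*\|^2\le\mathcal{O}(D^2)$, and converting via convexity $F(\hat{x}_2)-F(x^*)\le \|\nabla F(\hat{x}_2)\|\,\|\hat{x}_2-x^*\| \le \mathcal{O}(\sqrt{\beta\Delta_1 D^2/R})$; substituting $\Delta_1 = \mathcal{O}(\zeta D)$ produces $\sqrt{\beta\zeta D^3/R}$, and the same conversion applied to the variance and sampling terms produces the fractional-power terms in the statement, while the competing $\beta D^2/R$ in the $\min$ is the standard direct function-value rate of general-convex \mbsgd (the fallback of selecting $\hat{x}_0$).

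The main obstacle I anticipate is controlling the propagation of randomness and distance across stages: the \mbsgd guarantees require $\hat{x}_1$ to be $\mathcal{O}(D)$-close to $x^*$, so I must verify that both the \lsgd output and the selection step preserve the $\mathcal{O}(D^2)$ distance (via the distance-preservation clauses of \cref{thm:fedavg-formal,thm:sgd-formal}), and that applying a bound stated for a \emph{deterministic} initial gap to the \emph{random} $\hat{x}_1$ is legitimate (the tower property, valid precisely because the bounds are affine in the initial gap). A secondary technical point is discharging the ``there exists a finite $K$'' qualifier: one must choose $K$ large enough that \emph{simultaneously} the \lsgd variance floor, the selection variance, and the \mbsgd variance terms are all dominated by the target rate, which requires a careful but routine threshold computation.
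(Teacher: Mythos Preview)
Your proposal is correct and follows the same three-stage skeleton the paper uses: drive \lsgd to its heterogeneity floor by taking $K$ large, do an approximate $\argmin$ selection, then feed the result into \mbsgd and invoke \cref{thm:sgd-formal} conditionally. Two points of comparison are worth noting.

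\textbf{Selection step.} Your comparison argument $F(\hat{x}_1) - \min\{F(\hat{x}_0),F(\hat{x}_{1/2})\} \le |\hat{F}(\hat{x}_0)-F(\hat{x}_0)| + |\hat{F}(\hat{x}_{1/2})-F(\hat{x}_{1/2})|$ followed by Jensen is more direct than what the paper does: the paper proves a separate \cref{lemma:choosefunc} via Chebyshev's inequality, balancing the probability of misselection against the cost of misselection. Both routes land on the same $\sigma_F/\sqrt{SK} + \sqrt{1-S/N}\,\zeta_F/\sqrt{S}$ error; yours is shorter.

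\textbf{General convex $\min$.} Your explanation of the $\beta D^2/R$ branch as ``the fallback of selecting $\hat{x}_0$'' is not quite the mechanism the paper uses, and as stated it does not close. If $\hat{x}_1=\hat{x}_0$, the \emph{gradient-norm} guarantee of \cref{thm:sgd-formal} combined with the $D$-distance bound only yields $\sqrt{\beta\Delta D^2/R}\le \beta D^2/\sqrt{R}$, not $\beta D^2/R$. The paper instead \emph{pre-runs} \mbsgd for a constant fraction of rounds (citing \citet[Section~7]{woodworth2020minibatch}) so that the effective initial gap entering the chaining satisfies $\Delta \le \tilde{\mathcal{O}}(\beta D^2/R + \cdots)$; substituting this into $\sqrt{\beta\min\{\zeta D,\Delta\}D^2/R}$ is what produces $\min\{\sqrt{\beta\zeta D^3/R},\,\beta D^2/R\}$. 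You should either adopt this pre-run device or, equivalently, argue that one runs plain \mbsgd and \chain{\lsgd}{\mbsgd} in parallel (each for $R/2$ rounds) and outputs the better---the paper's footnote to \cref{thm:fedavg-sgd-informal} mentions this alternative.

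Your identification of the two technical obstacles (distance preservation across stages, and the tower-property justification for applying a deterministic-start bound to random $\hat{x}_1$) matches exactly what the paper handles via the distance clauses in \cref{thm:fedavg-formal} and \cref{thm:sgd-formal}.
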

\begin{proof}
    From \cref{thm:fedavg-formal}, we know that with $K > \max\{\frac{\sigma^4}{\zeta^4}, \frac{\kappa^2}{R^2}\log^2(\frac{\Delta \mu}{\zeta^2})\}$
    \begin{align}
        \E F(\hat{x}_{1/2}) - F(x^*) \leq \mathcal{O}(\frac{\zeta^2}{ \mu} )
    \end{align}
    From \cref{lemma:choosefunc}, if $K > \frac{\sigma^2_F}{S \min\{\Delta, \frac{\zeta^2}{\mu}\}}$
    \begin{align}
        \E F(\hat{x}_1) - F(x^*) \leq \mathcal{O}(\min \{\frac{\zeta^2}{ \mu}, \Delta \}+ \sqrt{1 - \frac{S-1}{N-1}}\frac{\zeta_F}{\sqrt{S}})
    \end{align}
    And so from \cref{thm:sgd-formal}, we know that 
    \begin{align}
        &\E F(\hat{x}_2) - F(x^*) \\
        &\leq \tilde{\mathcal{O}}(\min \{\frac{\zeta^2}{ \mu}, \Delta \} \exp(-\frac{R}{\kappa}) + \frac{\sigma^2}{\mu N K R}+ (1 - \frac{S}{N}) \frac{\zeta^2}{\mu SR} + \sqrt{1 - \frac{S}{N}}\frac{\zeta_F}{\sqrt{S}}))
    \end{align}
\end{proof}
\subsubsection{Convergence of \texorpdfstring{\chain{\lsgd}{\mbsgd}}{\lsgd -> \mbsgd} on General Convex Functions}
\begin{proof}
    From \cref{thm:fedavg-formal}, we know that with $K > \max\{\frac{\sigma^4}{\zeta^4}, \frac{\beta^2 D^2}{\zeta R}\}$
    \begin{align*}
        \E F(x^{(R)}) - F(x^*) \leq \tilde{\mathcal{O}}(\zeta D )
    \end{align*}
    From \cref{lemma:choosefunc}, if $K > \frac{\sigma^2_F}{S \min\{\Delta, \frac{\zeta^2}{\mu}\}}$
    \begin{align}
        \E F(\hat{x}_1) - F(x^*) \leq \order(\min \{\zeta D, \Delta \}+ \sqrt{1 - \frac{S-1}{N-1}}\frac{\zeta_F}{\sqrt{S}})
    \end{align}
    And so from \cref{thm:sgd-formal}, we know that 
    \begin{align}
        \E\|\nabla F(\hat{x}_2)\|^2 \leq  \order(\frac{\beta \min\{\zeta D, \Delta\}}{R} + \sqrt{1 - \frac{S}{N}}\frac{\beta \zeta_F}{\sqrt{S}R} + \frac{\beta \sigma D}{\sqrt{SKR}} + \sqrt{1 - \frac{S}{N}}\frac{\beta \zeta D}{\sqrt{SR}})
    \end{align}
    Next, using that $\E\|\hat{x}_2 - x^*\| \leq \order(D^2)$ from \cref{thm:fedavg-formal} and \cref{thm:sgd-formal} as well as convexity,
    \begin{align}
        &\E F(\hat{x}_2) - F(x^*) \\
        &\leq \sqrt{\E\|\nabla F(\hat{x}_2)\|^2} \sqrt{\E \|\hat{x}_2 - x^*\|^2}\\
        &\leq \order(\frac{\beta^{1/2} \min\{\zeta^{1/2} D^{3/2}, \Delta^{1/2}D\}}{R^{1/2}} \\
        & \ \ \ + \frac{\beta^{1/2} \sigma^{1/2} D^{3/2}}{(SKR)^{1/4}}+ (1 - \frac{S}{N})^{1/4}\frac{\beta^{1/2} \zeta_F^{1/2}D}{S^{1/4}R^{1/2}}  + (1 - \frac{S}{N})^{1/4}\frac{\beta^{1/2} \zeta^{1/2} D^{3/2}}{(SR)^{1/4}})
    \end{align}
    One can pre-run \mbsgd before all of this for a constant fraction of rounds so that \citep[Section 7]{woodworth2020minibatch}:
    \begin{align}
        \Delta \leq \order(\frac{\beta D^2}{R} + \frac{\sigma D}{\sqrt{SKR}} + \sqrt{ 1 - \frac{S}{N}} \frac{\zeta D}{\sqrt{SR}})
    \end{align}
    This likely not practically necessary \cref{sec:experiments}.
    Altogether,
    \begin{align}
        &\E F(\hat{x}_2) - F(x^*) \\
        &\leq \sqrt{\E\|\nabla F(\hat{x}_2)\|^2} \sqrt{\E \|\hat{x}_2 - x^*\|^2}\\
        &\leq \order(\min\{\frac{\beta^{1/2} \zeta^{1/2} D^{3/2}}{R^{1/2}}, \frac{\beta D^2}{R}\} \\
        & \ \ \ + \frac{\beta^{1/2} \sigma^{1/2} D^{3/2}}{(SKR)^{1/4}}+ (1 - \frac{S}{N})^{1/4}\frac{\beta^{1/2} \zeta_F^{1/2}D}{S^{1/4}R^{1/2}}  + (1 - \frac{S}{N})^{1/4}\frac{\beta^{1/2} \zeta^{1/2} D^{3/2}}{(SR)^{1/4}})
    \end{align}
\end{proof}
\subsubsection{Convergence of \texorpdfstring{\chain{\lsgd}{\mbsgd}}{\lsgd -> \mbsgd} under the PL-condition}
\begin{proof}
    The proof is the same as in the strongly convex case.
\end{proof}
\subsection{\texorpdfstring{\chain{\lsgd}{\asg}}{\lsgd -> \asg}}
\begin{theorem}
    \label{thm:fedavg-asg-formal}
    Suppose that client objectives $F_i$'s and their gradient queries satisfy \cref{asm:smooth}, \cref{asm:uniform_variance}, \cref{asm:cost_variance}, \cref{asm:grad_het}, \cref{asm:func-het}.  Then running \cref{algo:chaining} where $\ahead$ is \cref{algo:fedavg} in the setting of \cref{thm:fedavg-formal} and $\atail$ is \cref{algo:acsa} in the setting \cref{thm:sgd-formal}: 
    \begin{itemize}
        \item \textbf{Strongly convex:} $F_i$'s satisfy \cref{asm:strongconvex} for $\mu > 0$.  Then there exists a lower bound of $K$ (same as in \chain{\lsgd}{\mbsgd}) such that we have the rate
        \begin{align*}
            \tilde{\mathcal{O}}(\min \{\frac{\zeta^2}{ \mu}, \Delta \} \exp(-\frac{R}{\sqrt{\kappa}}) + \frac{\sigma^2}{\mu S K R}+ (1 - \frac{S}{N}) \frac{\zeta^2}{\mu SR} + \sqrt{1 - \frac{S}{N}}\frac{\zeta_F}{\sqrt{S}})
        \end{align*}
        \item \textbf{General convex:} $F_i$'s satisfy \cref{asm:convex}.  Then there exists a lower bound of $K$ (same as in \chain{\lsgd}{\mbsgd}) such that we have the rate
        \begin{align*}
            \order(&\min\{\frac{\beta^{1/2} \zeta^{1/2} D^{3/2}}{R}, \frac{\beta D^2}{R^2}\} + \frac{\beta^{1/2} \sigma^{1/2} D^{3/2}}{(SKR)^{1/4}} +  \frac{ \sigma D}{(S K R)^{1/2}} + (1 - \frac{S}{N})^{1/2}\frac{\zeta D}{(SR)^{1/2}}\\
            &+ (1 - \frac{S}{N})^{1/4}\frac{\beta^{1/2} \zeta_F^{1/2}D}{S^{1/4}R^{1/2}}  + (1 - \frac{S}{N})^{1/4}\frac{\beta^{1/2} \zeta^{1/2} D^{3/2}}{(SR)^{1/4}})
        \end{align*}
    \end{itemize}
\end{theorem}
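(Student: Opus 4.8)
The plan is to follow exactly the three-stage template already used for \chain{\lsgd}{\mbsgd} in \cref{thm:fedavg-sgd-formal}, replacing only the final \globalm~guarantee by the accelerated bound of \cref{thm:asg-formal}. First I would run $\ahead = $ \lsgd from $\hat{x}_0$ and invoke \cref{thm:fedavg-formal}: for $K$ above the same threshold as in \chain{\lsgd}{\mbsgd}, the \lsgd guarantee drives the optimization error down to its heterogeneity floor, giving $\E F(\hat{x}_{1/2}) - F(x^*) \leq \order(\zeta^2/\mu)$ in the strongly convex case and $\order(\zeta D)$ in the general convex case. Second, the point-selection step of \cref{algo:chaining} (the same averaging argument via \cref{lemma:choosefunc}) certifies $\E F(\hat{x}_1) - F(x^*) \leq \order(\min\{\Delta, \zeta^2/\mu\} + \sqrt{1 - S/N}\,\zeta_F/\sqrt{S})$, respectively with $\zeta D$ in place of $\zeta^2/\mu$, provided $K$ is large enough that the empirical function-value comparison is accurate. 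This is the step that introduces the $\min\{\Delta, \cdot\}$ adaptivity.

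For the strongly convex case the remainder is immediate: I would feed $\hat{x}_1$ into $\atail = $ \asg and apply the strongly convex bound of \cref{thm:asg-formal} with initial suboptimality $\min\{\Delta, \zeta^2/\mu\}$. Because \asg contracts at the accelerated rate $\exp(-R/\sqrt{\kappa})$ rather than $\exp(-R/\kappa)$, the leading term becomes $\min\{\Delta, \zeta^2/\mu\}\exp(-R/\sqrt{\kappa})$, while the additive variance and sampling terms carry over verbatim, yielding the claimed rate.

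The general convex case is the main obstacle. Here \cref{thm:asg-formal} controls only the expected squared gradient norm $\E\|\nabla F(\hat{x}_2)\|^2$ (obtained via Nesterov smoothing of the regularized objective $F_\mu$), so I would first substitute the post-selection suboptimality $\min\{\zeta D, \Delta\}$ into the ASG gradient-norm bound, and then convert back to function suboptimality via Cauchy--Schwarz, $\E F(\hat{x}_2) - F(x^*) \leq \sqrt{\E\|\nabla F(\hat{x}_2)\|^2}\,\sqrt{\E\|\hat{x}_2 - x^*\|^2}$, exactly as in the general convex proof of \chain{\lsgd}{\mbsgd}. The crucial ingredient is that both the \lsgd and \asg stages preserve the initial distance: the distance bounds stated in \cref{thm:fedavg-formal} and \cref{thm:asg-formal} give $\E\|\hat{x}_2 - x^*\|^2 \leq \order(D^2)$ even after chaining, so the second Cauchy--Schwarz factor is $\order(D)$. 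The delicate bookkeeping is that the ASG gradient-norm bound already contributes the function-value-scale terms $\sigma D/\sqrt{SKR}$ and $\sqrt{1 - S/N}\,\zeta D/\sqrt{SR}$ \emph{outside} the square root, whereas the smoothed leading term $\beta\,\min\{\zeta D, \Delta\}/R^2$ passes through Cauchy--Schwarz to become $\min\{\beta^{1/2}\zeta^{1/2}D^{3/2}/R, \beta D^2/R^2\}$; keeping these two families of terms separate, and optionally pre-running \mbsgd for a constant fraction of rounds so that $\Delta \leq \order(\beta D^2/R^2 + \cdots)$, is what produces the stated six-term bound.
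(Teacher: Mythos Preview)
Your proposal is correct and matches the paper's approach exactly: the paper's proof is literally the one line ``Proof follows those of \chain{\lsgd}{\mbsgd} (\cref{thm:fedavg-sgd-formal}),'' and you have spelled out precisely that template with \cref{thm:asg-formal} substituted for \cref{thm:sgd-formal}. One small slip: to obtain $\Delta \leq \order(\beta D^2/R^2 + \cdots)$ in the general convex pre-run you need \asg (or an accelerated method), not \mbsgd, since \mbsgd only gives $\beta D^2/R$; otherwise the bookkeeping you describe is right.
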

Proof follows those of \chain{\lsgd}{\mbsgd} \pcref{thm:fedavg-sgd-formal}.
\subsection{\texorpdfstring{\chain{\lsgd}{\saga}}{\lsgd -> \saga}}
\begin{theorem}
    \label{thm:fedavg-saga-formal}
    Suppose that client objectives $F_i$'s and their gradient queries satisfy \cref{asm:smooth}, \cref{asm:uniform_variance}, \cref{asm:cost_variance}, \cref{asm:grad_het}, \cref{asm:func-het}.  Then running \cref{algo:chaining} where $\ahead$ is \cref{algo:fedavg} in the setting of \cref{thm:fedavg-formal} and $\atail$ is \cref{algo:saga} in the setting \cref{thm:saga-formal}: 
    \begin{itemize}
        \item \textbf{Strongly convex:} $F_i$'s satisfy \cref{asm:strongconvex} for $\mu > 0$.  Then there exists a lower bound of $K$ (same as in \chain{\lsgd}{\mbsgd}) such that we have the rate
        \begin{align*}
            \tilde{\mathcal{O}}(\min \{\frac{\zeta^2}{ \mu}, \Delta \} \exp(-\max\{\frac{N}{S}, \kappa\}^{-1}R) + \frac{\sigma^2}{\mu S K R})
        \end{align*}
        \item \textbf{PL condition:} $F_i$'s satisfy \cref{asm:pl} for $\mu > 0$. Then there exists a lower bound of $K$ such that we have the rate
        \begin{align*}
            \tilde{\mathcal{O}}(\min \{\frac{\zeta^2}{ \mu}, \Delta \} \exp(-(\kappa (\frac{N}{S})^{2/3})^{-1}R) + \frac{\sigma^2}{\mu SK})
        \end{align*}
    \end{itemize}
\end{theorem}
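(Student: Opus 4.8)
The plan is to follow the same three-stage chaining argument used for \chain{\lsgd}{\mbsgd} in \cref{thm:fedavg-sgd-formal}, substituting the convergence guarantee of \saga (\cref{thm:saga-formal}) for that of \mbsgd in the final stage. The three stages correspond exactly to the three phases of \cref{algo:chaining}: run $\ahead = {}$\lsgd for $R/2$ rounds to produce $\hat{x}_{1/2}$, select the better of $\hat{x}_0$ and $\hat{x}_{1/2}$ to form $\hat{x}_1$, and then run $\atail = {}$\saga for the remaining $R/2$ rounds to produce $\hat{x}_2$. The crux is that chaining replaces the generic initial suboptimality gap $\Delta$ fed into \saga by the smaller quantity $\min\{\Delta, \zeta^2/\mu\}$.

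First I would invoke \cref{thm:fedavg-formal}: for $K$ above the same threshold used in \chain{\lsgd}{\mbsgd} (namely $K > \max\{\sigma^4/\zeta^4, (\kappa^2/R^2)\log^2(\Delta\mu/\zeta^2)\}$), the \lsgd output satisfies $\E F(\hat{x}_{1/2}) - F(x^*) \leq \mathcal{O}(\zeta^2/\mu)$ in the strongly convex case, since the bias term that is exponential in $R\sqrt{K}$ and the $\sigma^2/(\mu\sqrt{K})$ variance term are both driven below $\zeta^2/\mu$ by the choice of $K$. Next, the point-selection step (\cref{lemma:choosefunc}), again with $K$ large enough to make the function-value estimation error negligible, yields $\E F(\hat{x}_1) - F(x^*) \leq \mathcal{O}(\min\{\zeta^2/\mu, \Delta\})$, suppressing the $\zeta_F$ term by the stated convention (running the better of \chain{\lsgd}{\saga} and plain \saga).

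The final stage applies \cref{thm:saga-formal} with $\hat{x}_1$ as the initial iterate. Because the \saga strongly-convex bound $\tilde{\mathcal{O}}(\Delta' \exp(-\min\{\mu/\beta, S/N\}R) + \sigma^2/(\mu SKR))$ depends \emph{linearly} on the initial gap $\Delta'$—this gap enters through the warm-start bound $\mathcal{C}_0 \leq \beta(F(\hat{x}_1) - F(x^*))$ on the control-variate lag, which is folded into the Lyapunov initialization $d_0$—I may substitute $\Delta' = \min\{\Delta, \zeta^2/\mu\}$ directly. Using $\min\{\mu/\beta, S/N\} = \max\{\kappa, N/S\}^{-1}$ gives the claimed strongly convex rate. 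The PL case is identical in structure, using instead the PL branch of \cref{thm:saga-formal}, whose bound $\mathcal{O}(\Delta' \exp(-R/(\kappa (N/S)^{2/3})) + \sigma^2/(\mu SK))$ again enters linearly in $\Delta'$; substituting $\Delta' = \min\{\Delta, \zeta^2/\mu\}$ gives the stated rate, and the constraint $R \geq \Omega(N/S)$ is inherited verbatim from \cref{thm:saga-formal}.

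The main obstacle is bookkeeping rather than conceptual: one must verify that the \saga guarantee is genuinely a function of a generic initial suboptimality and that feeding it the chained iterate $\hat{x}_1$ does not disturb its warm-start initialization of the control variates. Since \cref{thm:saga-formal} already bounds $\mathcal{C}_0$ in terms of $F(\hat{x}_1)-F(x^*)$, the substitution is clean; the remaining care is to confirm that the lower bound on $K$ required by the first two stages is compatible with (and in fact identical to) that of \chain{\lsgd}{\mbsgd}, and that the variance-reduction tradeoff—dropping the sampling-heterogeneity term $(1-S/N)\zeta^2/(\mu SR)$ at the cost of the slower exponential rate $\max\{\kappa, N/S\}^{-1}$ and the $R \geq \Omega(N/S)$ requirement—is transferred faithfully from \cref{thm:saga-formal}.
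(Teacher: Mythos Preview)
Your proposal is correct and follows the same three-stage chaining argument as the paper. The one point of difference is how the $\zeta_F$ term from \cref{lemma:choosefunc} is eliminated: you invoke the ``run the better of the chained and unchained algorithm'' convention from the informal statements, whereas the paper's proof instead applies \cref{lemma:choosefunc} with $S = N$ (full participation in the selection step), which is permissible because \saga already requires $R \geq \Omega(N/S)$ and hence a single full-participation round costs nothing asymptotically; this makes the $\sqrt{1 - (S-1)/(N-1)}\,\zeta_F/\sqrt{S}$ term vanish exactly rather than by convention.
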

Proof follows those of \chain{\lsgd}{\mbsgd} \pcref{thm:fedavg-sgd-formal} and using \cref{lemma:choosefunc} with $S = N$ as the algorithm already requires $R > \frac{N}{S}$.
\subsection{\texorpdfstring{\chain{\lsgd}{\ssnm}}{\lsgd -> \ssnm}}
\begin{theorem}
    \label{thm:fedavg-ssnm-formal}
    Suppose that client objectives $F_i$'s and their gradient queries satisfy \cref{asm:smooth}, \cref{asm:uniform_variance}, \cref{asm:grad_het}, \cref{asm:func-het}.  Then running \cref{algo:chaining} where $\ahead$ is \cref{algo:fedavg} in the setting of \cref{thm:fedavg-formal} and $\atail$ is \cref{algo:ssnm} in the setting \cref{thm:ssnm-formal}: 
    \begin{itemize}
        \item \textbf{Strongly convex:} $F_i$'s satisfy \cref{asm:strongconvex} for $\mu > 0$.  Then there exists a lower bound of $K$ (same as in \chain{\lsgd}{\mbsgd}) such that we have the rate
        \begin{align*}
            \tilde{\mathcal{O}}(\min \{\frac{\zeta^2}{ \mu}, \Delta \} \exp(-\max\{\frac{N}{S}, \sqrt{\kappa (\frac{N}{S})}\}^{-1}R) + \frac{\kappa \sigma^2}{\mu K S})
        \end{align*}
    \end{itemize}
\end{theorem}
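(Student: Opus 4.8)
The plan is to prove this by composing three already-established ingredients in the order dictated by \cref{algo:chaining}, exactly as in the \chain{\lsgd}{\mbsgd} proof \pcref{thm:fedavg-sgd-formal}: a \lsgd phase, a function-value selection step, and an \ssnm phase. Since \cref{thm:fedavg-formal,thm:ssnm-formal} and the selection lemma are taken as given, the work reduces to checking that the output guarantee of each stage is a valid input hypothesis for the next.

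First I would run $\ahead = \lsgd$ for the first $R/2$ rounds. By the strongly convex case of \cref{thm:fedavg-formal}, the returned iterate obeys $\E F(\hat{x}_{1/2}) - F(x^*) \le \mathcal{O}(\Delta \exp(-R\sqrt{K}/\kappa) + \zeta^2/\mu + \sigma^2/(\mu\sqrt{K}))$; taking $K$ above the same threshold as in \cref{thm:fedavg-sgd-formal} (large enough that the $\exp$ and $\sigma^2/\sqrt{K}$ terms are dominated) leaves $\E F(\hat{x}_{1/2}) - F(x^*) \le \mathcal{O}(\zeta^2/\mu)$. Next, the selection step of \cref{algo:chaining} compares empirical averages of $f(\cdot;\hat{z}_{i,k})$ at $\hat{x}_0$ and $\hat{x}_{1/2}$; invoking \cref{lemma:choosefunc} with $K$ again above the \chain{\lsgd}{\mbsgd} threshold gives $\E F(\hat{x}_1) - F(x^*) \le \mathcal{O}(\min\{\Delta, \zeta^2/\mu\})$, up to the $\sqrt{1-S/N}\,\zeta_F/\sqrt{S}$ term that the footnote discards.

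Finally I would run $\atail = \ssnm$ for the remaining $R/2$ rounds, initialized at $\hat{x}_1$. Here I first invoke the decomposition noted before \cref{thm:ssnm-formal}, writing each $\mu$-strongly convex $F_i$ as $\tilde{F}_i + h$ with $\tilde{F}_i$ convex and $h$ carrying the $\mu$-strong convexity, so that \cref{thm:ssnm-formal} applies; the hypothesis $R \ge \Omega(N/S)$ is inherited verbatim. Substituting the initial suboptimality $\min\{\Delta,\zeta^2/\mu\}$ from the selection step in place of $\Delta$ in the \ssnm guarantee yields $\E F(\hat{x}_2) - F(x^*) \le \tilde{\mathcal{O}}(\kappa\min\{\Delta,\zeta^2/\mu\}\exp(-\min\{S/N,\sqrt{S/(N\kappa)}\}R) + \kappa\sigma^2/(\mu K S))$, where I use the identity $\min\{S/N,\sqrt{S/(N\kappa)}\} = \max\{N/S,\sqrt{\kappa(N/S)}\}^{-1}$ and fold the factor-of-two loss in the exponent (from running each phase for $R/2$ rounds) into the constant; the condition-number prefactor recorded in Table~\ref{tab:strongconvexrates} is exactly this $\kappa$.

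The main obstacle is bookkeeping at the interfaces rather than any new analysis. I must choose a single $K$ that simultaneously drives the \lsgd phase to $\mathcal{O}(\zeta^2/\mu)$ and makes the function-value comparison accurate, and I must track the $\kappa\sigma^2/(\mu K S)$ variance term that \ssnm carries, which is controlled by enlarging $K$ per the footnote convention. All the genuinely hard work—the accelerated, variance-reduced contraction establishing the $\max\{N/S,\sqrt{\kappa(N/S)}\}^{-1}$ linear rate under partial participation—lives inside \cref{thm:ssnm-formal}, which is assumed here, so the chaining argument itself is purely a composition.
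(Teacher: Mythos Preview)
Your approach is correct and matches the paper's: compose \cref{thm:fedavg-formal}, \cref{lemma:choosefunc}, and \cref{thm:ssnm-formal} exactly as in the \chain{\lsgd}{\mbsgd} proof. The one refinement the paper adds is that, since \ssnm already requires $R \ge \Omega(N/S)$ to initialize all control variates, you can afford to invoke \cref{lemma:choosefunc} with $S = N$ in the selection step; this makes the $\sqrt{1 - (S-1)/(N-1)}\,\zeta_F/\sqrt{S}$ term vanish exactly rather than being swept under the footnote, and explains why \cref{thm:fedavg-ssnm-formal} drops \cref{asm:cost_variance} from its hypotheses.
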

Proof follows those of \chain{\lsgd}{\mbsgd} \pcref{thm:fedavg-sgd-formal} and using \cref{lemma:choosefunc} with $S = N$ as the algorithm already requires $R > \frac{N}{S}$.

\section{Lower Bound}
\label{app:lowerbound}
In this section we prove the lower bound.  All gradients will be noiseless, and we will allow full communication to all clients every round.  There will be only two functions in this lower bound: $F_1$ and $F_2$.  If $N > 2$, then $F_1$ is assigned to the first $\lfloor N/2 \rfloor$ clients and $F_2$ to the next $\lfloor N/2 \rfloor$ clients.  If there is an odd number of machines we let the last machine be $F_3(x) = \frac{\mu}{2} \|x\|^2$.  This only reduces the lower bound by a factor of at most $\frac{N - 1}{N}$.  So we look at the case $N = 2$.

Let $\ell_2, C, \hat{\zeta}$ be values to be chosen later.  Let $d$ be even.  At a high level, $\ell_2$ essentially controls the smoothness of $F_1$ and $F_2$, $C$ is basically a constant, and $\hat{\zeta}$ is a quantity that affects the heterogeneity, initial suboptimality gap, and initial distance.  We give the instance:
\begin{align}
    F_1(x) = - \ell_2 \hat{\zeta} x_1 + \frac{C \ell_2}{2} x_d^2 + \frac{\ell_2}{2} \sum_{i=1}^{\frac{d}{2} - 1} (x_{2i + 1} - x_{2i})^2 + \frac{\mu}{2} \| x\|^2
\end{align}
\begin{align}
    F_2(x) = \frac{\ell_2}{2} \sum_{i=1}^{d/2} (x_{2i} - x_{2i - 1})^2 + \frac{\mu}{2} \|x\|^2
\end{align}
Where $F = \frac{F_1 + F_2}{2}$.

These functions are the same as those used in \citep{woodworth2020minibatch, woodworth2021minimax} for lower bounds in distributed optimization.  They are also similar to the instances used to prove convex optimization lower bounds \citep{nesterov2003introductory} and distributed optimization lower bounds \citep{arjevani2015communication}.

These two functions have the following property
\begin{align}
    &x_{\text{even}} \in \text{span} \{e_1, e_2, \dots, e_{2i}\} \implies \begin{cases}
        \nabla F_1(x_{\text{even}}) \in \text{span} \{e_1, e_2, \dots, e_{2i+1}\}\\
        \nabla F_2(x_{\text{even}}) \in \text{span} \{e_1, e_2, \dots, e_{2i}\}\\
        \end{cases} \\
    &x_{\text{odd}} \in \text{span} \{e_1, e_2, \dots, e_{2i-1}\} \implies \begin{cases}
        \nabla F_1(x_{\text{odd}}) \in \text{span} \{e_1, e_2, \dots, e_{2i-1}\}\\
        \nabla F_2(x_{\text{odd}}) \in \text{span} \{e_1, e_2, \dots, e_{2i}\}\\
        \end{cases} 
\end{align}
What the property roughly says is the following.  Suppose we so far have managed to turn an even number of coordinates nonzero.  Then we can only use gradient queries of $F_1$ to access the next coordinate.  After client 1 queries the gradient of $F_1$, it now has an odd number of unlocked coordinates, and cannot unlock any more coordinates until a communication occurs.  Similar is true if the number of unlocked coordinates is odd.  And so each round of communication can only unlock a single new coordinate.

We start by writing out the gradients of $F_1$ and $F_2$.  Assume that $d$ is even.  Then:
\begin{align}
    \label{eq:f1grads}
    &[\nabla F_1(x)]_1 = -\ell_2 \hat{\zeta} + \mu x_1 \\
    &[\nabla F_1(x)]_d = C \ell_2 x_d + \mu x_d \\
    &\begin{cases}
        [\nabla F_1(x)]_i = \ell_2 (x_i - x_{i-1}) + \mu x_i \ \ i \text{ odd, } 2 \leq i \leq d - 1\\
        [\nabla F_1(x)]_i = -\ell_2 (x_{i+1} - x_{i}) + \mu x_i \ \ i \text{ even, } 2 \leq i \leq d - 1
        \end{cases}
\end{align}
and
\begin{align}
    \label{eq:f2grads}
    &[\nabla F_2(x)]_1 = -\ell_2 (x_2 - x_1) + \mu x_1 \\
    &[\nabla F_2(x)]_d = \ell_2 (x_d - x_{d-1}) + \mu x_d \\
    & \begin{cases}
        [\nabla F_2(x)]_i = -\ell_2 (x_{i+1} - x_{i}) + \mu x_i \ \ i \text{ odd, } 2 \leq i \leq d - 1\\
        [\nabla F_2(x)]_i = \ell_2 (x_{i} - x_{i-1}) + \mu x_i \ \ i \text{ even, } 2 \leq i \leq d - 1
    \end{cases}
\end{align}

We define the class of functions that our lower bound will apply to.  This definition follows \citep{woodworth2020minibatch, woodworth2021minimax,carmon2020lower}:
\begin{definition}[Distributed zero-respecting algorithm]
	\label{def:app-zero-respect}
    For a vector $v$, let $\text{supp}(v) = \{ i \in \{1,\dots,d\}: v_i \neq 0\}$.  We say that an optimization algorithm is distributed zero-respecting if for any $i,k,r$, the $k$-th iterate on the $i$-th client in the $r$-th round $\x{r}_{i,k}$ satisfy
    \begin{align}
        &\text{supp}(\x{r}_{i,k}) \subseteq \bigcup_{0 \leq k' < k} \text{supp}(\nabla F_i(x_{i,k'}^{(r)})) \bigcup_{\substack{i' \in [N],  0 \leq k' \leq K-1, 0 \leq r' < r }} \text{supp}(\nabla F_{i'}(x_{i', k'}^{(r')}))
    \end{align}
\end{definition}
Broadly speaking, distributed zero-respecting algorithms are those whose iterates have components in only dimensions that they can possibly have information on.  As discussed in \citep{woodworth2020minibatch}, this means that algorithms which are \textit{not} distributed zero-respecting are just "wild guessing".  Algorithms that are distributed zero-respecting include \mbsgd, \asg, \lsgd, SCAFFOLD, \saga, and \ssnm.

Next, we define the next condition we require on algorithms for our lower bound.
\begin{definition} 
	\label{def:app-dist-conserve}
    We say that an algorithm is {\em distributed distance-conserving} if for any $i,k,r$, we have for the $k$-th iterate on the $i$-th client in the $r$-th round $\x{r}_{i,k}$ satisfies 
     $   \|\x{r}_{i,k}  - x^*\|^2 \leq (c/2)[ \|x_{\text{init}} - x^*\|^2 + \sum_{i=1}^N \|x_{\text{init}}  - x_i^*\|^2]$, 
    where $x^*_j := \argmin_x F_j(x)$ and $x^* := \argmin_x F(x)$ and $x_{\text{init}}$ is the initial iterate, for some scalar parameter $c$.
\end{definition}
Algorithms which do not satisfy \cref{def:dist-conserve} for polylogarithmic $c$ in problem parameters (see \cref{sec:setting}) are those that move substantially far away from $x^*$, even farther than the $x_i^*$'s are from $x^*$.  With this definition in mind, we slightly overload the usual definition of heterogeneity for the lower bound:
\begin{definition}
    \label{def:app-mod-het}
    A distributed optimization problem is {\em $(\zeta,c)$-heterogeneous} if  $ \max_{i \in [N]} \sup_{ x \in A} \|\nabla F_i(x) - \nabla F(x) \|^2 \leq \zeta^2$, where we define $A := \{x: \|x - x^*\|^2 \leq ({c}/{2})(\|x_{\text{init}} - x^*\|^2 + \sum_{i=1}^N \|x_{\text{init}} - x^*\|^2)\}$ for some scalar parameter $c$.
\end{definition}
While convergence rates in FL are usually proven under \cref{asm:grad_het}, the proofs can be converted to work under \cref{def:app-mod-het} as well, so long as one can prove all iterates stay within $A$ as defined in \cref{def:app-mod-het}.  We show that our algorithms satisfy \cref{def:app-mod-het} as well as a result (\cref{thm:fedavg-formal}, \cref{thm:asg-formal}, \cref{thm:sgd-formal})\footnote{We do not formally show it for \saga and \ssnm, as the algorithms are functionally the same as \mbsgd and \asg under full participation.}.  Other proofs of FL algorithms also satisfy this requirement, most notably the proof of the convergence of \lsgd in \citet{woodworth2020minibatch}.

Following \citep{woodworth2020minibatch}, we start by making the argument that, given the algorithm is distributed zero-respecting, we can only unlock one coordinate at a time.  Let $E_i = \text{span}\{e_1,\dots,e_i\}$, and $E_0$ be the null span.  Then

\begin{lemma}
    Let $\hat{x}$ be the output of a distributed zero-respecting algorithm optimizing $F = \frac{1}{2} (F_1+ F_2)$ after $R$ rounds of communication.  Then we have 
    \begin{align}
        \textup{supp}(\hat{x}) \in E_R
    \end{align}
\end{lemma}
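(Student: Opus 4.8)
The plan is to induct on the round index, formalizing the stated intuition that each communication round can activate at most one new coordinate. The engine of the argument is the coordinate-unlocking property displayed just before the lemma, itself a direct consequence of the gradient formulas \eqref{eq:f1grads} and \eqref{eq:f2grads}: a point supported on the first $2i$ (even) coordinates can only have coordinate $2i{+}1$ activated by $\nabla F_1$, while $\nabla F_2$ leaves its support unchanged; symmetrically, a point supported on the first $2i{-}1$ (odd) coordinates can only gain coordinate $2i$ through $\nabla F_2$, with $\nabla F_1$ preserving support. Here $E_m := \mathrm{span}\{e_1,\dots,e_m\}$.

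First I would prove a single-round sub-lemma: if every client's entering iterate $\x{r}_{i,0}$ is supported in $E_m$, then every local iterate $\x{r}_{i,k}$ \emph{and} every gradient $\nabla F_i(\x{r}_{i,k})$ produced during round $r$ is supported in $E_{m+1}$. This is an inner induction on the local step $k$, split by the parity of $m$ and by which function the client holds, with attention to the edge coordinates $1$ and $d$ where the gradient formulas differ. The crucial observation—the reason local steps confer no worst-case advantage—is that once a client activates the single new coordinate available to it (say $F_1$ moving a freshly-even iterate from $E_{2i}$ into the odd span $E_{2i+1}$), the parity property forces all of that client's subsequent gradients to remain in $E_{2i+1}$, so no further coordinate is unlocked. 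The complementary coordinate $2i+2$ can be reached only through $F_2$, held by a different client whose iterates cannot influence this one within the same round: the zero-respecting definition (Definition~\ref{def:app-zero-respect}) permits a client to use only its own function's gradients during the current round, admitting cross-client information exclusively from rounds $r'<r$.

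Next I would run the outer induction on $r$. Let $m_r$ bound the support of every iterate and gradient produced through the end of round $r$, with the convention $m_{-1}=0$ recording the initialization $\mathrm{supp}(x^{(0)})=\emptyset\subseteq E_0$. The zero-respecting definition bounds each entering iterate $\x{r}_{i,0}$ by the union of all gradient supports from rounds $r'<r$, so $\x{r}_{i,0}\in E_{m_{r-1}}$ for every $i$; no explicit appeal to the averaging/aggregation step is needed, since cross-client mixing is already subsumed by this union. Applying the sub-lemma with $m=m_{r-1}$ yields $m_r\le m_{r-1}+1$, hence $m_{R-1}\le R$. Finally, because $\hat{x}$ is itself produced by a zero-respecting algorithm, its support lies in the union of all gradient supports through round $R-1$, which is contained in $E_{m_{R-1}}\subseteq E_R$, giving $\mathrm{supp}(\hat{x})\subseteq E_R$.

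The main obstacle I anticipate is the parity bookkeeping inside the sub-lemma: I must verify carefully, for both parities of $m$ and for each of $F_1,F_2$, that a client cannot chain two coordinate activations within one round despite taking up to $K$ local steps—in particular ruling out that $\nabla F_1$ evaluated at a freshly-odd iterate re-activates an even coordinate. This is where the genuine content lies; the two inductions are then routine once the sub-lemma and the ``support of the output $\subseteq$ union of gradient supports'' consequence of the zero-respecting property are in hand.
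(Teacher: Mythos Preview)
Your proposal is correct and matches the standard argument. The paper itself does not give a proof of this lemma; it simply cites \citet[Lemma 9]{woodworth2020minibatch}, and your two-level induction (outer on rounds, inner on local steps, driven by the parity property of the gradient supports) is precisely the argument used there.
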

\begin{proof}
    A proof is in \citet[Lemma 9]{woodworth2020minibatch}.
\end{proof}
We now compute various properties of this distributed optimization problem.  
\subsection{Strong convexity and smoothness of \texorpdfstring{$F, F_1, F_2$}{F, F1, F2}}
From \citet[Lemma 25]{woodworth2021minimax}, as long as $\ell_2 \leq \frac{\beta - \mu}{4}$, we have that $F,F_1,F_2$ are $\beta$-smooth and $\mu$-strongly convex.

\subsection{The solutions of \texorpdfstring{$F, F_1, F_2$}{F, F1, F2}}
First, observe that from the gradient of $F_2$ computed in \cref{eq:f2grads}, $x^*_2 = \argmin_x F_2(x) = \vec{0}$.  Next observe that from the gradient of $F_1$ computed in \cref{eq:f1grads}, $x^*_1 = \argmin_x F_1(x) = \frac{\ell_2 \hat{\zeta}}{\mu} e_1$.  Thus $\|x_2^*\|^2 = 0$ and $\|x_1^*\|^2 = \frac{\ell_2^2 \hat{\zeta}^2}{\mu^2}$.

From \citet[Lemma 25]{woodworth2021minimax}, if we let $\alpha = \sqrt{1 + \frac{2 \ell_2}{\mu}}$, $q = \frac{\alpha - 1}{\alpha + 1}$, $C = 1 - q$, then $\|x^*\|^2 = \frac{\hat{\zeta}^2}{(1 - q)^2} \sum_{i=1}^d q^{2i} = \frac{\hat{\zeta}^2 q^2 (1 - q^{2d})}{(1 - q)^2 (1 - q^2)}$.

\subsection{Initial suboptimality gap}
From \citet[Lemma 25]{woodworth2021minimax}, $F(0) - F(x^*) \leq \frac{q \ell_2 \hat{\zeta}^2}{4 (1 - q)}$.

\subsection{Suboptimality gap after \texorpdfstring{$R$}{R} rounds of communication}
From \citep[Lemma 25]{woodworth2021minimax}, $F(\hat{x}) - F(x^*) \geq \frac{\hat{\zeta}^2 \mu q^2}{16 (1 - q)^2 (1 - q^2)} q^{2R}$, as long as $d \geq R + \frac{\log 2}{2 \log(1/q)}$.
\subsection{Computation of \texorpdfstring{$\zeta^2$}{heterogeneity}}
We start by writing out the difference between the gradient of $F_1$ and $F_2$, using \cref{eq:f1grads} and \cref{eq:f2grads}.
\begin{align}
    \| \nabla F_1(x) - \nabla F_2(x) \|^2 = \ell_2^2 (-\hat{\zeta} + x_2 - x_1)^2 + \ell_2^2 (C x_d - x_d + x_{d-1})^2 + \sum_{i=2}^{d-1} \ell_2^2 ( x_{i+1} - x_{i - 1})^2 
\end{align}
We upper bound each of the terms:
\begin{align}
    (x_{i+1} - x_{i - 1})^2 &= x_{i+1}^2 - 2x_{i+1}x_{i-1} + x_{i-1}^2 \leq 2x_{i+1}^2 + 2x_{i-1}^2
\end{align}
\begin{align}
    &(-\hat{\zeta} + x_2 - x_1)^2 = \hat{\zeta}^2 + x_2^2 + x_1^2 - 2 \hat{\zeta} x_2 + 2 \hat{\zeta} x_1 - 2 x_2 x_1 \\
    &\leq 3 \hat{\zeta}^2 + 3x_2^2 + 3x_1^2
\end{align}
\begin{align}
    (C x_d - x_d + x_{d-1})^2 &= C^2 x_d^2 + x_d^2 + x_{d-1}^2 - 2C x_d^2 + 2 C x_d x_{d-1} - 2 x_d x_{d-1} \\
    &\leq C^2 x_d^2 + 2 x_d^2 + 2 x_{d-1}^2 + 2 C x_{d-1}^2 
\end{align}
So altogether, using the fact that $C \leq 1$,
\begin{align}
    &\frac{1}{\ell_2^2}\| \nabla F_1(x) - \nabla F_2(x) \|^2  \\
    &\leq 3 \hat{\zeta}^2 + 3x_2^2 + 3x_1^2 + 3 x_d^2 + 3 x_{d-1}^2  + \sum_{i=2}^{d-1} 2 x_{i+1}^2 + 2x_{i-1}^2 \\
    &\leq 3 \hat{\zeta}^2 + 7 \|x\|^2 \\
    &\leq 3 \hat{\zeta}^2 + 14 \|x - x^*\|^2 + 14 \|x^*\|^2 \\
    &\leq 5\hat{\zeta}^2 + 28 c \frac{\hat{\zeta}^2 q^2 }{(1 - q)^2 (1 - q^2)} + 28 c \frac{\ell_2^2 \hat{\zeta}^2}{\mu^2}
\end{align}
Next we use \cref{def:app-dist-conserve} which says that $\|x - x^*\| \leq \frac{c}{2} [\|x_{\text{init}} - x^*\|^2 + \sum_{i=1}^N \|x_{\text{init}} - x_i^*\|^2]$.  For ease of calculation, we assume $c \geq 1$.  Recall that we calculated $\|x^*\|^2 \leq \frac{\hat{\zeta}^2 q^2}{(1 - q)^2 (1 - q^2)}$ (because $0 < q < 1$), $\|x_2^*\|^2 = 0$, $\|x_1^*\|^2 = \frac{\ell_2^2 \hat{\zeta}^2}{\mu^2}$.  
\begin{align}
    \frac{1}{\ell_2^2}\| \nabla F_1(x) - \nabla F_2(x) \|^2 &\leq 3 \hat{\zeta}^2 + 24 c\|x^*\|^2 + 7 c\|x_1^*\|^2 + 7 c\|x_2^*\|^2 \\
    &\leq 3 \hat{\zeta}^2 +  \frac{24 c \hat{\zeta}^2 q^2}{(1 - q)^2 (1 - q^2)} +  \frac{7 c \ell_2^2 \hat{\zeta}^2}{\mu^2}
\end{align}
Altogether,
\begin{align}
    \| \nabla F_1(x) - \nabla F_2(x) \|^2 \leq 3 \ell_2^2 \hat{\zeta}^2 +  \frac{24  c \ell_2^2 \hat{\zeta}^2 q^2}{(1 - q)^2 (1 - q^2)} +  \frac{7 c \ell_2^4 \hat{\zeta}^2}{\mu^2}
\end{align}
\subsection{Theorem}
With all of the computations earlier, we are now prepared to prove the theorem.
\begin{theorem}
    \label{thm:lowerbound-formal}
    For any number of rounds $R$, number of local steps per-round $K$, and $(\zeta,c)$-heterogeneity  (\cref{def:mod-het}), there exists a global objective $F$ which is the average of two $\beta$-smooth (\cref{asm:smooth}) and $\mu$($\geq 0$)-strongly convex (\cref{asm:strongconvex})  quadratic client objectives $F_1$ and $F_2$ with an initial sub-optimality gap of $\Delta$, such that the output $\hat{x}$ of any distributed zero-respecting (\cref{def:zero-respect}) and distance-conserving algorithm (\cref{def:dist-conserve}) satisfies
    \begin{itemize}
	\item \textbf{Strongly convex:} $F(\hat{x}) - F(x^*) \geq \Omega (\min  \{\Delta , {1}/({c \kappa^{3/2})} ({\zeta^2}/{\beta}) \}\exp (- {R}/{\sqrt{\kappa}} ).)$ when $\mu > 0$, and
	\item \textbf{General Convex} $F(\hat{x}) - F(x^*) \geq \Omega (\min  \{ {\beta D^2}/{R^2} ,{\zeta D}/({c^{1/2} \sqrt{R^5})}  \} )$ when $\mu = 0$.
	\end{itemize}
\end{theorem}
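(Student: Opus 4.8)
The plan is to combine the five quantities already computed for the quadratic instance $F=\tfrac12(F_1+F_2)$ — its smoothness and strong convexity (valid once $\ell_2\le(\beta-\mu)/4$), the minimizers $x_1^*,x_2^*,x^*$, the initial gap $F(0)-F(x^*)$, the post-$R$-round gap $F(\hat x)-F(x^*)\ge\frac{\hat\zeta^2\mu q^2}{16(1-q)^2(1-q^2)}q^{2R}$, and the heterogeneity bound — by selecting the free parameters $\ell_2,\mu,\hat\zeta$ (and $d$ large enough that $d\ge R+\frac{\log 2}{2\log(1/q)}$, so the coordinate-unlocking lemma gives $\mathrm{supp}(\hat x)\subseteq E_R$) to realize the target tuple $(\beta,\mu,\Delta,\zeta,c)$ or $(\beta,D,\zeta,c)$ while making the post-$R$-round gap as large as the constraints permit. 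The first step is to normalize the constants: with $\alpha=\sqrt{1+2\ell_2/\mu}$ one has $1-q=2/(\alpha+1)$ and $1-q^2=4\alpha/(\alpha+1)^2$, hence $\frac{q^2}{(1-q)^2(1-q^2)}=\Theta(\alpha^3)$ for large $\alpha$. This yields $\|x^*\|^2=\Theta(\hat\zeta^2\alpha^3)$, a post-$R$-round gap of $\Theta(\mu\|x^*\|^2)\,q^{2R}$ with $q^{2R}=\exp(-\Theta(R/\alpha))$, and a dominant heterogeneity term coming from $\|x_1^*\|^2=\ell_2^2\hat\zeta^2/\mu^2$, namely $\Theta(c\,\ell_2^2\hat\zeta^2\alpha^4)$.

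For the strongly convex case $\mu>0$ is prescribed, so I fix $\ell_2=(\beta-\mu)/4$, which pins $\alpha=\Theta(\sqrt\kappa)$ and $q^{2R}=\exp(-\Theta(R/\sqrt\kappa))$. Writing the realized initial gap as $\tilde\Delta=\Theta(\ell_2\hat\zeta^2\alpha)$, a one-line substitution using $\alpha^2\approx 2\ell_2/\mu$ shows the post-$R$-round gap equals $\Theta(\tilde\Delta)\,q^{2R}$, so it suffices to push $\tilde\Delta$ as high as the constraints allow. The initial-gap constraint caps $\tilde\Delta\le\Delta$; substituting $\hat\zeta^2=\Theta(\tilde\Delta/(\ell_2\alpha))$ into the heterogeneity term and enforcing $\le\zeta^2$ caps $\tilde\Delta\le\Theta(\zeta^2/(c\ell_2\alpha^3))=\Theta(\zeta^2/(c\beta\kappa^{3/2}))$. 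Choosing $\hat\zeta$ to attain $\tilde\Delta=\Theta(\min\{\Delta,\zeta^2/(c\beta\kappa^{3/2})\})$ gives exactly $\Omega(\min\{\Delta,\tfrac{1}{c\kappa^{3/2}}\tfrac{\zeta^2}{\beta}\}\exp(-R/\sqrt\kappa))$; here the absolute constant in the exponent is absorbed into the $\exp(-R/\sqrt\kappa)$ since the smoothness ceiling forces $\alpha=\Theta(\sqrt\kappa)$. The PL corollary is then immediate because $\mu$-strong convexity implies $\mu$-PL.

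For the general convex case $\mu$ is no longer fixed, so I treat $\alpha$ (equivalently $\mu=2\ell_2/(\alpha^2-1)$) as a free optimization variable. In the high-heterogeneity regime I set $\ell_2=\Theta(\beta)$, spend the whole distance budget $\|x^*\|^2=\Theta(D^2)$ with the heterogeneity constraint slack, and maximize $\tfrac{\mu D^2}{16}q^{2R}=\Theta(\tfrac{\ell_2 D^2}{\alpha^2})e^{-\Theta(R/\alpha)}$ over $\alpha$; the maximizer is $\alpha=\Theta(R)$, giving $\Omega(\beta D^2/R^2)$. In the low-heterogeneity regime the extremal instance instead saturates \emph{both} budgets: imposing $\|x^*\|^2=\Theta(D^2)$ and heterogeneity $=\Theta(\zeta^2)$ and solving yields $\hat\zeta^2=\Theta(D^2/\alpha^3)$ and $\ell_2=\Theta(\zeta/(\sqrt c\,D\sqrt\alpha))$, so the gap becomes $\Theta(\tfrac{\zeta D}{\sqrt c\,\alpha^{5/2}})e^{-\Theta(R/\alpha)}$; maximizing over $\alpha$ again gives $\alpha=\Theta(R)$ and hence $\Omega(\zeta D/(\sqrt c\sqrt{R^5}))$. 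The crossover $\zeta=\Theta(\sqrt c\,\beta D\sqrt R)$ is precisely where $\ell_2$ meets its smoothness ceiling $\Theta(\beta)$, so the two regimes join and the bound is the stated minimum.

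The step I expect to be the main obstacle is this convex heterogeneity-limited regime. Because the heterogeneity is dominated by $\|x_1^*\|^2=\ell_2^2\hat\zeta^2/\mu^2$, which diverges as $\mu\to0$, one cannot simply send $\mu$ to zero as in a classical single-machine convex lower bound; the heterogeneity budget actively fights the distance budget. Recognizing that the optimal instance must bind the distance and heterogeneity constraints simultaneously — leaving only the effective condition number $\alpha$ free — and then executing the joint optimization with the correct powers of $\alpha$ (which is what produces the atypical $\sqrt{R^5}$ scaling and the $c^{-1/2}$ dependence) is the delicate computation. Checking that the derived $\ell_2$ respects $\ell_2\le(\beta-\mu)/4$ exactly at the regime boundary is what glues the two cases together into a single $\min$.
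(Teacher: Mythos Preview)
Your proposal is correct and follows essentially the same route as the paper: the same quadratic chain instance, the same precomputed quantities, and the same parameter-selection strategy. The only presentational difference is in the convex case, where you frame the choice of $\mu$ as an explicit optimization over $\alpha$ (finding $\alpha=\Theta(R)$), whereas the paper sets $\mu=\ell_2/(64R^2)$ directly and then optimizes $\ell_2=\Theta(\min\{\beta,\zeta/(c^{1/2}DR^{1/2})\})$; these are the same choice in different clothing, and your two-regime split corresponds exactly to the two branches of the paper's $\min$ for $\ell_2$.
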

\begin{proof}
    \textbf{The convex case: }
    By the previous computations, we know that after $R$ rounds,
    \begin{align}
        F(\hat{x}) - F(x^*) &\geq \frac{\mu \hat{\zeta}^2 q^2}{16 (1 - q)^2 (1 - q^2)} q^{2R} \\
        \|x^*\|^2 &\leq \frac{\hat{\zeta}^2 q^2}{(1 - q)^2 (1 - q^2)} \\
        \| \nabla F_1(x) - \nabla F_2(x) \|^2 &\leq 3 \ell_2^2 \hat{\zeta}^2 +  \frac{24  c \ell_2^2 \hat{\zeta}^2 q^2}{(1 - q)^2 (1 - q^2)} +  \frac{7 c \ell_2^4 \hat{\zeta}^2}{\mu^2}
    \end{align}
    To maintain the property that $\|\x{0} - x^*\| = \|x^*\| \leq D$ and \cref{def:app-mod-het},
Choose $\hat{\zeta}$ s.t.
\begin{align}
    \hat{\zeta}^2 = \nu \min\{ \frac{\zeta^2}{ \ell_2^2}, \frac{(1 - q)^2 (1 - q^2) \zeta^2}{ c \ell_2^2 q^2}, \frac{\mu^2 \zeta^2}{ c \ell_2^4}, \frac{(1 - q)^2 (1 - q^2) D^2}{q^2}\}
\end{align}
For an absolute constant $\nu$.

This leads to (throughout we use $\frac{\mu}{\ell_2} = \frac{(1 - q)^2}{2q}$, \citep[Eq 769]{woodworth2021minimax}) 
\begin{align}
    &F(\hat{x}) - F(x^*) \\
    &\geq \frac{\mu \hat{\zeta}^2  q^2}{16 (1 - q)^2 (1 - q^2)} q^{2R}\\
    &\geq \nu \min\{ \frac{\mu \zeta^2}{ \ell_2^2}, \frac{\mu (1 - q)^2 (1 - q^2) \zeta^2}{ c \ell_2^2 q^2}, \frac{\mu^3 \zeta^2}{ c \ell_2^4}, \frac{\mu (1 - q)^2 (1 - q^2) D^2}{q^2}\}\frac{ q^2}{16 (1 - q)^2 (1 - q^2)} q^{2R} \\
    &\geq \nu \min\{ \frac{(1-q)^2 \zeta^2}{2 \ell_2 q}, \frac{\mu (1 - q)^2 (1 - q^2) \zeta^2}{ c \ell_2^2 q^2}, \\
    & \ \ \ \frac{\mu \zeta^2 (1 - q)^4}{4 c \ell_2^2 q^2}, \frac{\mu (1 - q)^2 (1 - q^2) D^2}{q^2}\}\frac{ q^2}{16 (1 - q)^2 (1 - q^2)} q^{2R} \\
    &\geq \nu \min\{ \frac{\zeta^2 q}{32 \ell_2 (1 - q^2)}, \frac{\mu \zeta^2}{16 c \ell_2^2 }, \frac{\mu \zeta^2 (1 - q)}{64 c \ell_2^2 (1 + q)}, \frac{\mu  D^2}{16}\} q^{2R} \\
\end{align} 
We choose $\mu = \frac{\ell_2}{64 R^2}$, which ensures $\alpha \geq 2$.  So noting that $(1 - q^2) = (1 - q) (1 + q) \leq (1 + q)$, $1 + q = \frac{2 \alpha}{\alpha + 1}$, $1 - q = \frac{2}{\alpha + 1}$:
\begin{align}
    &F(\hat{x}) - F(x^*) \\
    &\geq \nu \min\{ \frac{\zeta^2 }{32 \ell_2 }(\frac{\alpha -1}{\alpha + 1} \frac{\alpha + 1}{2 \alpha}), \frac{\mu \zeta^2}{16 c \ell_2^2 }, \frac{\mu \zeta^2 }{64 c \ell_2^2 } (\frac{2}{\alpha + 1} \frac{\alpha + 1}{2 \alpha}), \frac{\mu  D^2}{16}\} q^{2R} \\
    &\geq \nu \min\{ \frac{\zeta^2 }{32 \ell_2 }(\frac{\alpha -1}{\alpha + 1} \frac{\alpha + 1}{2 \alpha}), \frac{\mu \zeta^2}{16 c \ell_2^2 }, \frac{\mu \zeta^2 }{64 c \ell_2^2 } (\frac{2}{\alpha + 1} \frac{\alpha + 1}{2 \alpha}), \frac{\mu  D^2}{16}\} \exp(-2R \log(\frac{\alpha + 1}{\alpha - 1})) \\
    &\geq \nu \min\{ \frac{\zeta^2 }{64 \ell_2 }, \frac{\mu \zeta^2}{16 c \ell_2^2 }, \frac{\mu \zeta^2 }{64 c \ell_2^2 } (\frac{1}{\alpha}), \frac{\mu  D^2}{16}\} \exp(-\frac{4R}{\alpha - 1}) 
\end{align}
 So, 
\begin{align}
    &F(\hat{x}) - F(x^*) \\
    &\geq \nu \min\{ \frac{\zeta^2 }{64 \ell_2 }, \frac{\mu \zeta^2}{16 c \ell_2^2 }, \frac{\mu \zeta^2 }{64 c \ell_2^2 } (\frac{\mu}{3 \ell_2})^{1/2}, \frac{\mu  D^2}{16}\} \exp(-\frac{8 R \sqrt{\mu}}{\sqrt{\ell_2}})\\
    &\geq \Omega(\min\{ \frac{\zeta^2 }{ \ell_2 }, \frac{ \zeta^2}{ c \ell_2 R^2}, \frac{ \zeta^2 }{ c \ell_2 R^3} , \frac{\ell_2 D^2}{R^2}\}) \\
    &\geq \Omega(\frac{ \zeta^2 }{ c \ell_2 R^3} , \frac{\ell_2 D^2}{R^2}\}) \\
\end{align}
Where we used $c > 1$. Setting $\ell_2 = \Theta(\min \{\beta, \frac{\zeta}{c^{1/2} D R^{1/2}} \})$ (which will ensure that $\ell_2 \leq \frac{\beta - \mu}{4}$ with appropriate constants chosen),
Which altogether gives us
\begin{align}
    F(\hat{x}) - F(x^*) \geq \Omega(\min\{\frac{\zeta D}{ c^{1/2}  R^{5/2}} , \frac{\beta D^2}{R^2}\})
\end{align}

\textbf{Strongly Convex Case:}

By the previous computations, we know that after $R$ rounds,
\begin{align}
    F(\hat{x}) - F(x^*) &\geq \frac{\mu \hat{\zeta}^2 q^2}{16 (1 - q)^2 (1 - q^2)} q^{2R} \\
    F(\hat{x}) - F(x^*) &\leq \frac{q \ell_2 \hat{\zeta}^2}{4 (1 - q)} \\
    \| \nabla F_1(x) - \nabla F_2(x) \|^2 &\leq 3 \ell_2^2 \hat{\zeta}^2 +  \frac{24  c \ell_2^2 \hat{\zeta}^2 q^2}{(1 - q)^2 (1 - q^2)} +  \frac{7 c \ell_2^4 \hat{\zeta}^2}{\mu^2}
\end{align}
To maintain the property that $F(\hat{x}) - F(x^*) \leq \Delta$ and that $\| \nabla F_1(x) - \nabla F_2(x) \|^2 \leq \zeta^2$ for all $x$ encountered during the execution of the algorithm,

Choose $\hat{\zeta}$ s.t.
\begin{align}
    \hat{\zeta}^2 = \nu \min\{ \frac{\zeta^2}{\ell_2^2}, \frac{(1 - q)^2 (1 - q^2) \zeta^2}{ c \ell_2^2 q^2}, \frac{\mu^2 \zeta^2}{ c \ell_2^4}, \frac{(1-q) \Delta}{q \ell_2}\}
\end{align}
For some constant $\nu$.

Again using $\frac{\mu}{\ell_2} = \frac{(1 - q)^2}{2q}$) and following the same calculations as in the convex case, we use the fact that $9\mu \leq \beta$, and that it is possible to choose $\ell_2$ s.t. $2\mu \leq \ell_2 \leq \frac{\beta - \mu}{4}$, which ensures $\beta \geq 2$.
\begin{align}
    &F(\hat{x}) - F(x^*) \\
    &\geq \frac{\mu \hat{\zeta}^2  q^2}{16 (1 - q)^2 (1 - q^2)} q^{2R}\\
    &\geq \nu \min\{ \frac{\zeta^2 }{64 \ell_2 }, \frac{\mu \zeta^2}{16 c \ell_2^2 }, \frac{\mu \zeta^2 }{64 c \ell_2^2 } (\frac{1}{\alpha}), \frac{\Delta}{4}\} \exp(-\frac{4R}{\alpha - 1}) \\
\end{align} 
We use $9 \mu \leq \beta$ and because it is possible to choose $\ell_2$ such that $2\mu \leq \ell_2 \leq \frac{\beta - \mu}{4}$ \citep[Eq. 800]{woodworth2021minimax}, which ensures $\alpha \geq 2$ and $\frac{1}{\alpha} \geq \sqrt{\frac{\mu}{3 \ell_2}}$
\begin{align}
    &F(\hat{x}) - F(x^*) \\
    &\geq \Omega(\min\{ \frac{\zeta^2 }{\ell_2 }, \frac{\mu \zeta^2}{c \ell_2^2 }, \frac{\mu \zeta^2 }{c \ell_2^{2} }(\frac{\mu}{\ell_2})^{1/2}, \Delta\} \exp(-\frac{8 R \sqrt{\mu}}{\sqrt{\ell_2}})) \\
    &\geq \Omega(\min\{\frac{\mu \zeta^2 }{c \ell_2^{2} }(\frac{\mu}{\ell_2})^{1/2}, \Delta\} \exp(-\frac{8 R \sqrt{\mu}}{\sqrt{\ell_2}}))
\end{align}
Next we note that $\ell_2 \leq \beta$ and $\ell_2 \geq \frac{\beta}{5}$ \citep[Eq 801]{woodworth2021minimax}, which gives us 
\begin{align}
    F(\hat{x}) - F(x^*) \geq \Omega(\min\{\frac{\mu^{3/2} \zeta^2 }{c \beta^{5/2} }, \Delta \} \exp(-\frac{18 R}{\sqrt{\kappa}}))
\end{align}
\end{proof}
\section{Technical Lemmas}
\begin{lemma}
    \label{lemma:minibatchvariance}
    Let \begin{align}
        g^{(r)} := \frac{1}{SK} \sum_{i \in \samp_r} \sum_{k=0}^{K-1} g_{i,k}^{(r)}
    \end{align}
    Given \cref{asm:uniform_variance}, \cref{asm:grad_het}, and uniformly sampled clients per round,
    \begin{align}
        \E \|\frac{1}{SK} \sum_{i \in \samp_r} \sum_{k=0}^{K-1} g_{i,k}^{(r)} - \nabla F(x^{(r)})\|^2 \leq \frac{\sigma^2}{SK}+ (1 - \frac{S - 1}{N-1}) \frac{\zeta^2}{S}
    \end{align}
    If instead the clients are arbitrarily (possibly randomly) sampled and there is no gradient variance,
    \begin{align}
        \|\frac{1}{SK} \sum_{i \in \samp_r} \sum_{k=0}^{K-1} g_{i,k}^{(r)} - \nabla F(x^{(r)})\|^2 \leq \zeta^2
    \end{align}
\end{lemma}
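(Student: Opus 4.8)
The plan is to split the error $g^{(r)} - \nabla F(\x{r})$ into two sources that are orthogonal in expectation: the \emph{gradient-noise} error from evaluating $K$ stochastic gradients per client, and the \emph{client-sampling} error from drawing $S$ of the $N$ clients. Writing $\bar g_i := \nabla F_i(\x{r})$ for the exact local gradient, I would use the decomposition
\begin{align}
    g^{(r)} - \nabla F(\x{r}) = \frac{1}{S}\sum_{i \in \samp_r}\Big(\frac{1}{K}\sum_{k=0}^{K-1} g_{i,k}^{(r)} - \bar g_i\Big) + \Big(\frac{1}{S}\sum_{i \in \samp_r}\bar g_i - \nabla F(\x{r})\Big),
\end{align}
calling the two summands $T_{\mathrm{noise}}$ and $T_{\mathrm{samp}}$. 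Conditioning first on the sampled set $\samp_r$, the term $T_{\mathrm{samp}}$ is deterministic while $T_{\mathrm{noise}}$ has conditional mean zero by the unbiasedness part of \cref{asm:uniform_variance}; hence the cross term vanishes under $\E_z$, and taking a further expectation over $\samp_r$ gives $\E\|g^{(r)} - \nabla F(\x{r})\|^2 = \E\|T_{\mathrm{noise}}\|^2 + \E\|T_{\mathrm{samp}}\|^2$, so the two contributions can be bounded separately.

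For $T_{\mathrm{noise}}$, I would note that conditioned on $\samp_r$ the per-client averages are independent across $i$ and each $g_{i,k}^{(r)}$ is an independent draw; dropping the cross terms and applying the per-query variance bound $\E_z\|\frac{1}{K}\sum_k g_{i,k}^{(r)} - \bar g_i\|^2 \le \sigma^2/K$ from \cref{asm:uniform_variance} yields $\E\|T_{\mathrm{noise}}\|^2 \le \frac{1}{S^2}\cdot S \cdot \frac{\sigma^2}{K} = \frac{\sigma^2}{SK}$. For $T_{\mathrm{samp}}$, the key observation is that this is exactly the variance of the sample mean of $S$ items drawn uniformly \emph{without replacement} from the population $\{\bar g_i\}_{i=1}^N$, whose population mean is $\nabla F(\x{r})$. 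Invoking the finite-population-correction identity gives $\E\|T_{\mathrm{samp}}\|^2 = \frac{1}{S}\big(1 - \frac{S-1}{N-1}\big)\cdot\frac{1}{N}\sum_{i=1}^N\|\bar g_i - \nabla F(\x{r})\|^2$, and bounding each summand by $\zeta^2$ through \cref{asm:grad_het} produces $\E\|T_{\mathrm{samp}}\|^2 \le \big(1 - \frac{S-1}{N-1}\big)\frac{\zeta^2}{S}$. Adding the two bounds gives the first inequality.

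The second inequality is the degenerate case: with no gradient variance we have $\sigma = 0$ and $g_{i,k}^{(r)} = \bar g_i$, so $T_{\mathrm{noise}} = 0$ identically and no expectation over $z$ is needed. Then $\|g^{(r)} - \nabla F(\x{r})\|^2 = \|\frac{1}{S}\sum_{i\in\samp_r}(\bar g_i - \nabla F(\x{r}))\|^2$, and Jensen's inequality together with \cref{asm:grad_het} gives $\frac{1}{S}\sum_{i\in\samp_r}\|\bar g_i - \nabla F(\x{r})\|^2 \le \zeta^2$; since this uses no probabilistic structure of the sampling, it holds for any (even adversarially chosen) set $\samp_r$, as claimed.

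The main obstacle is getting the finite-population-correction factor $1 - \frac{S-1}{N-1}$ exactly right rather than settling for a cruder $\zeta^2/S$ or with-replacement bound. Everything else is a routine bias-variance split, but the without-replacement covariance between distinct sampled indices is what forces the precise $\frac{N-S}{N-1}$ shrinkage; I would verify it either by expanding $\E\|\frac{1}{S}\sum_{i\in\samp_r}(\bar g_i - \nabla F(\x{r}))\|^2$ directly using $\Pr[i,j\in\samp_r] = \frac{S(S-1)}{N(N-1)}$ for $i\neq j$ and the fact that $\sum_i(\bar g_i - \nabla F(\x{r})) = 0$, or by citing the standard sampling lemma.
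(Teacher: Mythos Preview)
Your proposal is correct and follows essentially the same approach as the paper: both decompose the error into a gradient-noise term and a client-sampling term (the paper phrases this as $\E[X^2]=\operatorname{Var}(X)+\E[X]^2$), bound the noise term by $\sigma^2/(SK)$ via independence, apply the without-replacement finite-population correction to get $(1-\tfrac{S-1}{N-1})\zeta^2/S$ for the sampling term, and use Jensen's inequality for the second conclusion. Your write-up is in fact more explicit than the paper's about why the cross term vanishes and how the $\tfrac{N-S}{N-1}$ factor arises.
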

\begin{proof}
    \begin{align}
        &\E \|\frac{1}{SK} \sum_{i \in \samp_r} \sum_{k=0}^{K-1} \nabla f(x; z_i) - \nabla F(x)\|^2 \\
        \label{eq:variance_decomp}
        &= \E \|\frac{1}{S} \sum_{i \in \samp_r} \nabla F_i(x) - \nabla F(x)\|^2 + \E \|\frac{1}{SK} \sum_{i \in \samp_r} \sum_{k=0}^{K-1} \nabla f(x;z_i) - \nabla F_i(x)\|^2\\
        \label{eq:jensens}
        &\leq \E \|\frac{1}{S} \sum_{i \in \samp_r} \nabla F_i(x) - \nabla F(x)\|^2 + \frac{1}{S^2 K^2} \sum_{i \in \samp_r} \sum_{k=0}^{K-1} \E \|\nabla f(x;z_i) - \nabla F_i(x)\|^2
    \end{align}
    We get \cref{eq:variance_decomp} by using $\E[X^2] = \text{Var}(X) + \E[X]^2$ over the randomness of the gradient variance.  We get \cref{eq:jensens} using the fact that the gradient randomness is i.i.d across $i$ and $k$ (so the cross terms in the expansion vanish).  Note that $\E \|\nabla f(x;z_i) - \nabla F_i(x)\|^2 \leq \sigma^2$ by \cref{asm:uniform_variance}.  On the other hand by \cref{asm:grad_het}
    \begin{align}
        \E \|\frac{1}{S} \sum_{i \in \samp_r} \nabla F_i(x) - \nabla F(x)\|^2 \leq (1 - \frac{S - 1}{N-1}) \frac{\E \|\nabla F_i(x) - \nabla F(x)\|^2}{S} \leq (1 - \frac{S - 1}{N-1}) \frac{\zeta^2}{S}
    \end{align}
    So altogether 
    \begin{align}
        \E \|\frac{1}{SK} \sum_{i \in \samp_r} \sum_{k=1}^K \nabla f(x; z_i) - \nabla F(x)\|^2 \leq \frac{\sigma^2}{SK}+ (1 - \frac{S - 1}{N-1}) \frac{\zeta^2}{S}
    \end{align}
    The second conclusion follows by noting
    \begin{align}
        \|\frac{1}{S} \sum_{i \in \samp_r} \nabla F_i(x) - \nabla F(x)\|^2 \leq \frac{1}{S} \sum_{i \in \samp_r} \|\nabla F_i(x) - \nabla F(x)\|^2
    \end{align}
\end{proof}

\begin{lemma}
    \label{lemma:choosefunc}
    Let $u,v$ be arbitrary (possibly random) points.  Define $\hat{F}(x) = \frac{1}{SK} \sum_{i \in \mathcal{S}} \sum_{k=0}^{K-1} f(x; \hat{z}_{i,k})$ where $\hat{z}_{i,k} \sim \mathcal{D}_i$.  Let 
    \[w = \argmin_{x \in \{u,v\}} \hat{F}(x)\]
    Then
    \begin{align}
        &\E [F(w) - F(x^*)] \\
        &\leq \min \{  F(u) - F(x^*),  F(v) - F(x^*)\} + 4\frac{\sigma_F}{\sqrt{S K}} + 4\sqrt{1 - \frac{S-1}{N-1}}\frac{\zeta_F}{\sqrt{S}}
    \end{align}
    and
    \begin{align}
        &\E [F(w) - F(x^*)] \\
        &\leq \min \{  \E F(u) - F(x^*),  \E F(v) - F(x^*)\} + 4\frac{\sigma_F}{\sqrt{S K}} + 4\sqrt{1 - \frac{S-1}{N-1}}\frac{\zeta_F}{\sqrt{S}}
    \end{align}
    
\end{lemma}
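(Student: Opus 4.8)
The plan is to treat $\hat{F}$ as an unbiased but noisy proxy for $F$ and to control the ``regret'' incurred by selecting the minimizer of $\hat{F}$ rather than of $F$ over the two candidates $\{u,v\}$. Write $e(x) := \hat{F}(x) - F(x)$ for the sampling error. First I would record that $\hat{F}$ is unbiased: since the clients in $\mathcal{S}$ are drawn uniformly and $\E_{z_i \sim \mathcal{D}_i} f(x;z_i) = F_i(x)$ by \cref{asm:cost_variance}, we have $\E \hat{F}(x) = F(x)$ for every fixed $x$, so $\E e(x) = 0$.

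Next I would bound the second moment of the error exactly as in \cref{lemma:minibatchvariance}, but with the scalar quantities $F_i(x)$ in place of the gradients $\nabla F_i(x)$. Decomposing $e(x) = \big[\tfrac{1}{S}\sum_{i\in\mathcal{S}} F_i(x) - F(x)\big] + \big[\tfrac{1}{SK}\sum_{i\in\mathcal{S}}\sum_{k} (f(x;\hat{z}_{i,k}) - F_i(x))\big]$ and using that the query noise is zero-mean and i.i.d.\ across $(i,k)$ conditioned on $\mathcal{S}$ (so the cross term vanishes), the within-client term contributes at most $\sigma_F^2/(SK)$ by \cref{asm:cost_variance} and the finite-population sampling term contributes at most $(1-\tfrac{S-1}{N-1})\zeta_F^2/S$ by \cref{asm:func-het}. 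Hence $\E e(x)^2 \le \tfrac{\sigma_F^2}{SK} + (1-\tfrac{S-1}{N-1})\tfrac{\zeta_F^2}{S}$, and by Jensen together with $\sqrt{a+b}\le\sqrt{a}+\sqrt{b}$, $\E|e(x)| \le \tfrac{\sigma_F}{\sqrt{SK}} + \sqrt{1-\tfrac{S-1}{N-1}}\tfrac{\zeta_F}{\sqrt{S}}$.

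The core is a deterministic comparison inequality. Assume without loss of generality $F(u)\le F(v)$, so $\min\{F(u),F(v)\}=F(u)$. If $w=u$ the gap $F(w)-F(u)$ is zero; if instead $w=v$ was selected then $\hat{F}(v)\le\hat{F}(u)$, i.e.\ $F(v)+e(v)\le F(u)+e(u)$, whence $F(w)-F(u)=F(v)-F(u)\le e(u)-e(v)\le|e(u)|+|e(v)|$. In either case $F(w)-\min\{F(u),F(v)\}\le|e(u)|+|e(v)|$. Taking expectations conditionally on $u,v$ (which are independent of the fresh comparison samples) and inserting the bound on $\E|e(\cdot)|$ yields $\E[F(w)-F(x^*)]\le \min\{F(u),F(v)\}-F(x^*) + 2\big(\tfrac{\sigma_F}{\sqrt{SK}}+\sqrt{1-\tfrac{S-1}{N-1}}\tfrac{\zeta_F}{\sqrt{S}}\big)$, which is stronger than the stated constant $4$ and implies it. The second inequality follows by taking a further expectation over $u,v$ and using $\E\min\{F(u),F(v)\}\le\min\{\E F(u),\E F(v)\}$.

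The hard part will be the bookkeeping around the randomness rather than any single estimate: one must ensure the candidate points $u,v$ are independent of the samples $\{\hat{z}_{i,k}\}$ and $\mathcal{S}$ used to form $\hat{F}$ (which holds in \cref{algo:chaining}, since these are drawn afresh after $\ahead$ produces $\hat{x}_{1/2}$), so that the conditional moment bounds apply, and to carry the finite-population correction $(1-\tfrac{S-1}{N-1})$ through correctly. The deterministic selection step is short; the only genuine content is the function-value variance bound, which is a direct transcription of \cref{lemma:minibatchvariance} with scalars replacing gradients.
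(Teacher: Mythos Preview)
Your argument is correct and in fact cleaner than the paper's. Both proofs share the same variance computation for $\hat F(x)-F(x)$, but they diverge at the selection step. The paper parametrizes the gap by $2a:=F(v)-F(u)$, uses Chebyshev to bound the probability $P(w=v)$ by $\tfrac{2}{a^2}\bigl(\tfrac{\sigma_F^2}{SK}+(1-\tfrac{S-1}{N-1})\tfrac{\zeta_F^2}{S}\bigr)$, and then balances the resulting bound $\tfrac{4}{a}(\cdot)$ against the trivial bound $2a$ by taking the worst case over $a$; this optimization is what produces the constant~$4$. You instead establish the pointwise inequality $F(w)-\min\{F(u),F(v)\}\le |e(u)|+|e(v)|$ directly from the selection rule and then bound $\E|e(\cdot)|$ via Jensen. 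Your route avoids the probability estimate and the min--max balancing entirely, and it yields the sharper constant~$2$. The only mild caveat is to make explicit, as you note, that the bound on $\E|e(x)|$ is applied conditionally on $(u,v)$ using fresh samples, which is exactly how \cref{algo:chaining} is set up; the paper leaves this implicit as well. The passage to the second inequality via $\E\min\le\min\E$ is identical in both proofs.
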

\begin{proof}
    Suppose that $F(u)  + 2a = F(v)$, where $a \geq 0$.  Then, 
    \begin{align}
        \E [F(w) - F(x^*)] = P(w = u) (F(u) - F(x^*))  + P(w = v)(F(v) - F(x^*))
    \end{align}
    Substituting $F(v) = F(u) + 2a$,
    \begin{align}
        \E [F(w) - F(x^*)] &= P(w = u) (F(u) - F(x^*))  + P(w = v)(F(u) + 2a - F(x^*)) \\
        &\leq F(u) - F(x^*) + 2a
    \end{align}
    Observe that $\E (\hat{F}(x) - F(x))^2 \leq \frac{\sigma_F^2}{S K} + (1 - \frac{S-1}{N-1})\frac{\zeta_F^2}{S}$ by \cref{asm:cost_variance} and \cref{asm:func-het}.  Therefore by Chebyshev's inequality,
    \begin{align}
        P(|\hat{F}(x) - F(x)| \geq a) \leq \frac{1}{a^2}(\frac{\sigma_F^2}{S K} + (1 - \frac{S-1}{N-1})\frac{\zeta_F^2}{S})
    \end{align}
    Observe that
    \begin{align}
        P(w = v) &\leq P(\hat{F}(u) > F(u) + a) + P(\hat{F}(v) < F(v) - a) \\
        &\leq \frac{2}{a^2}(\frac{\sigma_F^2}{S K} + (1 - \frac{S-1}{N-1})\frac{\zeta_F^2}{S})
    \end{align}
    And so it is also true that
    \begin{align}
        \E [F(w) - F(x^*)] &= F(u) - F(x^*) + 2a P(w = v) \\
        &\leq F(u) - F(x^*) + 2a (\frac{2}{a^2}(\frac{\sigma_F^2}{S K} + (1 - \frac{S-1}{N-1})\frac{\zeta_F^2}{S})) \\
        &= F(u) - F(x^*) + \frac{4}{a}(\frac{\sigma_F^2}{S K} + (1 - \frac{S-1}{N-1})\frac{\zeta_F^2}{S})
    \end{align}
    Therefore altogether we have that 
    \begin{align}
        \E [F(w) - F(x^*)] \leq F(u) - F(x^*) + \max_a \min \{\frac{4}{a}(\frac{\sigma_F^2}{S K} + (1 - \frac{S-1}{N-1})\frac{\zeta_F^2}{S}), 2a\}
    \end{align}
    So by maximizing for $a$ 
    \begin{align}
        \frac{4}{a}(\frac{\sigma_F^2}{S K} + (1 - \frac{S-1}{N-1})\frac{\zeta_F^2}{S}) = 2a \implies a = \sqrt{2(\frac{\sigma_F^2}{S K} + (1 - \frac{S-1}{N-1})\frac{\zeta_F^2}{S})}
    \end{align}
    which gives us
    \begin{align}
        \E [F(w) - F(x^*)] \leq F(u) - F(x^*) + 4\frac{\sigma_F}{\sqrt{S K}} + 4\sqrt{1 - \frac{S-1}{N-1}}\frac{\zeta_F}{\sqrt{S}}
    \end{align}
    Then proof also holds if we switch the roles of $u$ and $v$, and so altogether we have that:
    if $F(u) < F(v)$,
    \begin{align}
        \E [F(w) - F(x^*)] \leq F(u) - F(x^*) + 4\frac{\sigma_F}{\sqrt{S K}} + 4\sqrt{1 - \frac{S-1}{N-1}}\frac{\zeta_F}{\sqrt{S}}
    \end{align}
    if $F(u) > F(v)$,
    \begin{align}
        \E [F(w) - F(x^*)] \leq F(v) - F(x^*) + 4\frac{\sigma_F}{\sqrt{S K}} + 4\sqrt{1 - \frac{S-1}{N-1}}\frac{\zeta_F}{\sqrt{S}}
    \end{align}
    implying the first part of the theorem.
    \begin{align}
        \E [F(w) - F(x^*)] \leq \min \{F(u) - F(x^*), F(v) - F(x^*) \} + 4\frac{\sigma_F}{\sqrt{S K}} + 4\sqrt{1 - \frac{S-1}{N-1}}\frac{\zeta_F}{\sqrt{S}}
    \end{align}
    For the second part, 
    \begin{align}
        \E[\min \{ F(u) - F(x^*),  F(v) - F(x^*)\}| v] &= \int_{F(u) \leq F(v)} F(u) - F(x^*) \\
        &\leq \min \{ \E F(u) - F(x^*), F(v) - F(x^*) \}
    \end{align}
    and then 
    \begin{align}
        \E[\min \{ \E F(u) - F(x^*), F(v) - F(x^*) \}] &= \int_{F(v) \leq \E F(u)} F(u) - F(x^*) \\
        &\leq \min \{ \E F(u) - F(x^*), \E F(v) - F(x^*) \}
    \end{align}
\end{proof}

\section{Experimental Setup Details}

\subsection{Convex Optimization}
\label{app:convex-details}
We empirically evaluate \fedchain on federated regularized logistic regression.  Let  $(x_{i,j}, y_{i,j})$ be the $j$th datapoint of the $i$th client and $n_i$ is the number of datapoints for the $i$-th client. We minimize \cref{eq:objective} where
\begin{align*}
	F_i(w) &= \frac{1}{n_i} (\sum_{j=1}^{n_i} -y_{i,j} \log(w^\top x_{i,j}) 
	- (1 - y_{i,j}) \log(1 - w^\top x_{i,j}))) + \frac{\mu}{2} \|w\|^2.
\end{align*}


\myparatightest{Dataset}
We use the MNIST dataset of handwritten digits \citep{lecun2010mnist}.
We model a federated setting with five clients by partitioning the data into groups. 
First, we take 500 images from each digit class (total 5,000).  
Each client's local data is a mixture of data drawn from two digit classes (leading to heterogeneity), and data sampled uniformly from all classes. 

We call a federated dataset $X\%$ homogeneous if the first $X\%$ of each class's 500 images is shuffled and evenly partitioned to each client.  
The remaining $(100 - X) \%$ is partitioned as follows: client $i \in \{1,\ldots, 5\}$ receives the remaining non-shuffled data from classes $2i-2$ and $2i - 1$.  
For example, in a 50\% homogeneous setup, client 3 has 250 samples from digit 4, 250 samples from digit 5, and 500 samples drawn uniformly from all classes. 
Note that 100\% homogeneity is \emph{not} the same thing as setting heterogeneity $\zeta=0$ due to sampling randomness; we use this technique for lack of a better control over $\zeta$.
To model binary classification, we let even classes represent 0's and odd classes represent 1's, and set $K = 20$.  \textbf{All clients participate per round.}

\myparatightest{Hyperparameters}
All experiments initialize iterates at 0 with regularization $\mu = 0.1$. 
We fix the total number of rounds $R$ (differs across experiments). 
For algorithms without stepsize decay, the tuning process is as follows.

We tune all stepsizes $\eta$ in the range below:
\begin{align}
	\label{eq:range}
	\{10^{-3}, 10^{-2.5}, 10^{-2}, 10^{-1.5}, 10^{-1}\}
\end{align}
We tune the percentage of rounds before switching from $\ahead$ to $\atail$ (if applicable) as
\begin{align}
    \label{eq:switch}
	\{10^{-2}, 10^{-1.625}, 10^{-1.25}, 10^{-0.875}, 10^{-0.5}\}
\end{align}  
For algorithms with acceleration, we run experiments using the more easily implementable (but less mathematically tractable for our analysis) version in \citet{aybat2019universally} as opposed to \cref{algo:acsa}.

For algorithms with stepsize decay, we instead use the range \cref{eq:switch} to decide the percentage of rounds to wait before decreasing the stepsize by half--denote this number of rounds as $R_{\text{decay}}$.  Subsequently, every factor of 2 times $R_{\text{decay}}$ we decrease the stepsize by half.

We use the heuristic that when the stepsize decreases to $\eta/K$, we switch to $\atail$ and begin the stepsize decay process again.  All algorithms are tuned to the best final gradient norm averaged over 1000 runs.

\subsection{Nonconvex Optimization}
\subsubsection{EMNIST}
\label{app:emnist-details}

In this section we detail how we use the EMNIST \citep{cohen2017emnist} dataset for our nonconvex experiments, where the handwritten characters are partitioned by their author.  In this dataset, there are 3400 clients, with 671,585 images in the train set and 77,483 in the test set.  The task is to train a convolutional network with two convolutional layers, max-pooling, dropout, and two dense layers as in the Federated Learning task suite from \citep{reddi2020adaptive}.

\myparatightest{Hyperparameters} 
We set $R = 500$, and rather than fixing the number of local steps, we fix the number of local client epochs (the number of times a client performs gradient descent over its own dataset) to 20.  The number of clients sampled per round is 10.  These changes were made in light of the imbalanced dataset sizes across clients, and is standard for this dataset-task \citep{reddi2020adaptive,charles2020outsized,charles2021large}.  For all algorithms aside from (M-\Mbsgd), we tune $\eta$ in the range $\{0.05, 0.1, 0.15, 0.2, 0.25\}$.  We tune the percentage of rounds before switching from $\ahead$ to $\atail$ in $\{0.1, 0.3, 0.5, 0.7, 0.9\}$, where applicable.

We next detail the way algorithms with stepsize decay are run.  We tune the stepsize $\eta$ in $\{0.1, 0.2, 0.3, 0.4, 0.5\}$.  Let $R_{\text{decay}}$ be the number of rounds before the first decay event.  We tune $R_{\text{decay}}\in \lceil \{0.1R, 0.275R, 0.45R\} \rceil$.  Whenever the number of passed rounds is a power of 2 of $R_{\text{decay}}$, we halve the stepsize/  After $R'$ rounds have passed, we enter $\atail$, where we tune $R' \in \lceil\{0.5R, 0.7R, 0.9R\}\rceil$.  We restart the decay process upon entering $\atail$.

For M-SGD we tune \[\eta \in \{0.1, 0.2, 0.3, 0.4, 0.5\}\] and \[R_{\text{decay}} \in \lceil \{10^{-2}R, 10^{-1.625}R, 10^{-1.25}R, 10^{-0.875}R, 10^{-0.5}R\} \rceil\]
For stepsize decaying SCAFFOLD and \lsgd, we tune $\eta \in \{0.1, 0.2, 0.3, 0.4, 0.5 \}$ and $R_{\text{decay}}\in \lceil \{0.1R, 0.275R, 0.45R\} \rceil$. 
When evaluating a particular metric the algorithms are tuned according to the mean of the last 5 evaluated iterates of that particular metric.  Reported accuracies are also the mean of the last 5 evaluated iterates, as the values mostly stabilized during training.
\subsubsection{CIFAR-100}
\label{app:cifar-details}
For the CIFAR-100 experiments, we use the CIFAR-100 \citep{Krizhevsky09learningmultiple} dataset, where the handwritten characters are partitioned via the method proposed in \citep{reddi2020adaptive}.  In this dataset, there are 500 train clients, with a total of 50,000 images in the train set.  There are 100 test clients and 10,000 images in the test set.  The task is to train a ResNet-18, replacing batch normalization layers with group normalization as in the Federated Learning task suite from \citep{reddi2020adaptive}.  We leave out SCAFFOLD in all experiments because of out of memory issues.


\myparatightest{Hyperparameters} 
We set $R = 5000$, and rather than fixing the number of local steps, we fix the number of local client epochs (the number of times a client performs gradient descent over its own dataset) to 20.  The number of clients sampled per round is 10.  These changes were made in light of the imbalanced dataset sizes across clients, and is standard for this dataset-task \citep{reddi2020adaptive,charles2020outsized,charles2021large}.  For all algorithms we tune $\eta$ in $\{0.1, 0.2, 0.3, 0.4, 0.5\}$.  Algorithms without stepsize decay tune the percentage of rounds before switching to $\atail$ in $\{0.1, 0.3, 0.5, 0.7, 0.9\}$.  Algorithms with stepsize decay (aside from M-SGD) tune the number of rounds before the first decay event in $\{0.1, 0.275, 0.45\}$ (and for every power of 2 of that percentage, the stepsize decays in half), and the number of rounds before swapping to $\atail$ in $\{0.5,0.7, 0.9\}$ if applicable (stepsize decay process restarts after swapping).  M-SGD tunes the number of rounds before the first decay event in $\{0.01, 0.0237, 0.0562, 0.1334, 0.3162\}$.  We tune for and return the test accuracy of the last iterate, as accuracy was still improving as we trained.

\section{Additional Convex Experiments}

In this section, we give supplemental experimental results to verify the following claims:
\begin{enumerate}
    \item Higher $K$ allows (1) accelerated algorithms to perform better than non-accelerated algorithms, and furthermore (2) allows us to run $\ahead$ for one round to get satisfactory results.
    \item Our \fedchain instantiations outperform stepsize decaying baselines
\end{enumerate}
\subsection{Verifying the effect of increased \texorpdfstring{$K$}{K} and \texorpdfstring{$R = 1$}{R=1}}
\label{sec:largek}
Our proofs of \chaineds~all also work if we run $\ahead$ for only one communication round, so long as $K$ is sufficiently large (exact number is in the formal theorem proofs).  In this section we verify the effect of large $K$ in conjunction with running $\ahead$ for only one round.  The setup is the same as in \cref{app:convex-details}, except we tune the stepsize $\eta$ in a larger grid:
\begin{align}
    \label{largeKgrid}
    \eta \in \{10^{-3}, 10^{-2.5}, 10^{-2}, 10^{-1.5}, 10^{-1}, 10^{-0.5}, 10^{0} \}
\end{align}
The plots are in \cref{fig:largeK}.  For algorithms that are ``1-X$\to$Y'', we run algorithm X for one round and algorithm Y for the rest of the rounds.  These algorithms are tuned in the following way.  Algorithm X has its stepsize tuned in \cref{largeKgrid}, and Algorithm Y independently has its stepsize tuned in \cref{largeKgrid}.  We require these to be tuned separately because local update algorithms and centralized algorithms have stepsizes that depend on $K$ differently if $K$ is large (see, for example, \cref{thm:sgd-formal} and \cref{thm:fedavg-formal}).  The baselines have a single stepsize tuned in \cref{largeKgrid}.  These are tuned for the lowest final gradient norm averaged over 1000 runs.

Overall, we see that in the large $K$ regime, algorithms that start with a local update algorithm and then finish with an accelerated centralized algorithm have the best communication complexity.  This is because larger $K$ means the error due to variance decreases, for example as seen in \cref{thm:fedavg-asg-formal}.  Acceleration increases instability (if one does not perform a carefully calibrated stepsize decay, as we do not in the experiments), so decreasing variance helps accelerated algorithms the most.  Furthermore, a single round for $\ahead$ suffices to see large improvement as seen in \cref{fig:largeK}.  This is expected, because both the ``bias'' term (the $\Delta \exp(-\frac{R \sqrt{K}}{\kappa})$ term in \cref{thm:fedavg-formal}) and the variance term become negligible, leaving just the heterogeneity term as desired.
\begin{figure*}[t] 
	\begin{minipage}[b]{0.3\linewidth}
		\centering
		\includegraphics[width=\textwidth]{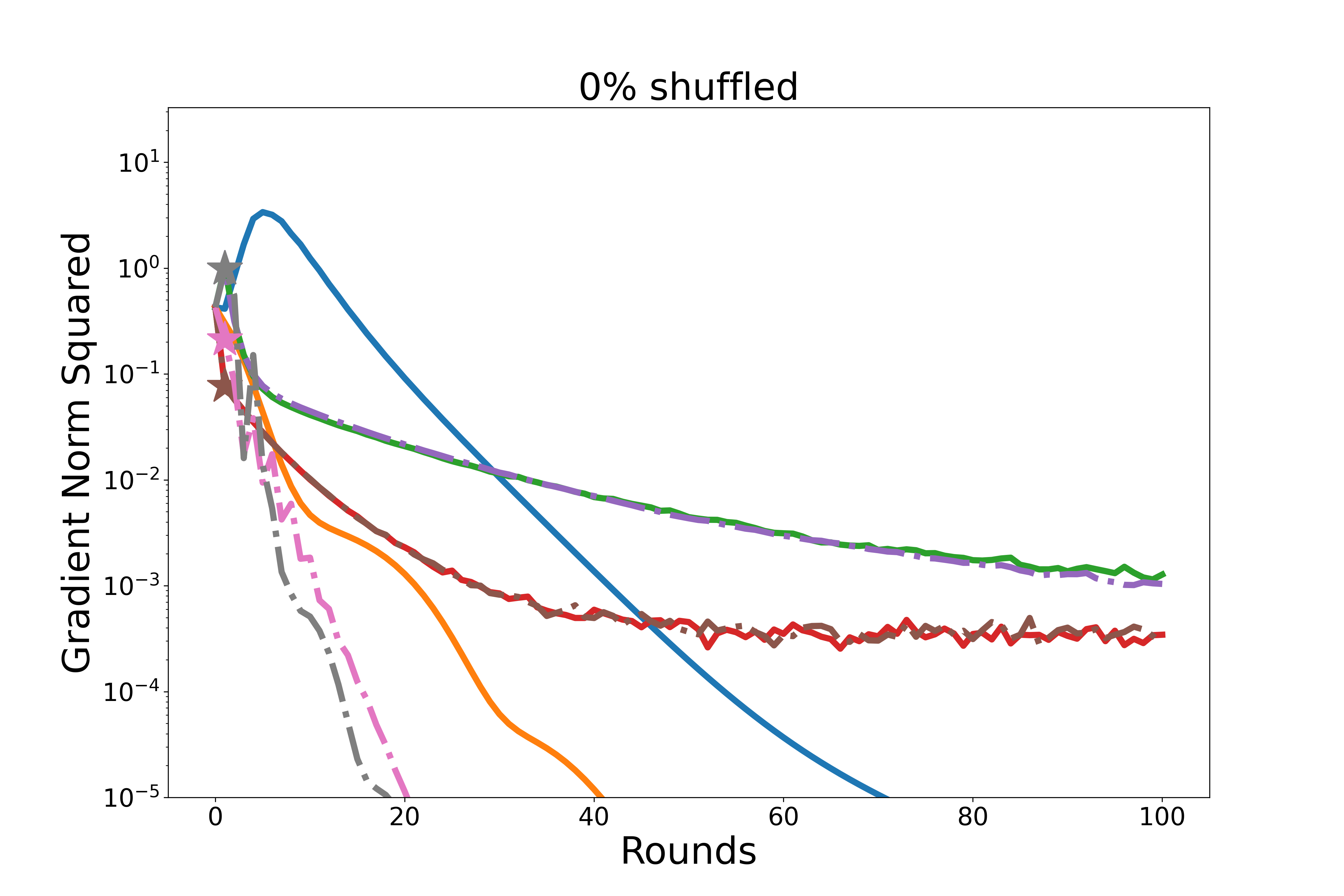}
	\end{minipage}
	~~
	\hspace{-1.75em}
	\begin{minipage}[b]{0.3\linewidth}
		\centering
		\includegraphics[width=\textwidth]{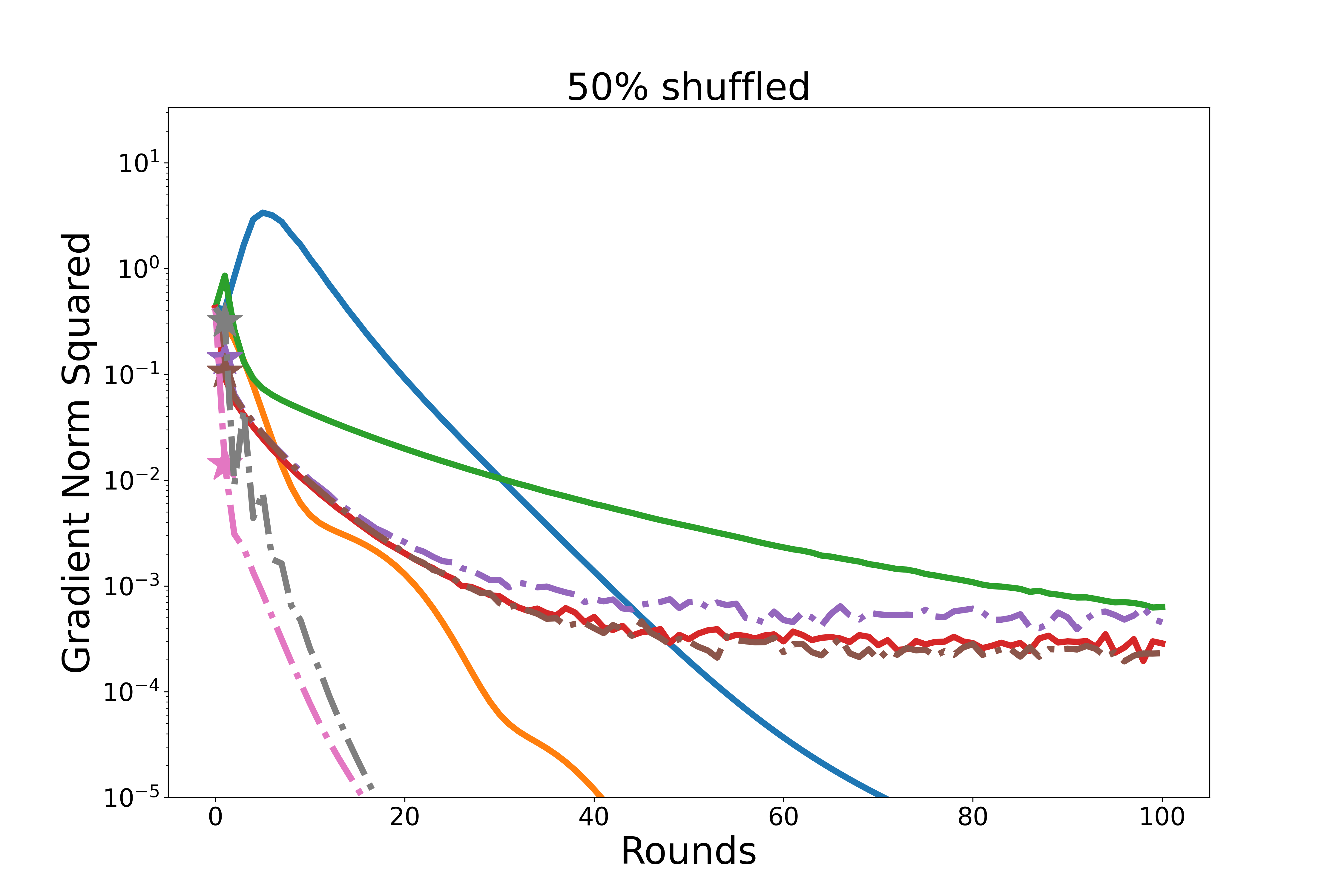}
	\end{minipage}
	~~
	\hspace{-1.75em}
	\begin{minipage}[b]{0.3\linewidth}
		\centering
		\includegraphics[width=\textwidth]{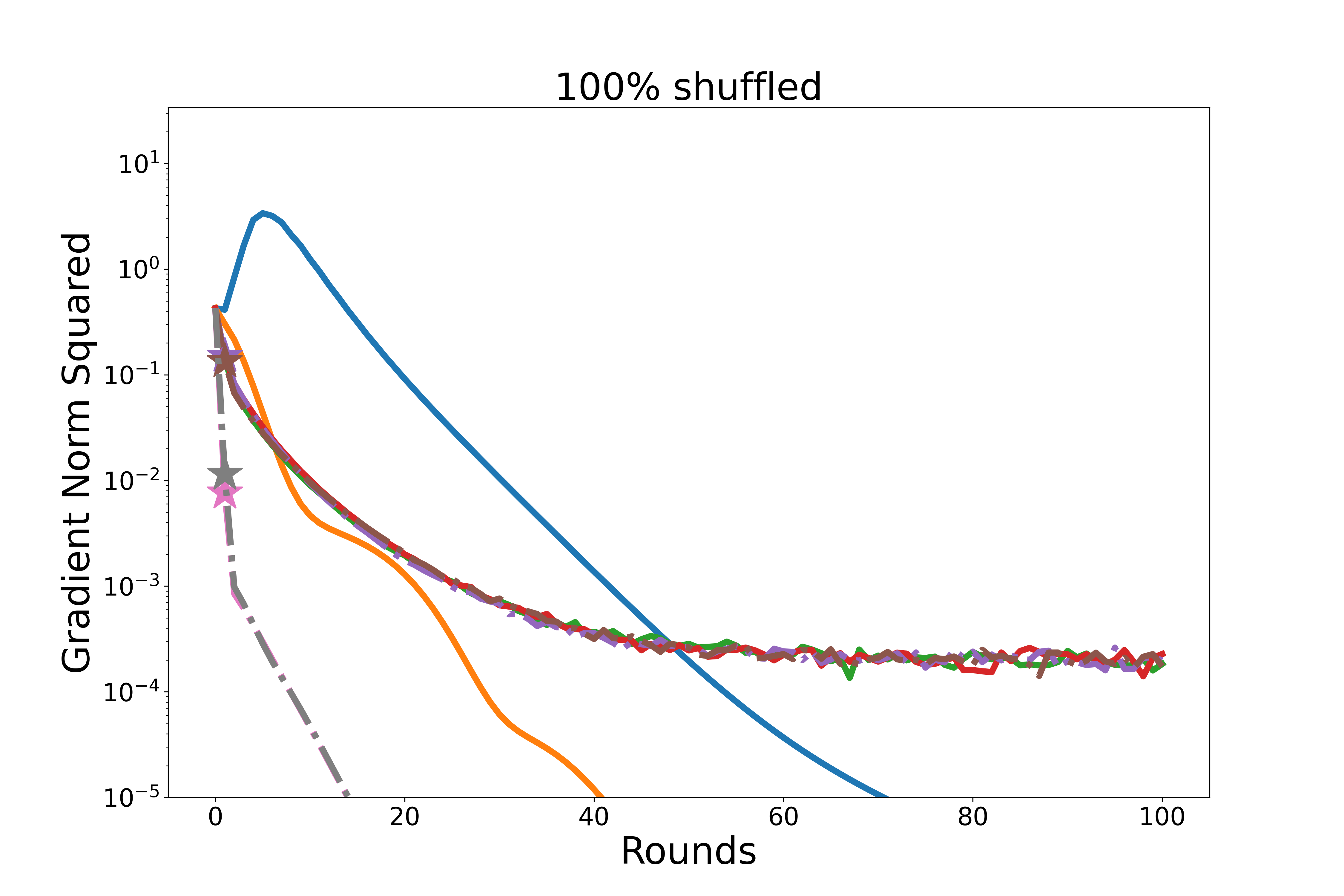}
	\end{minipage}
	~~
	\begin{minipage}[t]{0.1\linewidth}
		\centering
		\raisebox{0.5cm}[\dimexpr\height-2cm]{%
			\includegraphics[width=0.9\textwidth]{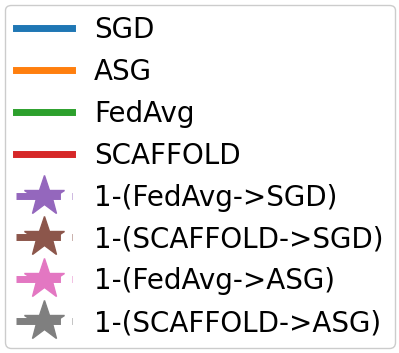}
		}
	\end{minipage}
	~~
	\caption{We investigate the effect of high $K$ ($K = 100$).  ``1-X$\to$Y'' denotes a chained algorithm with X run for one round and Y run for the rest of the rounds.  Chained algorithms, even with one round allocated to $\ahead$, perform the best.  Furthermore, accelerated algorithms outperform their non-accelerated counterparts.}
	\label{fig:largeK}
\end{figure*}
\subsection{Including Stepsize Decay Baselines}
\label{app:m-plots}

\begin{figure*}[t] 
	\begin{minipage}[b]{0.3\linewidth}
		\centering
		\includegraphics[width=\textwidth]{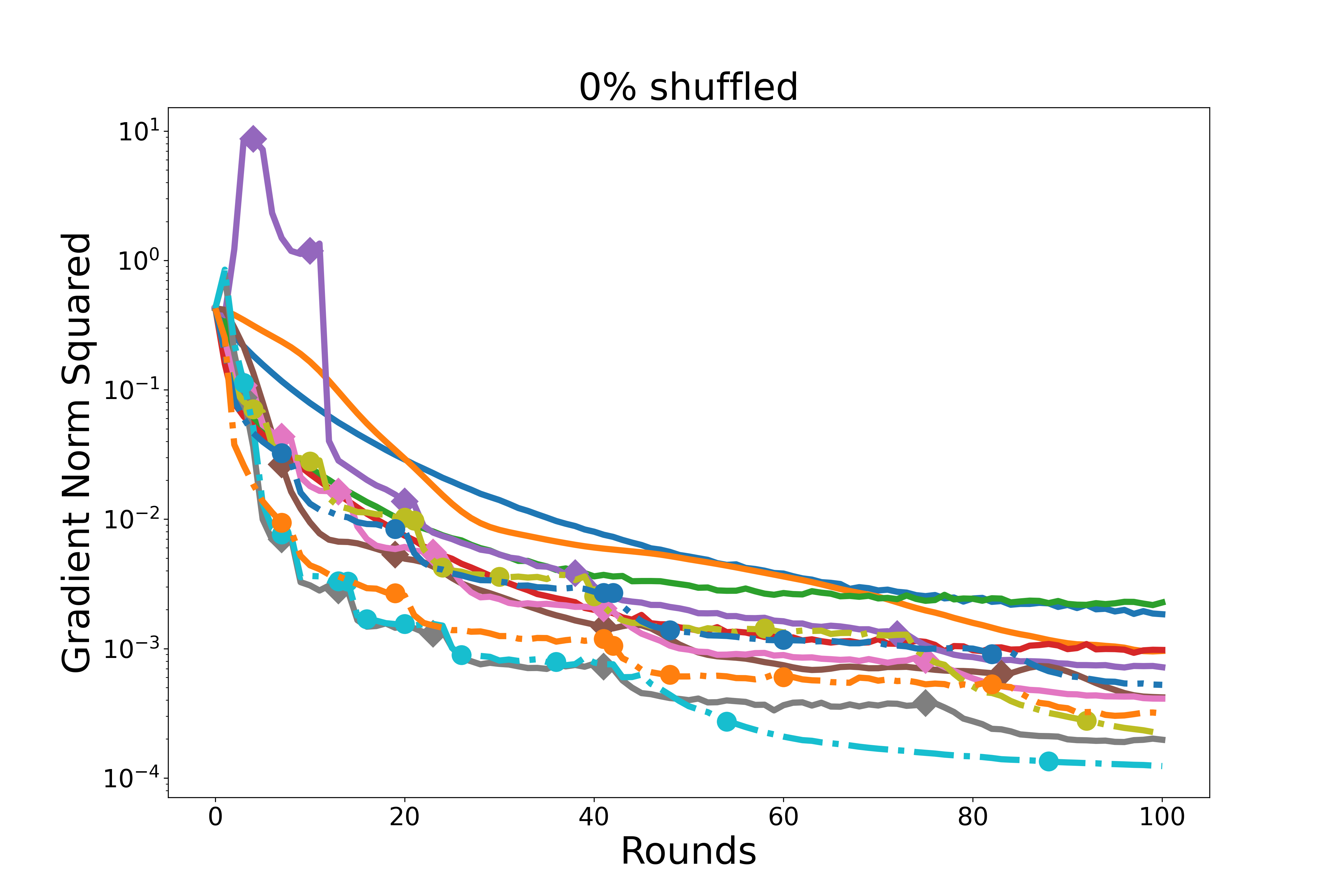}
	\end{minipage}
	~~
	\hspace{-1.75em}
	\begin{minipage}[b]{0.3\linewidth}
		\centering
		\includegraphics[width=\textwidth]{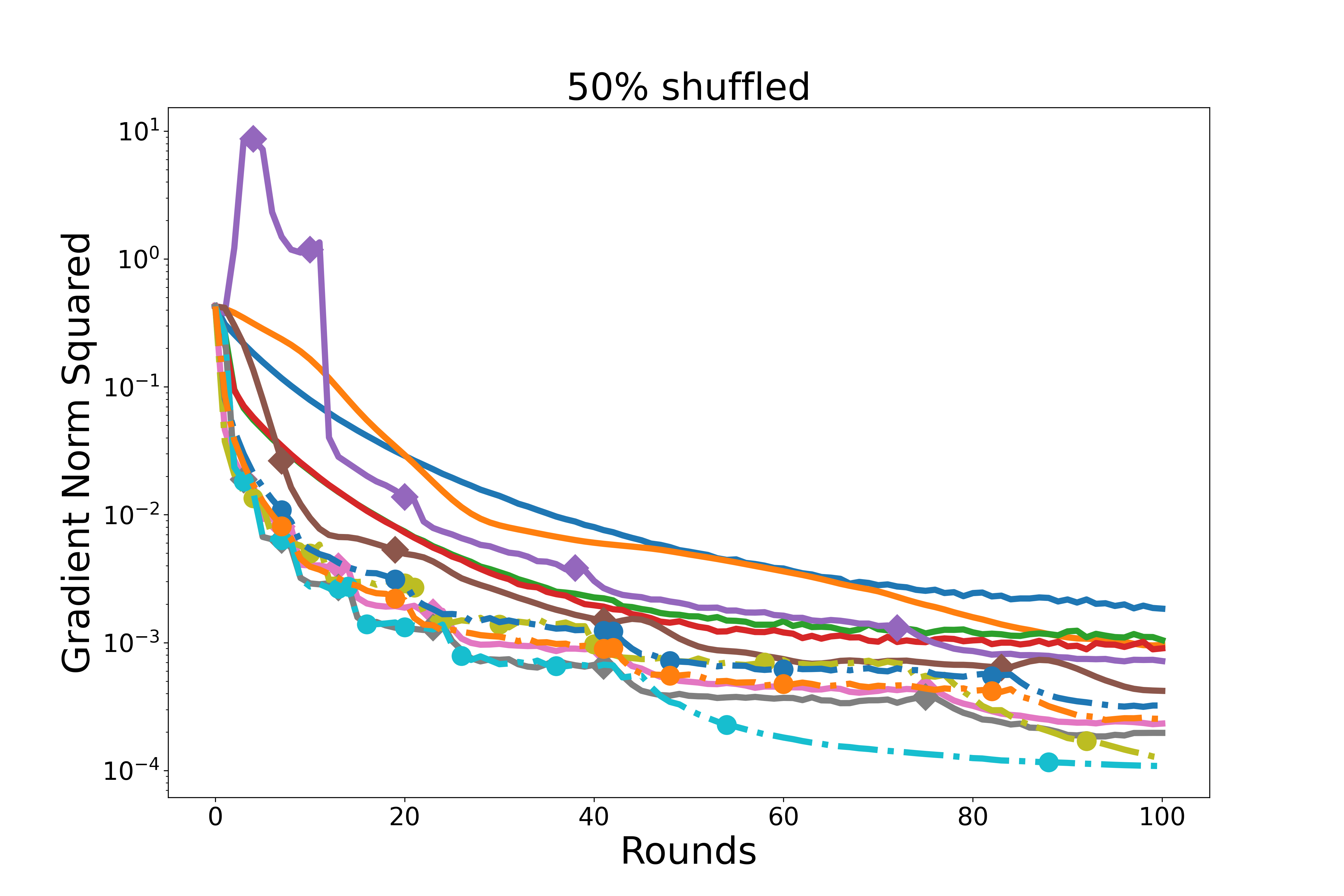}
	\end{minipage}
	~~
	\hspace{-1.75em}
	\begin{minipage}[b]{0.3\linewidth}
		\centering
		\includegraphics[width=\textwidth]{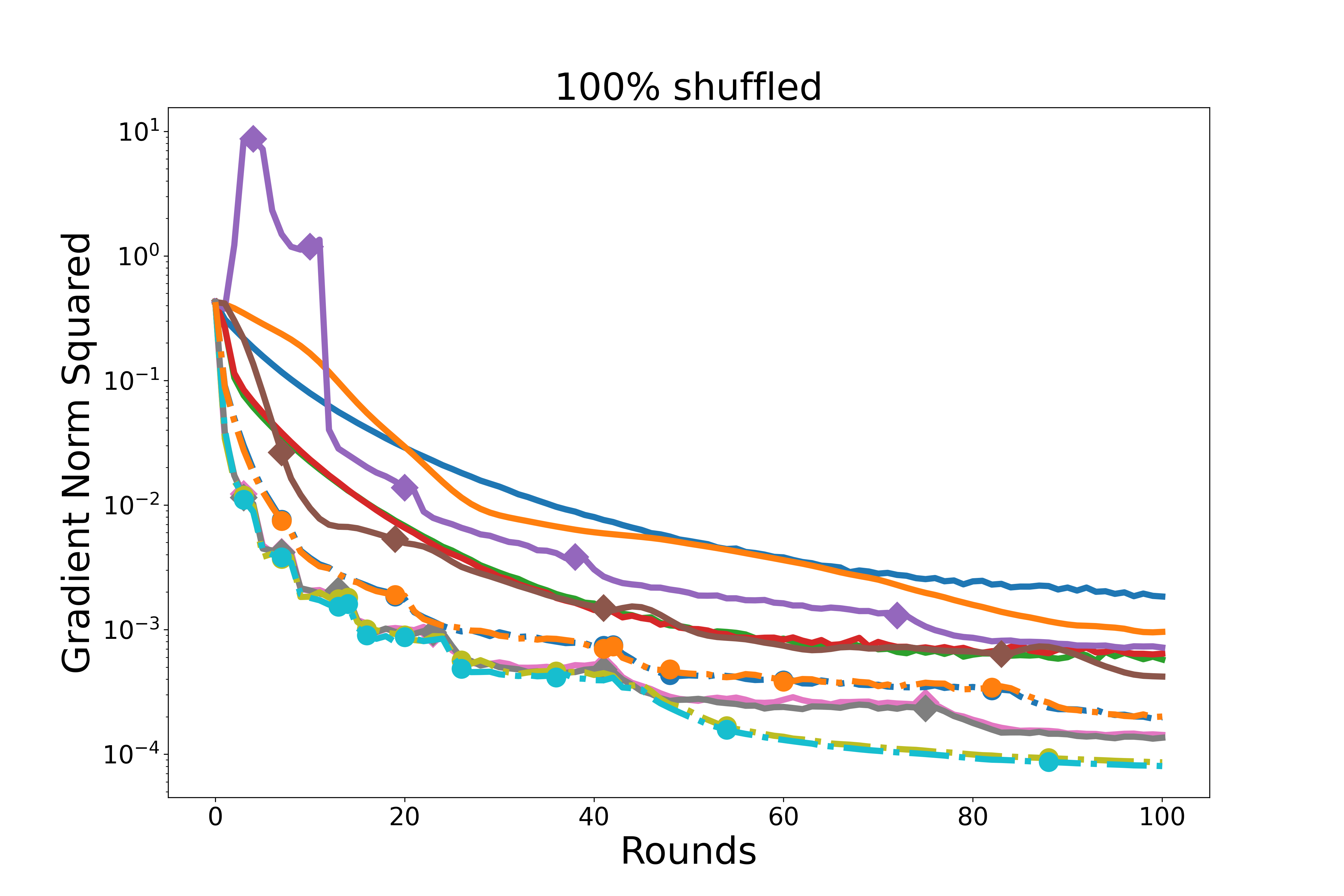}
	\end{minipage}
	~~
	\begin{minipage}[t]{0.1\linewidth}
		\centering
		\raisebox{0.5cm}[\dimexpr\height-2cm]{%
			\includegraphics[width=0.9\textwidth]{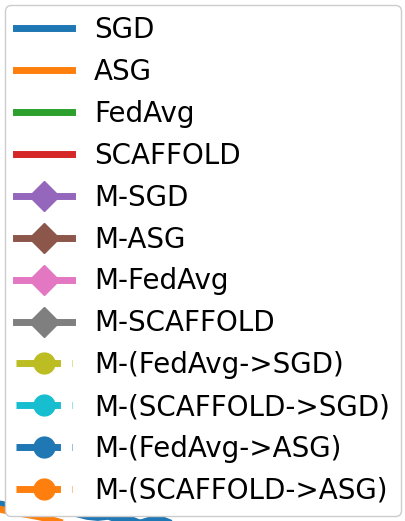}
		}
	\end{minipage}
	~~
	\caption{The same as \cref{fig:stoc}, except we include stepsize decay for baseline algorithms.  Chained algorithms still perform the best.}
	\label{fig:mplots}
\end{figure*}

In this section, we compare the performance of the algorithms against learning rate decayed \mbsgd, \lsgd, \asg, and SCAFFOLD.  We use the same setting as \cref{app:convex-details} for the decay process and the stepsize.  In this case, we still see that the chained methods outperform their non-chained counterparts.

\end{document}